\setlist[itemize]{leftmargin=*}
\setlist[enumerate]{leftmargin=*}
\newcommand{\revision}[1]{\textcolor{black}{#1}}
\let\classAND\AND
\let\AND\relax
\let\AND\classAND
\colorlet{my-red}{BrickRed!90!Sepia}
\colorlet{my-blue}{Aquamarine!30!Blue}
\patchcmd{\BR@backref}{\newblock}{\newblock[page~}{}{}
\patchcmd{\BR@backref}{\par}{]\par}{}{}
\crefname{equation}{}{}
\Crefname{equation}{}{}
\theoremstyle{plain}
\newtheorem{theorem}{Theorem}[section]
\newtheorem{proposition}[theorem]{Proposition}
\newtheorem{lemma}[theorem]{Lemma}
\theoremstyle{definition}
\theoremstyle{remark}
\DeclareMathOperator*{\E}{\mathbb{E}}
\def\shownotes{1}  %
\newcommand{\authnote}[2]{[#1: #2]}
\newcommand{\authnote}[2]{}
\newcommand\SLASH{\char`\\}
\newcommand{\ours}{DCM}
\newcommand{\Pid}{\mathcal{P}_\textrm{ID}}
\newcommand{\Pood}{\mathcal{P}_\textrm{OOD}}
\newcommand{\Dtr}{D_\textrm{tr}}
\newcommand{\Dft}{D_\textrm{ft}}
\newcommand{\Dval}{D_\textrm{val}}
\newcommand{\Du}{D_\textrm{u}}
\newcommand{\Dunc}{D_\textrm{unc}}
\newcommand{\Btr}{B_\textrm{tr}}
\newcommand{\Bval}{B_\textrm{val}}
\newcommand{\Bu}{B_\textrm{u}}
\newcommand{\MSP}{\textrm{MSP}}
\newcommand{\known}{known}
\newcommand{\unknown}{unknown}
\title{Conservative Prediction via \\ Data-Driven Confidence Minimization}
\author{\name Caroline Choi\thanks{Equal contribution.} \email cchoi1@cs.stanford.edu \\
      \addr Department of Computer Science\\
      Stanford University
\AND
\name Fahim Tajwar\footnotemark[1] \email ftajwar@cs.stanford.edu \\
      \addr Machine Learning Department \\
      Carnegie Mellon University
\AND
\name Yoonho Lee\footnotemark[1] \email yoonho@cs.stanford.edu \\
      \addr Department of Computer Science\\
      Stanford University
\AND
\name Huaxiu Yao \email huaxiu@cs.unc.edu \\
      \addr Department of Computer Science\\
      University of North Carolina at Chapel Hill
\AND
\name Ananya Kumar \email ananya@cs.stanford.edu \\
      \addr Department of Computer Science\\
      Stanford University
\AND
\name Chelsea Finn \email cbfinn@cs.stanford.edu \\
      \addr Department of Computer Science\\
      Stanford University
}
\begin{document}

\maketitle

\begin{abstract}
  In safety-critical applications of machine learning, it is often desirable for a model to be \textit{conservative}, abstaining from making predictions on ``\unknown{}'' inputs which are not well-represented in the training data.
  However, detecting \unknown{} examples is challenging, as it is impossible to anticipate all potential inputs at test time.
  To address this, prior work ~\citep{hendrycks2018deep} minimizes model confidence on an auxiliary outlier dataset carefully curated to be disjoint from the training distribution.
  We theoretically analyze the choice of auxiliary dataset for confidence minimization, revealing two actionable insights: (1) if the auxiliary set contains \unknown{} examples similar to those seen at test time, confidence minimization leads to provable detection of \unknown{} test examples, and (2) if the first condition is satisfied, it is unnecessary to filter out \known{} examples for out-of-distribution (OOD) detection.
  Motivated by these guidelines, we propose the Data-Driven Confidence Minimization (\ours{}) framework, which minimizes confidence on an \textit{uncertainty dataset}.
  We apply \ours{} to two problem settings in which conservative prediction is paramount -- selective classification and OOD detection -- and provide a realistic way to gather uncertainty data for each setting.
  In our experiments, \ours{} consistently outperforms existing selective classification approaches on 4 datasets when tested on unseen distributions and outperforms state-of-the-art OOD detection methods on 12 ID-OOD dataset pairs, reducing FPR (at TPR 95\%) by $6.3\%$ and $58.1\%$ on CIFAR-10 and CIFAR-100 compared to Outlier Exposure. %
\end{abstract}

\section{Introduction}

\begin{wrapfigure}{r}{0.41\linewidth}
  \centering
  \vspace{-4mm}
  \includegraphics[width=\linewidth]{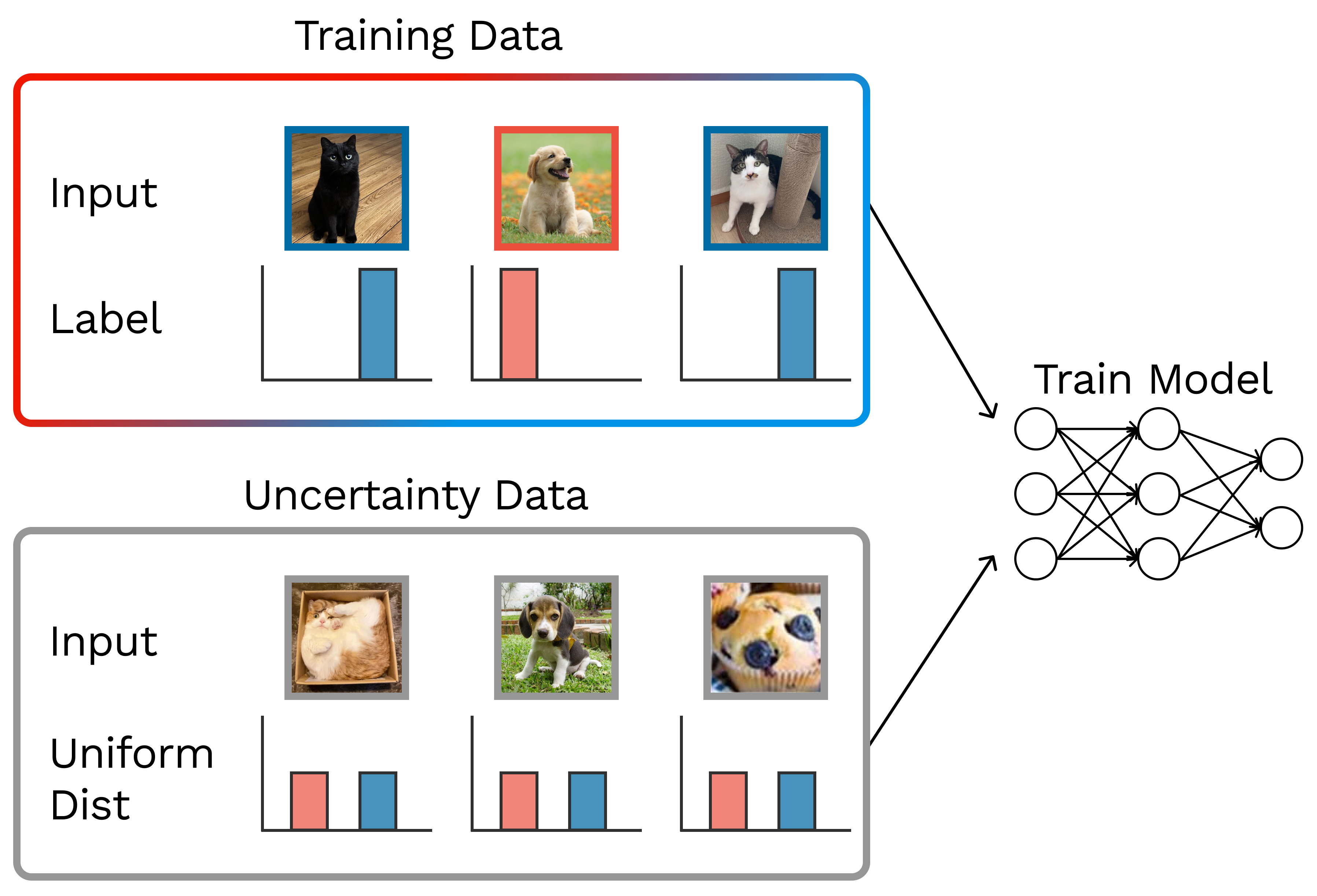} 
  \vspace{-4mm}
  \caption{
  \label{fig:overview}
  Data-driven confidence minimization (\ours{}) is a framework for training a model to make conservative predictions. 
  \ours{} incorporates a regularizer that minimizes confidence on an unlabeled mixture of known and unknown examples that are similar to those seen at test-time.
  }
  \vspace{-6mm}
\end{wrapfigure}
While deep neural networks have demonstrated remarkable performance on many tasks, they often fail unexpectedly~\citep{simonyan2014very,zhang2017understanding}.
In safety-critical domains such as healthcare, such errors may prevent the deployment of machine learning altogether.
For example, a tumor detection model that is trained on histopathological images from one hospital may perform poorly when deployed in other hospitals due to differences in data collection methods or patient population \citep{koh2021wilds}.
In these scenarios, it may be preferable to defer to a human expert.
\textit{Conservative} models---models that can refrain from making predictions when they are likely to make an error---may offer a solution.

Two fields of research aim to produce conservative models: selective classification ~\citep{liu2019deep,kamath2020selective,huang2020self} and out-of-distribution (OOD) detection ~\citep{hendrycks2016baseline,liang2017odin,lee2018mahalanobis,liu2020energy}.
In both problem settings, inputs can be seen as belonging to one of two high-level categories: \emph{\known{}} and \emph{\unknown{}} examples.
Known examples are inputs that are well-represented in the training distribution.
Unknown examples include inputs belonging to a new class not seen during training (OOD detection), and misclassified inputs insufficiently covered by the training distribution (selective classification).
Despite considerable research in these areas, the problem of learning a conservative model remains challenging.
As the test distribution can vary in a myriad of ways, it is impractical to anticipate the exact examples that will arise at test time.

A promising approach to OOD detection is Outlier Exposure~\citep{hendrycks2018deep}, which fine-tunes a pretrained model with a combined objective of standard cross-entropy on training examples and a regularizer that minimizes confidence on an auxiliary dataset, carefully curated to be distinct from the training distribution.
Unlabeled auxiliary data is often readily available, making it an effective way of exposing the model to the types of \unknown{} inputs it may encounter at test time. 

Contrary to Outlier Exposure, which minimizes confidence on a specific choice of auxiliary dataset, we aim to better understand a broader class of approaches that minimize confidence on an \emph{uncertainty dataset} and are applicable to both selective classification and OOD detection. 
Our theoretical analysis reveals two key guidelines for creating an effective uncertainty dataset.
First, it should contain \unknown{} examples that the model is likely to encounter and misclassify at test-time.
Second, in the setting of OOD detection, if the first criteria holds, then the uncertainty set can also contain \known{} examples, eliminating the need to filter out \known{} examples as required by \cite{hendrycks2018deep}.
In other words, it can be harmless to minimize confidence on \known{} examples, which the model should be confident about.
Our theory explains this counter-intuitive phenomenon: the effect of confidence minimization on \known{} examples in the uncertainty set is ``cancelled out'' by the cross entropy loss on training examples, while the confidence loss on \unknown{} examples in the uncertainty set is not. 
We show that using such an uncertainty set provably detects unknown inputs under mild assumptions.

Building on these insights, we present \textbf{Data-Driven Confidence Minimization (\ours{})} as a unified approach for conservative prediction in selective classification and OOD detection.
For each problem setting, we propose a realistic and effective way to construct the uncertainty dataset that follows the two guidelines above.
For selective classification, we use misclassified examples from a held-out validation set from the training distribution as the uncertainty dataset, which naturally reflects what the model is likely to misclassify at test time.
For OOD detection, we use an uncertainty dataset consisting of unlabeled examples from the test distribution, which includes both \known{} and \unknown{} examples.
While it's not always the case that unlabeled examples from the test distribution are available, there are a number of real world applications where this is the case, such as unannotated medical data from a new hospital~\citep{sagawa2021extending}.
We visualize the \ours{} framework in~\cref{fig:overview}.

We empirically verify our approach through experiments on several standard benchmarks for selective classification and OOD detection demonstrate the effectiveness of \ours{}.
In selective classification, \ours{} consistently outperforms $6$ representative approaches in conditions of distribution shift by $2.3\%$ across 4 distribution-shift datasets.
\ours{} also outperforms an ensemble of $5$ models on $3$ out of $4$ datasets in AUROC, despite the $5\times$ difference in computational cost.
In the OOD detection setting, among other methods, we provide a comparison with Outlier Exposure~\citep{hendrycks2018deep}, allowing us to test our choice of uncertainty dataset.
\ours{} consistently outperforms Outlier Exposure
on a benchmark of $8$ ID-OOD distribution pairs, reducing FPR (at TPR $95\%$) by $6.3\%$ and $58.1\%$ on CIFAR-10 and CIFAR-100, respectively. 
\ours{} also shows strong performance in challenging near-OOD detection settings, achieving $1.89\%$ and $2.94\%$ higher AUROC compared to the state-of-the-art.

\section{Problem Setup}
\label{sec:setup}
We consider two problem settings that test a model's ability to determine if its prediction is trustworthy: selective classification and out-of-distribution (OOD) detection.
In both settings, a model may encounter \emph{\known{}} or \emph{\unknown{}} examples at test time.
\emph{Known} examples are inputs that are well-represented in the training distribution; \emph{unknown} examples are not.

We denote the input and label spaces as $\mathcal{X}$ and $\mathcal{Y}$, respectively, and assume that the training dataset $\Dtr$ contains \known{} examples. 
In \textit{selective classification}, all inputs have a ground-truth label within $\mathcal{Y}$, but the model may make errors due to overfitting or insufficient coverage in the training dataset $\Dtr$.
In this setting, known examples are inputs that are correctly classified by the model and unknown examples are inputs which are misclassified.
In \textit{out-of-distribution detection}, the model may encounter inputs at test time that belong to a new class not in its training label space $\mathcal{Y}$.
In this setting, known examples are those from the training input space $\mathcal{X}$, and unknown examples are inputs from novel classes.
We first describe these problem settings in \cref{subsec:sc_problem} and \cref{subsec:ood_problem}. 
In \cref{sec:method}, we present two instantiations of \ours{} for selective classification and OOD detection.

\subsection{Selective Classification}\label{subsec:sc_problem}

Selective classification aims to produce a model that can abstain from making predictions on unknown examples at test time.
Such ``rejected'' inputs are typically ones that the model is most uncertain about. A model is first trained on $\Dtr$ and then tested on examples that have associated ground-truth labels in the training label space $\mathcal{Y}$.
Thus, while a perfect model trained on $\Dtr$ should correctly classify all test inputs, models often make errors on new examples due to over-fitting $\Dtr$.
As in prior work, we assume access to a labeled validation dataset $\Dval$ sampled from $\Pid$, which the model can use to calibrate its predictive confidence.
This validation set can be easily constructed by randomly partitioning a training dataset into training and validation splits.

Models are evaluated on their ability to (1) accurately classify the inputs they do make a prediction on (i.e., accuracy), while (2) rejecting as few inputs as possible (i.e., coverage).
\revision{The confidence threshold is chosen to achieve a desired accuracy or coverage based on the specific application's risk tolerance, which can be done using a held-out validation set. }
We evaluate these capabilities through metrics such as ECE, AUC, Acc@Cov, Cov@Acc. 
\cref{sec:experiments} describes these metrics and the datasets we use.

\subsection{Out-of-Distribution Detection}\label{subsec:ood_problem}

Out-of-distribution detection aims to distinguish between known and unknown examples at test time.
We denote the ID distribution with $\Pid$, and the OOD distribution with $\Pood$.
The test dataset is a mixture of known and unknown examples, sampled from a mixture of the ID and OOD distributions $\alpha_\text{test} \Pid + (1 - \alpha_{\text{test}}) \Pood$, where the mixture coefficient $\alpha_{\text{test}}$ is not known in advance.
Given a test input $x$, the model produces both a label prediction and a measure of confidence that should be higher for inputs that are known, or ID, than inputs that are unknown, or OOD. 
Due to differences in the data distributions $\Pid$ and $\Pood$, a model trained solely to minimize loss on $\Dtr$ may be overconfident on unknown inputs.
To address this challenge, we use an additional unlabeled dataset $\Du$ which includes both known and unknown examples. 
$\Du$ is sampled from a mixture of $\Pid$ and $\Pood$, where the mixture ratio $\alpha_{\text{u}}$ is unknown to the model and can differ from $\alpha_{\text{test}}$.
The unlabeled dataset partially informs the model about the directions of variation it may face at test time.

Models are evaluated on (1) their accuracy in classifying known examples, and (2) their capability to detect unknown examples.
These are measured by metrics such as FPR@TPR, AUROC, AUPR, and ID Accuracy.
\cref{sec:experiments} describes these metrics and the ID-OOD dataset pairs we use.

\section{Data-Driven Confidence Minimization} \label{sec:method}

We aim to produce a model that achieves high accuracy on the training data $\Dtr$, while having a predictive confidence that reflects the extent to which its prediction can be trusted.
The crux of \ours{} is to introduce a regularizer that minimizes confidence on an auxiliary dataset that is disjoint from the training dataset.
We refer to this auxiliary dataset as the \emph{uncertainty dataset}, since it is intended to at least partly consist of examples that the model should have low confidence on.

In \ours{}, we first pre-train a model $f: \mathcal{X} \rightarrow \mathbb{P}(\mathcal{Y})$ on the labeled training set using the standard cross-entropy loss, as in prior works \citep{hendrycks2018deep, liu2020energy}:
\begin{equation}
\mathcal{L}_\text{xent}(f, \Dtr) = \E_{(x,y) \sim \Dtr} \left[ -\log f(y; x) \right].
\end{equation}
A model trained solely with this loss can suffer from overconfidence on unknown examples.
We therefore continue to fine-tune this model on known examples, but simultaneously regularize to minimize the model's predictive confidence on an uncertainty dataset, which includes unknown examples.
Specifically, we minimize cross-entropy loss on a fine-tuning dataset of known examples that includes the original training data ($\Dtr \subseteq \Dft$).
Our additional regularizer minimizes confidence on the uncertainty dataset $\Dunc$:
\begin{equation}
\label{eq:conf_loss}
\mathcal{L}_\text{conf}(f, \Dunc) = \E_{x^\prime \sim \Dunc} \left[ -\log f(y_u; x^\prime) \right].
\end{equation}
Here, $y_u$ is a uniform target that assigns equal probability to all labels.
The confidence loss $\mathcal{L}_{\textrm{conf}}$ is equivalent to the KL divergence between the predicted probabilities and the uniform distribution $U$.
Our final objective is a weighted sum of the fine-tuning and confidence losses:
\begin{equation}
\label{eq:objective}
\mathcal{L}_\text{xent}(f, \Dft) + \lambda \mathcal{L}_\text{conf}(f, \Dunc),
\end{equation}
where $\lambda$ is a hyperparameter that controls the relative weight of the confidence loss term. 
We find that $\lambda = 0.5$ works well in practice and use this value in all experiments unless otherwise specified.
Further details, such as fine-tuning duration and the number of samples in $D_{\text{ft}}$ and $D_{\text{unc}}$, are described in \cref{app:dcm-variants}.
\revision{In our experiments, we find that using an uncertainty set that is around 10\% of the training set size is sufficient.}

The two instantiations of \ours{} for OOD detection and selective classification differ only in their construction of $\Dft$ and $\Dunc$, as we will describe in \cref{subsec:method_ood} and \cref{subsec:method_sc}.

\subsection{\ours{} for Selective Classification}
\label{subsec:method_sc}

We aim to produce a model that achieves high accuracy while having low confidence on inputs that it is likely to misclassify.
We expect the incorrect predictions of a model $f$ pretrained on $\Dtr$ to reflect its failure modes.
Recall from \cref{subsec:sc_problem} that we assume a held-out validation set $\Dval \sim \Pid$.
To better calibrate its predictive confidence, we compare our pre-trained model's predictions for inputs in $\Dval$ to their ground-truth labels, and obtain the set of correct and misclassified validation examples $\Dval^\circ, \Dval^\times$.
In this setting, the \unknown{} examples are the misclassified examples $\Dval^\times$, since they show where the model's learned decision boundary is incorrect.

We set the fine-tuning dataset to be the union of the training dataset and the correctly-classified validation examples ($\Dft = \Dtr \cup \Dval^\circ$), and use the misclassified validation examples as the uncertainty dataset ($\Dunc = \Dval^\times$).
By only minimizing confidence on the misclassified examples, we expect the model to have lower confidence on all examples similar to inputs which initially produced errors.
We outline our approach in \cref{alg:selective_classification}.

\subsection{\ours{} for Out-of-Distribution Detection}
\label{subsec:method_ood}
We aim to produce a model that has low confidence on \unknown{} inputs from the OOD distribution $\Pood$, while achieving high accuracy on known inputs from the ID distribution $\Pid$.
Recall from \cref{subsec:ood_problem} that our problem setting assumes access to an unlabeled dataset $\Du$, which includes both ID and OOD inputs: we use this unlabeled set as the uncertainty dataset for reducing confidence ($\Dunc = \Du$).
Intuitively, minimizing confidence on $\Du$ discourages the model from making overly confident predictions on the support of the uncertainty dataset.

We minimize confidence on all inputs in $\Du$ because it is not known a priori which inputs are ID versus OOD, or in our terminology, \known{} versus \unknown{}.
While we do not necessarily want to minimize confidence on \known{} inputs, confidence minimization is expected to have different effects on \known{} and \unknown{} inputs because of its interaction with the original cross-entropy loss.
On \known{} inputs, the effect of confidence minimization is ``cancelled out'' by the cross-entropy loss, which maximizes the log likelihood of the true label, thus increasing the predictive confidence for that input.
However, on \unknown{} inputs, the loss is solely confidence minimization, which forces the model to have low confidence on such inputs.
This allows \ours{} to differentiate between the ID and OOD data distributions based on predictive confidence.
We will formalize this intuition in~\cref{sec:analysis}.
In summary, in OOD detection, the fine-tuning dataset is the training dataset ($\Dft = \Dtr$), and the uncertainty dataset is the unlabeled dataset ($\Dunc = \Du$).
We outline our approach in \cref{alg:ood_detection}.

\begin{figure}[t!]
\vspace{-6mm}
\centering
\begin{minipage}{0.52\textwidth}
\input{sections/alg_sc.tex}
\end{minipage}
\hfill
\begin{minipage}{0.47\textwidth}
\input{sections/alg_ood.tex}
\end{minipage}
\end{figure}

\section{Analysis}
\label{sec:analysis}
We now theoretically analyze the effect of the \ours{} objective on \known{} and \unknown{} inputs.
We first show that for all test examples, the prediction confidence of \ours{} is a lower bound on the true confidence (\cref{prop:simplified_minimum}).
Using this property, we then demonstrate that \ours{} can provably detect \unknown{} examples similar to those in the uncertainty set with an appropriate threshold on predicted confidence (\cref{prop:simplified_main_result}).
Detailed statements and proofs can be found in~\cref{app:theory}.

We denote the true label distribution of input $x$ as $p_D(x)$; this distribution need not be a point mass on a single label.
We further denote the maximum softmax probability of any distribution $p$ as $\MSP(p) \triangleq \max_i p_i$, and denote by $f_\lambda(x)$ the predictive distribution of the model that minimizes the expectation of our objective~\cref{eq:objective} with respect to the data distribution $y \sim p_D(x)$ for input $x$.
Intuitively, the confidence minimization term in our objective function~\cref{eq:objective} forces the model to output low-confidence predictions on all datapoints, resulting in a more conservative model compared to one without this term.
We formalize this intuition in the following proposition which relates the maximum softmax probabilities of $f_\lambda$ and $p_D$.

\begin{proposition}[Lower bound on true confidence] 
\label{prop:simplified_minimum}
For any input $x$ in $D_u$ or $D_{tr}$, the optimal predictive distribution $f_\lambda$ satisfies $\MSP(f_\lambda) \leq \MSP(p_D)$, with equality if and only if $\lambda=0$.
\end{proposition}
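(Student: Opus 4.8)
The plan is to exploit that $f$ is an arbitrary map $\mathcal{X}\to\mathbb{P}(\mathcal{Y})$, so minimizing the population version of the objective~\cref{eq:objective} decouples into a separate minimization over $\mathbb{P}(\mathcal{Y})$ at each input $x$; I would then solve this pointwise problem in closed form. Fix $x$, set $K=|\mathcal{Y}|$, let $q\in\mathbb{P}(\mathcal{Y})$ denote the candidate prediction $f(x)$, and write $U$ for the uniform distribution on $\mathcal{Y}$. Up to a $q$-independent constant, the contribution of $x$ to the expected objective is
\[
g_x(q) \;=\; -\,a_x \sum_i p_D(x)_i \log q_i \;-\; \frac{\lambda b_x}{K}\sum_i \log q_i,
\]
where $a_x\ge 0$ is the density of $\Dft$ at $x$ (so $a_x>0$ on $\mathrm{supp}(\Pid)$) and $b_x\ge 0$ is the density of $\Du$ at $x$ (so $b_x>0$ for $x\in\Du$). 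Collecting the two sums, $g_x(q)$ equals $-(a_x+\lambda b_x)\sum_i (r_x)_i \log q_i$ plus the same constant, where $r_x = \frac{a_x}{a_x+\lambda b_x}\,p_D(x) + \frac{\lambda b_x}{a_x+\lambda b_x}\,U$. By Gibbs' inequality, $q\mapsto -\sum_i (r_x)_i\log q_i$ is minimized over $\mathbb{P}(\mathcal{Y})$ uniquely at $q=r_x$, so whenever $a_x+\lambda b_x>0$,
\[
f_\lambda(x) \;=\; r_x \;=\; (1-t_x)\,p_D(x) + t_x\,U, \qquad t_x \;=\; \frac{\lambda b_x}{a_x+\lambda b_x}\in[0,1].
\]
Thus the optimal prediction at $x$ is always a convex combination of the true label distribution and the uniform distribution.

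Given this closed form, the bound on $\MSP$ follows quickly. The coordinatewise map $p\mapsto (1-t_x)p + t_x U$ adds the common constant $t_x/K$ to every entry of $(1-t_x)p_D(x)$ and hence preserves the order of coordinates, so $f_\lambda(x)$ and $p_D(x)$ share an argmax and
\[
\MSP(f_\lambda(x)) \;=\; (1-t_x)\,\MSP(p_D(x)) + \frac{t_x}{K} \;\le\; (1-t_x)\,\MSP(p_D(x)) + t_x\,\MSP(p_D(x)) \;=\; \MSP(p_D(x)),
\]
where the inequality uses $\MSP(p_D(x)) = \max_i p_D(x)_i \ge \frac1K\sum_i p_D(x)_i = 1/K$. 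This is the claimed lower bound. The single inequality above is tight exactly when $t_x=0$ or $\MSP(p_D(x))=1/K$ (equivalently $p_D(x)=U$); excluding the degenerate case $p_D(x)=U$, equality holds iff $t_x=0$, i.e.\ $\lambda b_x=0$, which for $x\in\Du$ (where $b_x>0$) is equivalent to $\lambda=0$.

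The delicate part of a careful write-up is the equality direction and the precise hypotheses, not the inequality itself. One has to track which of $a_x,b_x$ can vanish: for $x$ in $\mathrm{supp}(\Pid)$ but outside $\Du$ we have $b_x=0$, so $f_\lambda(x)=p_D(x)$ and $\MSP$-equality holds for every $\lambda$; thus the ``iff $\lambda=0$'' conclusion genuinely needs $b_x>0$. This is consistent with the statement because $\Du$ is a mixture placing positive weight on $\Pid$, so $\mathrm{supp}(\Du)\supseteq\mathrm{supp}(\Pid)$ and ``$x\in\Du$ or $x\in\mathrm{supp}(\Pid)$'' reduces to ``$x\in\mathrm{supp}(\Du)$'', where $b_x>0$; together with the mild non-degeneracy assumption $p_D(x)\ne U$ this yields the stated ``if and only if''. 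I would spell out these conditions explicitly in the detailed appendix version, but the closed form $f_\lambda(x)=(1-t_x)p_D(x)+t_x U$ combined with ``max $\ge$ average'' is the entire argument.
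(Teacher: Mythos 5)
Your proof is correct and follows essentially the same route as the paper's: the population objective decouples pointwise, the minimizer at each $x$ is a convex combination of $p_D(x)$ and the uniform distribution (the paper obtains $p_\lambda = \frac{p + \lambda \mathbf{1}/C}{1+\lambda}$ by setting the logit gradient to zero, while you reach the same form via Gibbs' inequality with an input-dependent weight $t_x$), and the $\MSP$ bound then follows because $\max_i p_i \ge 1/C$. Your handling of the equality case --- excluding the degenerate $p_D(x)=U$ and observing that points with $b_x=0$ give equality for every $\lambda$ --- is in fact more careful than the paper's one-line justification, which glosses over these degeneracies.
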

We note that this proposition only assumes a sufficiently expressive model class, which large neural networks often are.

Now we restrict ourselves to using an unlabeled mixture of known and unknown examples, $\Du$, as the uncertainty set.
Beyond serving as a lower bound on the true confidence, the optimum distribution $p_\lambda$ shows how the model, after being fine-tuned to minimize confidence on the unlabeled dataset $\Du$, behaves differently for \known{} and \unknown{} data despite $\Du$ containing both.
We denote the subset of \known{} examples in $\Du$ as $D^{test}_{k}$, the \unknown{} subset as $D^{test}_{unk}$, and the $\delta$-neighborhoods of these two sets as $D^\delta_{k}, D^\delta_{unk}$; we give a precise definition in \cref{app:theory}.
For ID inputs, the optimal predictive distribution $p_\lambda$ is determined by the weighted sum of the cross-entropy loss and the confidence loss, resulting in a mixture between the true label distribution $p$ and the uniform distribution $\mathcal{U}$, with mixture weight $\lambda$.
On the other hand, for \unknown{} inputs, the confidence loss is the only loss term, thus the optimal predictive distribution $p_\lambda$ is the uniform distribution $\mathcal{U}$.
This distinct behavior allows for the detection of \unknown{} inputs by thresholding the confidence of the model's predictions, as formalized in the following proposition.

\begin{proposition}[Low loss implies separation] \label{prop:simplified_main_result}
Assume $D^\delta_{k}$ and $D^\delta_{unk}$ are disjoint, and that each input $x$ has only one ground-truth label, i.e., no label noise.
Denote the lowest achievable loss for the objective \ref{eq:objective} with $\lambda > 0$ as $\mathcal{L}_0$.
Under a mild smoothness assumption on the learned function $f_\theta$, there exists $\epsilon, \delta > 0$ such that $\mathcal{L}(\theta) - \mathcal{L}_0 < \epsilon$ implies the following relationship between the max probabilities:
\begin{align}
\inf_{x \in D^\delta_{k}} \MSP(f_{\theta}(x))
> \sup_{x \in D^\delta_{unk}} \MSP(f_{\theta}(x)).
\end{align}
\end{proposition}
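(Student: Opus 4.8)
The plan is to first pin down the pointwise minimizer $f_\lambda$, read off the \emph{clean} separation margin it induces between ID and OOD inputs, and then show that an $\epsilon$-suboptimal $f_\theta$ retains this margin up to an error that we can drive to zero by shrinking $\epsilon$ and $\delta$. \textbf{Step 1 (characterize $f_\lambda$ and its margin).} As in the proof of \cref{prop:simplified_minimum}, minimizing the pointwise objective $\E_{y\sim p_D(x)}[-\log f(y;x)] + \lambda\,\mathcal{L}_\textrm{conf}(f,\{x\})$ over the simplex gives, for an ID input $x$ (which receives both loss terms, since $\Dtr$ and the ID portion of $\Du$ share the support of $\Pid$), $f_\lambda(x) = \frac{1}{1+\lambda}p_D(x) + \frac{\lambda}{1+\lambda}\,\mathcal{U}$, and for an OOD input $x$ (which receives only the confidence term) $f_\lambda(x) = \mathcal{U}$. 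With no label noise, $\MSP(p_D(x)) = 1$, so $\MSP(f_\lambda(x)) = \frac{1 + \lambda/|\mathcal{Y}|}{1+\lambda}$ on ID inputs, whereas $\MSP(f_\lambda(x)) = 1/|\mathcal{Y}|$ on OOD inputs. Since $|\mathcal{Y}| \geq 2$, the gap $\gamma \triangleq \frac{1 + \lambda/|\mathcal{Y}|}{1+\lambda} - \frac{1}{|\mathcal{Y}|} > 0$; this is the margin we must approximately recover.

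\textbf{Step 2 (excess loss controls pointwise divergence).} Because cross-entropy is a Bregman loss, the excess objective splits into nonnegative pointwise pieces: writing $\mathcal{L}_0$ as the pointwise-optimal value (valid under the same expressiveness assumption used for \cref{prop:simplified_minimum}),
\[
\mathcal{L}(\theta) - \mathcal{L}_0 \;=\; (1+\lambda)\,\E_{x\sim\mu_\textrm{in}}\big[\mathrm{KL}(f_\lambda(x)\,\|\,f_\theta(x))\big] \;+\; \lambda\,\E_{x\sim\mu_\textrm{out}}\big[\mathrm{KL}(\mathcal{U}\,\|\,f_\theta(x))\big] \;+\; (\text{nonneg.\ terms from }\Dtr),
\]
where $\mu_\textrm{in},\mu_\textrm{out}$ are the empirical distributions on $D^{test}_{in}$ and $D^{test}_{out}$. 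Since these sets are finite with strictly positive minimum mass $w_\textrm{min}$, the bound $\mathcal{L}(\theta)-\mathcal{L}_0 < \epsilon$ forces $\mathrm{KL}(f_\lambda(x_0)\,\|\,f_\theta(x_0)) < \epsilon / (w_\textrm{min}\min(1,\lambda))$ at \emph{every} $x_0 \in D^{test}_{in}\cup D^{test}_{out}$. By Pinsker's inequality and the fact that $p \mapsto \MSP(p)$ is $1$-Lipschitz in $\|\cdot\|_1$, this yields $|\MSP(f_\theta(x_0)) - \MSP(f_\lambda(x_0))| \leq \rho(\epsilon) \triangleq \sqrt{2\epsilon/(w_\textrm{min}\min(1,\lambda))}$ at every data point, with $\rho(\epsilon)\to 0$ as $\epsilon\to 0$.

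\textbf{Step 3 (extend to neighborhoods and conclude).} This is where the smoothness assumption enters: if $x\mapsto f_\theta(x)$ is $L$-Lipschitz (e.g.\ Lipschitz logits composed with the softmax), then any $x\in D^\delta_{in}$ has some $x_0\in D^{test}_{in}$ with $\|x-x_0\|<\delta$, so $|\MSP(f_\theta(x)) - \MSP(f_\theta(x_0))| \leq L\delta$, and similarly on the OOD side. Combining with Step 2,
\[
\inf_{x\in D^\delta_{in}}\MSP(f_\theta(x)) \;\geq\; \frac{1 + \lambda/|\mathcal{Y}|}{1+\lambda} - \rho(\epsilon) - L\delta
\quad\text{and}\quad
\sup_{x\in D^\delta_{out}}\MSP(f_\theta(x)) \;\leq\; \frac{1}{|\mathcal{Y}|} + \rho(\epsilon) + L\delta.
\]
Choosing $\epsilon,\delta$ small enough that $2\rho(\epsilon) + 2L\delta < \gamma$ (and $\delta$ small enough for $D^\delta_{in}, D^\delta_{out}$ to be disjoint, as assumed) gives the claimed strict inequality.

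The main obstacle is the passage from an \emph{average} loss guarantee to a \emph{uniform} conclusion over the neighborhoods: this is precisely where one needs (i) $\Du$ finite, so control of the averaged confidence/cross-entropy loss transfers to each individual point, and (ii) $f_\theta$ smooth, so pointwise control at the data points transfers to their $\delta$-neighborhoods. A secondary delicate point is justifying that ID inputs in $\Du$ effectively ``see'' the cross-entropy term at all — this relies on $\Dtr$ and the ID portion of $\Du$ sharing the support of $\Pid$ (or on $D^\delta_{in}$ meeting the training support); without it the confidence loss alone would collapse the ID predictions toward uniform as well, destroying the margin.
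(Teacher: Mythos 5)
Your proof is correct and takes essentially the same route as the paper's: characterize the pointwise optimum ($p_\lambda$ on ID inputs, uniform on OOD inputs), decompose the excess loss into per-point KL terms, apply Pinsker's inequality together with the $1$-Lipschitzness of $\MSP$ in the probability simplex to obtain a confidence margin at the data points, and then use the Lipschitz (smoothness) assumption on $f_\theta$ to propagate that margin to the $\delta$-neighborhoods $D^\delta_{in}, D^\delta_{out}$. Your constant bookkeeping (the minimum mass $w_{\min}$ and the $\min(1,\lambda)$ factor) is if anything slightly more careful than the paper's, and your closing caveat about ID points in $\Du$ needing to coincide with (or share support with) $\Dtr$ matches the assumption made explicit in the paper's appendix version of the result.
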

The detailed smoothness assumption, along with all proofs, can be found in~\cref{app:theory}.
This proposition implies that by minimizing the \ours{} objective~\cref{eq:objective}, we can provably separate out \known{} and \unknown{} data with an appropriate threshold on the maximum softmax probability.
We note that \ours{} optimizes a lower bound on confidence, rather than trying to be perfectly calibrated: this easier requirement is arguably better suited for problem settings in which the model abstains from making predictions such as OOD detection and selective classification.
\revision{When fine-tuning the last layers of a pre-trained network, Proposition 4.2 directly applies to feature-space distances, which are known to reflect semantic relations \cite{upchurch2017deep,wang2019implicit}.}

\textbf{Practical implications.} 
Our theory suggests the following guidelines. 
First, the uncertainty dataset should include some unknown examples.
Second, if this is true, it is unnecessary to filter out known examples for OOD detection.
Under these conditions, \ours{} provably detects unknown examples.

\section{Related Work}

\textbf{Selective classification.}
Prior works have studied selective classification for many model classes including SVM, boosting, and nearest neighbors~\citep{4082319,fumera2002support,cortes2016boosting}.
Because deep neural networks generalize well but are often overconfident~\citep{guo2017calibration,nixon2019measuring}, mitigating such overconfidence using selective classification while preserving its generalization properties is an important capability~\citep{geifman2017selective,corbiere2019addressing,feng2019selective,kamath2020selective,fisch2022calibrated}.
Existing methods for learning conservative neural networks rely on additional assumptions such as pseudo-labeling~\citep{pmlr-v119-chan20a}, multiple distinct validation sets~\citep{gangrade2021selective}, or adversarial OOD examples~\citep{setlur2022adversarial}.
While minimizing the confidence of a set that includes OOD inputs has been shown to result in a more conservative model in the offline reinforcement learning setting~\citep{kumar2020conservative}, this approach has not been validated in a supervised learning setting.
\ours{} only requires a small validation set, and our experiments in~\cref{sec:experiments} show that it performs competitively with state-of-the-art methods for selective classification, especially in the more challenging setting of distribution shift.

\textbf{Out-of-distribution detection.}
Many existing methods for OOD detection use a criterion based on the activations or predictions of a model trained on ID data~\citep{bendale2016towards,hendrycks2016baseline,liang2017enhancing,lee2018mahalanobis,wei2022mitigating,sun2022out}.
However, the performance of these methods are often inconsistent across different ID-OOD dataset pairs, suggesting that the OOD detection problem is ill-defined ~\citep{tajwar2021no}.
Indeed, a separate line of work incorporates auxiliary data into the OOD detection setting; this dataset may consist of natural~\citep{hendrycks2018deep,liu2020energy,mohseni2020self,ctifrea2020novelty,chen2021atom,pmlr-v162-katz-samuels22a,narasimhan2023plugin} or synthetic~\citep{lee2017training,du2022vos} data.
Similar to our method, \citet{hendrycks2018deep} minimize confidence on an auxiliary dataset, but do so on a single auxiliary dataset of known outliers, regardless of the ID and OOD distributions, that is over $10,000$ times the size of those used by \ours{}.
Our method leverages an uncertainty dataset which contains a mix of ID and OOD data from the test distribution, as in~\citet{ctifrea2020novelty}.
However, their method requires an ensemble of models to measure disagreement, while \ours{} uses a single model.
We additionally present theoretical results showing the benefit of minimizing confidence on an uncertainty dataset that includes inputs from the OOD distribution.
Our experiments confirm our theory, showing that this transductive setting results in substantial performance gains, even when the unlabeled set is a mixture of ID and OOD data.
Our data assumptions are also shared by WOODS~\citep{pmlr-v162-katz-samuels22a}, which leverages an auxiliary dataset containing ID and OOD examples. 
However, WOODS solves a constrained optimization problem to maximize OOD detection rate while keeping ID classification and OOD error rate of ID examples low, which requires several additional hyperparameters compared to \ours{}, which uses confidence minimization.

\textbf{Data augmentation.} 
Data augmentation is crucial for enhancing the robustness of models, particularly in uncertainty quantification and OOD detection. 
By generating diverse examples, augmentation exposes the model to a wider range of variations, improving generalization and OOD detection. 
\citet{hafner2020noise} showed that synthetic augmentation significantly boosts OOD performance. 
Similarly, \citet{hendrycks2018deep} utilized various data augmentation techniques to enhance model robustness against OOD inputs.
Integrating such strategies with our framework could enhance its effectiveness by providing a richer set of uncertainty data, improving the model's ability to manage uncertainty.

\revision{Recent works highlight the need for an integrated approach to handle both ID misclassifications and OOD samples effectively. \citet{jaeger2022call} show that existing methods often fail to address all relevant error types simultaneously. 
In this vein, \citet{xia2022augmenting} propose a method to distinguish between correctly and incorrectly classified ID samples using softmax-based confidence scores, while also detecting OOD samples.
Future work could extend DCM to this problem setting.}

\section{Experiments} \label{sec:experiments}

We evaluate the effectiveness of \ours{} for selective classification and OOD detection using several image classification datasets.
Our goal is to empirically answer the following questions:
(1) How does the data-driven confidence minimization loss affect the predictive confidence of the final model, and what role does the distribution of the uncertainty data play?
(2) Does confidence minimization on the uncertainty dataset result in better calibration?
(3) How does~\ours{} compare to state-of-the-art methods for OOD detection and selective classification?

\textbf{Metrics.}
Recall that the selective classification problem involves a binary classification task to predict whether the model will misclassify a given example, in addition to the main classification task. 
Similarly, the OOD detection problem involves two classification tasks: (1) a binary classification task to predict whether each example is ID or OOD, and (2) the main classification task of predicting labels of images.
To ensure a comprehensive evaluation, we consider multiple metrics, each measuring the two key aspects of performance.
We group the metrics below by their relevance to the selective classification (SC) or OOD detection (D) setting. These metrics are defined in detail in \cref{appendix:metrics}:
\begin{enumerate}[itemsep=0pt, topsep=0pt]
\item ECE (SC): expected difference between confidence and accuracy, i.e., $\E[|p(\hat{y} = y \mid \hat{p} = p) - p|]$.
\item AUC (SC): area under the curve of selective classification accuracy vs coverage. 
\item Acc@Cov (SC): average accuracy on the $\text{Cov}\%$ datapoints with highest confidence.
\item Cov@Acc (SC): largest fraction of data for which selective accuracy is above $\text{Acc}$.
\item FPR@TPR (D): probability that an ID input is misclassified as OOD, given true positive rate TPR.
\item AUROC (D): area under the receiver operator curve of the binary ID/OOD classification task.
\item AUPR (D): area under the precision-recall curve of the binary ID/OOD classification task. 
\end{enumerate}

\subsection{Selective Classification}

\begin{table*}[t]
\centering
\resizebox{\textwidth}{!}{
\begin{tabular}{llcccccccccccc}
\toprule
& & \multicolumn{3}{c}{CIFAR-10}
& \multicolumn{3}{c}{Waterbirds}
& \multicolumn{3}{c}{Camelyon17}
& \multicolumn{3}{c}{FMoW} \\
\cmidrule(lr){3-5} \cmidrule(lr){6-8} \cmidrule(lr){9-11} \cmidrule(lr){12-14}
Setting & Method
& \revision{Acc ($\uparrow$)} & Acc@90 ($\uparrow$) & AUC ($\uparrow$) 
& \revision{Acc ($\uparrow$)} & Acc@90 ($\uparrow$) & AUC ($\uparrow$) 
& \revision{Acc ($\uparrow$)} & Acc@90 ($\uparrow$) & AUC ($\uparrow$) 
& \revision{Acc ($\uparrow$)} & Acc@90 ($\uparrow$) & AUC ($\uparrow$) \\
\midrule
\multirow{8}{*}{ID} 
& Ensemble ($\times 5$) & \revision{\textbf{96.1 (0.1)*}} & \revision{98.9 (0.1)*} & \revision{99.5 (0.1)*} & \revision{\textbf{97.0 (0.0)*}} & 98.9 (0.0)* & \textbf{98.7 (0.0)*} & \revision{94.8 (6.4)*} & 96.8 (5.9)* & 99.1 (2.7)* & \revision{\textbf{62.5 (0.1)*}} & \textbf{68.4 (0.1)*} & \textbf{85.5 (0.0)*} \\
& MSP & \revision{95.2 (0.1)} & 98.4 (0.1) & 99.3 (0.1) & \revision{96.8 (0.0)} & 99.1 (0.0) & \revision{ \textbf{98.7 (0.0)}} & \revision{81.5 (7.8)} & 92.0 (5.9) & 96.9 (2.2) & \revision{58.4 (1.5)} & 62.6 (0.1) & 81.3 (0.4) \\
& MaxLogit & \revision{95.1 (0.1)} & 97.9 (0.1) & 98.9 (0.1) & \revision{96.8 (0.0)} & 97.2 (0.0) & \revision{98.6 (0.0)} & \revision{81.5 (7.8)} & 92.2 (5.8) & 97.0 (2.2) & \revision{58.4 (0.1)} & 62.7 (0.2) & 80.1 (0.2) \\
& Binary Classifier & \revision{95.2 (0.1)} & 98.4 (0.1) & 99.3 (0.1) & \revision{96.0 (0.0)} & 99.1 (0.0) & \revision{\textbf{98.7 (0.0)}} & \revision{89.4 (6.5)} & 92.3 (5.9) & 97.0 (4.5) & \revision{58.4 (0.2)} & 64.3 (0.1) & 82.3 (0.3) \\
& Fine-Tuning & \revision{\textbf{96.2 (0.1)}} & \textbf{99.1 (0.2)} & \textbf{99.6 (0.1)} & \revision{96.9 (0.0)} & \textbf{99.4 (0.0)} & \revision{\textbf{98.7 (0.0)}} & \revision{\textbf{98.3 (0.2)}} & \textbf{99.7 (0.0)} & \textbf{99.8 (0.0)} & \revision{59.3 (2.7)} & 64.0 (1.2) & 82.8 (0.9) \\
& Deep Gamblers & \revision{94.5 (0.0)} & 97.4 (0.1) & 99.0 (0.0) & \revision{\textbf{97.0 (0.0)}} & 98.8 (0.1) & \revision{98.5 (0.0)} & \revision{97.5 (0.4)} & 99.6 (0.1) & \textbf{99.8 (0.0)} & \revision{58.5 (0.4)} & 62.4 (0.9) & 75.8 (0.2) \\
& Self-Adaptive Training & \revision{94.7 (0.0)} & 97.6 (0.1) & 99.2 (0.0) & \revision{96.8 (0.0)} & 99.1 (0.1) & \revision{98.6 (0.0)} & \revision{97.7 (0.0)} & \textbf{99.7 (0.0)} & \textbf{99.8 (0.0)} & \revision{58.3 (0.5)} & 63.0 (0.5) & 81.1 (0.3) \\
& DCM (ours) & \revision{94.7 (0.2)} & 98.0 (0.2) & 99.2 (0.0) & \revision{96.8 (0.0)} & 99.2 (0.0) & \revision{\textbf{98.7 (0.0)}} & \revision{80.6 (1.0)} & 98.6 (0.2) & 99.5 (0.1) & \revision{59.3 (1.2)} & 64.2 (1.2) & 82.9 (1.1) \\
\midrule
\multirow{8}{*}{\shortstack[l]{ID\\+\\OOD}} 
& Ensemble ($\times 5$) & \revision{76.4 (0.1)*} & \revision{81.2 (0.1)*} & \revision{93.4 (0.1)*} & -- & -- & -- & \revision{75.6 (4.6)*} & 78.1 (4.8)* & 85.8 (3.7)* & \revision{\textbf{56.5 (0.0)*}} & \textbf{61.2 (0.0)*} & \textbf{81.7 (0.0)*} \\
& MSP & \revision{75.8 (0.1)} & 80.3 (0.1) & 92.6 (0.1) & -- & -- & -- & \revision{66.2 (5.1)} & 74.1 (5.1) & 72.2 (4.8) & \revision{51.5 (0.1)} & 57.9 (0.1) & 77.1 (0.5) \\
& MaxLogit & \revision{75.7 (0.1)} & 80.4 (0.0) & 91.7 (0.0) & -- & -- & -- & \revision{66.2 (5.1)} & 74.2 (5.1) & 85.8 (3.7) & \revision{51.5 (0.1)} & 57.8 (0.1) & 75.8 (0.1) \\
& Binary Classifier & \revision{75.4 (0.1)} & 80.3 (0.1) & 92.5 (0.1) & -- & -- & -- & \revision{86.2 (3.3)} & 74.4 (5.0) & 72.0 (4.7) & \revision{53.8 (0.1)} & 59.3 (0.0) & 78.0 (0.4) \\
& Fine-Tuning & \revision{75.2 (0.1)} & 81.3 (0.1) & 93.4 (0.1) & -- & -- & -- & \revision{76.7 (3.4)} & 79.8 (3.5) & 77.6 (3.3) & \revision{54.2 (2.3)} & 58.6 (1.2) & 78.6 (0.8) \\
& Deep Gamblers & \revision{76.0 (0.1)} & 81.0 (0.0) & 93.0 (0.1) & -- & -- & -- & \revision{74.0 (5.8)} & 77.2 (6.5) & 88.1 (4.1) & \revision{54.0 (0.3)} & 57.5 (0.3) & 71.6 (0.2) \\
& Self-Adaptive Training & \revision{76.2 (0.1)} & 81.1 (0.0) & 93.3 (0.0) & -- & -- & -- & \revision{72.1 (1.1)} & 74.8 (1.1) & 86.3 (0.4) & \revision{53.7 (0.4)} & 57.8 (0.4) & 76.7 (0.2) \\
& DCM (ours) & \revision{\textbf{77.0 (0.1)}} & \textbf{82.0 (0.1)} & \textbf{93.6 (0.1)} & -- & -- & -- & \revision{\textbf{80.6 (1.0)}} & \textbf{85.5 (1.0)} & \textbf{93.5 (0.6)} & \revision{54.6 (1.7)} & 58.8 (1.3) & 78.9 (1.1) \\
\midrule
\multirow{8}{*}{OOD} 
& Ensemble ($\times 5$) & \revision{57.2 (0.1)*} & \revision{61.8 (0.1)*} & \revision{75.3 (0.1)*} & \revision{85.0 (0.0)*} & 88.4 (0.0)* & 94.4 (0.0)* & \revision{71.8 (4.8)*} & 74.0 (5.2)* & 81.4 (4.4)* & \revision{\textbf{55.0 (0.1)*}} & \textbf{58.6 (0.1)*} & \textbf{79.5 (0.0)*} \\
& MSP & \revision{56.4 (0.1)} & 59.6 (0.2) & 70.1 (0.1) & \revision{84.3 (0.0)} & 88.2 (0.0) & 94.4 (0.0) & \revision{63.1 (4.8)} & 70.4 (4.8) & 82.2 (3.9) & \revision{50.9 (2.7)} & 55.2 (0.2) & 74.5 (0.6) \\
& MaxLogit & \revision{56.4 (0.1)} & 59.4 (0.1) & 71.7 (0.1) & \revision{84.3 (0.0)} & 87.9 (0.0) & 94.2 (0.0) & \revision{63.1 (4.8)} & 70.4 (4.8) & 82.1 (3.9) & \revision{50.7 (0.1)} & 55.2 (0.0) & 73.3 (0.2) \\
& Binary Classifier & \revision{56.2 (0.2)} & 59.5 (0.2) & 72.8 (0.2) & \revision{84.9 (0.2)} & 87.5 (0.3) & 94.0 (0.2) & \revision{69.0 (5.2)} & 70.5 (4.4) & 82.4 (3.9) & \revision{51.7 (0.0)} & 56.8 (0.1) & 75.6 (0.5) \\
& Fine-Tuning & \revision{57.6 (0.1)} & 61.9 (0.2) & 75.4 (0.1) & \revision{85.9 (0.5)} & 89.0 (0.5) & 94.7 (0.2) & \revision{72.8 (4.2)} & 75.4 (4.2) & 84.2 (3.8) & \revision{51.8 (1.1)} & 56.0 (0.9) & 76.2 (0.8) \\
& Deep Gamblers & \revision{56.8 (0.1)} & 61.4 (0.1) & 74.3 (0.2) & \revision{85.1 (0.1)} & 88.6 (0.2) & 94.8 (0.1) & \revision{69.4 (7.5)} & 72.1 (7.9) & 84.8 (5.2) & \revision{51.9 (0.1)} & 54.9 (0.2) & 69.2 (0.3) \\
& Self-Adaptive Training & \revision{57.4 (0.1)} & 61.4 (0.1) & 75.3 (0.1) & \revision{86.0 (0.0)} & 88.9 (0.1) & \textbf{95.1 (0.0)} & \revision{70.2 (0.7)} & 71.9 (0.8) & 80.3 (0.6) & \revision{51.0 (0.4)} & 55.1 (0.4) & 74.1 (0.2) \\
& DCM (ours) & \revision{\textbf{59.4 (0.1)}} & \textbf{64.1 (0.2)} & \textbf{77.5 (0.2)} & \revision{\textbf{86.5 (0.2)}} & \textbf{89.5 (0.3)} & \textbf{95.0 (0.1)} & \revision{\textbf{78.7 (1.2)}} & \textbf{82.5 (1.2)} & \textbf{91.6 (1.1)} & \revision{51.9 (1.7)} & 56.2 (1.4) & 76.4 (1.1) \\
\bottomrule
\end{tabular}
}
\footnotesize{$^*$ Ensemble requires $5\times$ the compute compared to other methods.}
\caption{
Selective classification performance on four datasets.
Numbers in parentheses represent the standard error over $3$ seeds, and we bold all methods that have overlapping error with the best-performing method.
DCM consistently achieves the best performance in settings with distribution shift (ID+OOD, OOD).
}
\label{tab:agg-sc}
\end{table*}

We assess the capability of models fine-tuned with \ours{} to abstain from making incorrect predictions. 
We evaluate on several toy and real-world image classification datasets that exhibit distribution shift.

\noindent \textbf{Datasets.}
We evaluate selective classification performance on CIFAR-10 \citep{cifar10} and CIFAR-10-C \citep{hendrycks2019benchmarking}, Waterbirds \citep{sagawa2019distributionally, wah2011caltech}, Camelyon17 \citep{koh2021wilds}, and FMoW \citep{koh2021wilds}.
These datasets were chosen to evaluate selective classification performance in the presence of diverse distribution shifts: corrupted inputs in CIFAR-10-C, spurious correlations in Waterbirds, and new domains in Camelyon17 and FMoW.

The ID/OOD/ID+OOD settings for each dataset are constructed as follows.
For CIFAR-10, the ID dataset is CIFAR-10, the OOD dataset is CIFAR-10-C, and the ID+OOD dataset is a 50-50 mix of the two datasets.
For Waterbirds, the ID dataset is the training split and the OOD dataset is a group-balanced validation set; we do not consider an ID+OOD dataset here.
For Camelyon17, the ID dataset consists of images from the first $3$ hospitals, which are represented in the training data.
The OOD dataset consists of images from the last $2$ hospitals, which do not appear in the training set.
The ID+OOD dataset is a mix of all five hospitals.
For FMoW, the ID dataset consists of images collected from the years $2002 - 2013$, which are represented in the training data. 
The OOD setting tests on images collected between $2016-2018$, and the ID + OOD setting tests on images from $2002 - 2013$ and $2016 - 2018$.

\noindent \textbf{Comparisons.}
We consider $7$ representative prior methods as baselines:
MSP~\citep{hendrycks2016baseline}, MaxLogit~\citep{maxlogit}, Binary Classifier~\citep{kamath2020selective}, Fine-Tuning on the labeled ID validation set, Deep Gamblers~\citep{liu2019deep}, and Self-Adaptive Training~\citep{huang2020self}, and an ensemble of $5$ MSP models as a rough upper bound on performance given more compute.
All methods use the same ID validation set for hyperparameter tuning and/or calibration.

\noindent \textbf{\ours{} outperforms prior methods when testing on unseen distributions.} 
We present representative metrics in \cref{tab:agg-sc} and full metrics in~\cref{tab:main_sc,tab:main_sc_2,tab:main_sc_3}.
\ours{} consistently outperforms all baselines in settings of distribution shift (OOD and ID+OOD).
These performance gains are attributed to \ours{}'s confidence minimization on uncertain examples, which better prepares the model for unfamiliar inputs.
\ours{} even outperforms Ensemble on three of the four datasets, despite requiring $1/5$ of the compute.
Fine-Tuning outperforms \ours{} when the training and validation datasets are from the same distribution (ID).
In settings where the test distribution differs from the training and validation distributions, \ours{} outperforms Fine-Tuning on most metrics.
These experiments indicate that DCM learns a more conservative model in conditions of distribution shift, compared to state-of-the-art methods for selective classification.

\begin{figure}[!t]
  \centering 
  \includegraphics[width=0.75\linewidth]{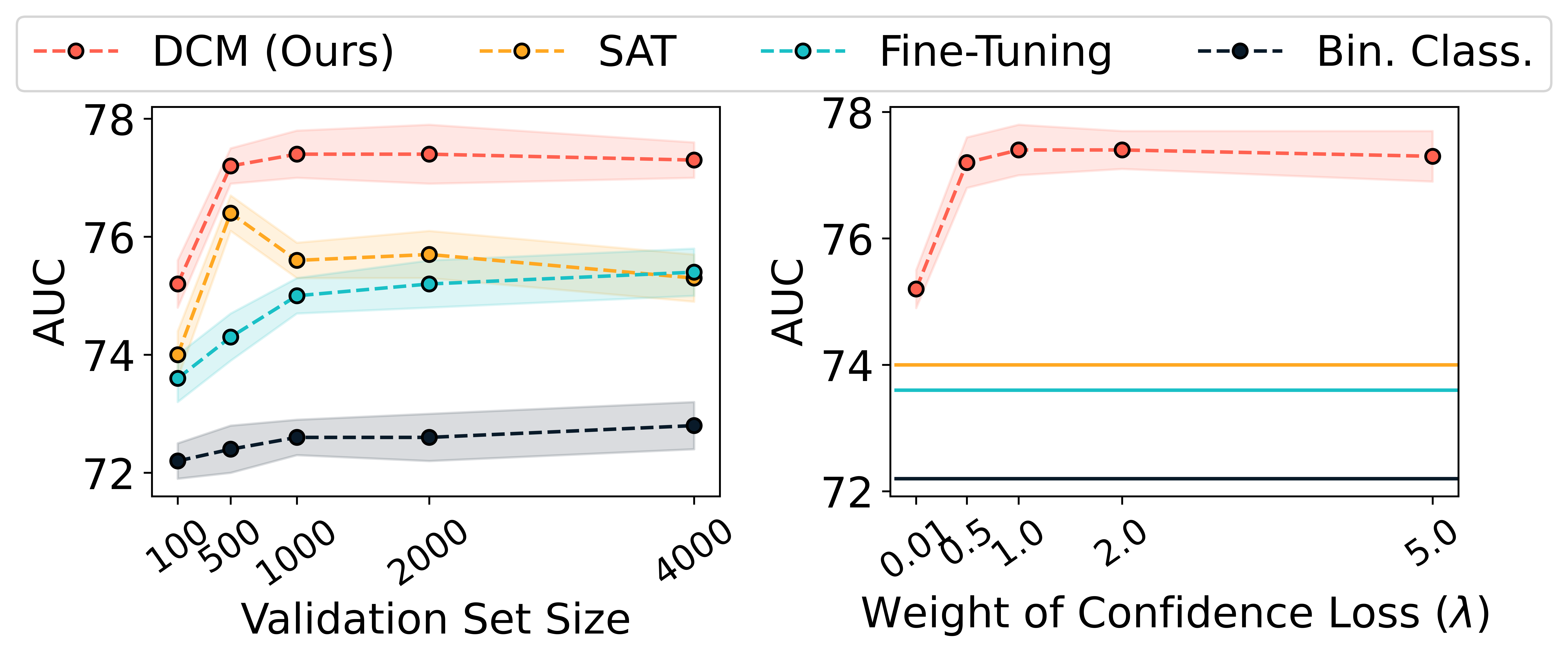}
\caption{ \label{fig:lambda-sensitivity}
  Selective classification performance of \ours{} on the CIFAR-10 $\rightarrow$ CIFAR-10-C task with validation set sizes (left) and various confidence loss weights $\lambda$ (right).}
\end{figure}

\textbf{\ours{} is robust to a range of confidence loss weights, $\lambda$, and validation set sizes.}
We investigate the sensitivity of \ours{} to the size of the validation set in \cref{fig:lambda-sensitivity} (left). 
We find that \ours{} for selective classification is robust to a range of validation set sizes.
This robustness suggests that \ours{} effectively leverages the available validation data to calibrate its confidence, maintaining performance even with varying amounts of data.
In \cref{fig:lambda-sensitivity} (right), we plot the performance of \ours{} with various values of $\lambda$ on tasks constructed from the CIFAR-10 and CIFAR-10-C datasets. 
We find that \ours{} performs best with $\lambda = 0.5$, indicating that this balance effectively regularizes the model without over-penalizing confident predictions. This balance allows DCM to maintain high accuracy while ensuring conservative prediction on uncertain inputs.

\subsection{OOD Detection}
We evaluate \ours{} on the standard OOD detection setting and the more challenging near-OOD detection setting.
We evaluate three variants of \ours{}, each using the training objective described in~\cref{sec:method}, but with three different measures of confidence: MSP~\citep{hendrycks2016baseline}, MaxLogit~\citep{maxlogit}, and Energy ~\citep{liu2020energy}.
We denote these three variants as DCM-Softmax, DCM-MaxLogit, DCM-Energy, and describe these variants in detail in \cref{app:dcm-variants}.
All experiments in this section use $\lambda=0.5$ and the default hyperparameters from~\citet{hendrycks2018deep}.
Further experimental details are in~\cref{appendix:regularSetting}.
\begin{table*}[t]
\centering
\resizebox{\textwidth}{!}{
\begin{tabular}{llcccccc}
\toprule
&  & \multicolumn{6}{c}{ID Dataset} \\
&  & \multicolumn{3}{c}{CIFAR-10} & \multicolumn{3}{c}{CIFAR-100} \\
\cmidrule(lr){3-5} \cmidrule(lr){6-8} 
Method & Architecture & \revision{ID Acc ($\uparrow$)} & AUROC ($\uparrow$) & FPR@95 ($\downarrow$) & \revision{ID Acc ($\uparrow$)} & AUROC ($\uparrow$) & FPR@95 ($\downarrow$) \\
\midrule
MSP & \multirow{11}{*}{WRN-40-2} & \revision{94.7} & 90.7 & 30.9 & \revision{73.9} & 70.3 & 72.1 \\
Energy Score & & \revision{94.7} & 91.5 & 33.2 & \revision{73.9} & 76.0 & 66.7 \\
MaxLogit & & \revision{94.7} & 93.5 & 25.5 & \revision{73.9} & 77.7 & 63.5 \\
ODIN & & \revision{94.7} & 94.6 & 24.1 & \revision{73.9} & 84.5 & 53.0 \\
Mahalanobis & & \revision{94.7} & 93.6 & 27.3 & \revision{73.9} & 91.6 & 37.3 \\
Outlier Exposure & & \revision{94.7} & 98.5 & 6.6 & \revision{\textbf{75.7}} & 81.1 & 59.4 \\
Energy Fine-Tuning & & \revision{\textbf{95.1}} & 99.1 & 3.4 & \revision{75.2} & 81.5 & 59.6 \\
WOODS & & \revision{94.7} & 94.8 & 4.1 & \revision{72.7} & 98.0 & 12.9 \\
\ours{}-Softmax (ours) & & \revision{93.6} & 99.6 & 1.0 & \revision{71.2} & 99.2  & 2.6  \\
\ours{}-MaxLogit (ours) & & \revision{93.6} & \textbf{99.8} & \underline{0.7} & \revision{71.2} & \underline{99.4}  & \underline{1.7} \\
\ours{}-Energy (ours) & & \revision{93.6} & \underline{99.7} & \textbf{0.3} & \revision{71.2} & \textbf{99.5} & \textbf{1.3}  \\
\midrule
Binary Classifier & \multirow{5}{*}{ResNet-18} & - & \underline{98.9} & \underline{1.3} & - & 97.9 & 7.6 \\
ERD & & - & \textbf{99.5}$^*$ & \textbf{1.0}$^*$ & - & 99.1$^*$ & \underline{2.6}$^*$ \\
\ours{}-Softmax (ours) & & \revision{\textbf{93.4}} & \textbf{99.5} & 1.9 & \revision{70.9} & 99.1 & 4.6 \\
\ours{}-MaxLogit (ours) & &  \revision{\textbf{93.4}} &\textbf{99.5} & 1.5 & \revision{70.9} & \underline{99.2} & 3.5 \\
\ours{}-Energy (ours) & & \revision{\textbf{93.4}} & \textbf{99.5} & 1.4 & \revision{\textbf{71.0}} & \textbf{99.3} & \textbf{2.3} \\
\bottomrule
\end{tabular}} \\
\footnotesize{$^*$ ERD requires $3\times$ the compute compared to other methods.}
\caption{
OOD detection performance of models trained on CIFAR-10 or CIFAR-100 and evaluated on 4 OOD datasets. 
Metrics are averaged over OOD datasets; detailed dataset-specific results are in~\cref{tab:ood_detailed}.
The three variants of \ours{} exhibit competitive performance on all datasets.
}\label{tab:main_ood}
\end{table*}

\begin{table}[t]
\centering
\resizebox{\textwidth}{!}{
\begin{tabular}{lcccccccc|c}
\toprule
Methods & \multicolumn{2}{c}{iNaturalist} & \multicolumn{2}{c}{SUN} & \multicolumn{2}{c}{Places} & \multicolumn{2}{c}{Textures} & \\
\cmidrule(lr){2-3} \cmidrule(lr){4-5} \cmidrule(lr){6-7} \cmidrule(lr){8-9}
& FPR@95 ($\downarrow$) & AUROC ($\uparrow$) 
& FPR@95 ($\downarrow$) & AUROC ($\uparrow$) 
& FPR@95 ($\downarrow$) & AUROC ($\uparrow$)  
& FPR@95 ($\downarrow$) & AUROC ($\uparrow$)   
& \revision{ID Acc ($\uparrow$)}
\\
\midrule
MCM (zero-shot) & 32.1 & 94.4 & 39.2 & 92.3 & 44.9 & 89.8 & 58.1 & 86.0 & \revision{68.5} \\
\midrule
MSP & 54.1 & 87.4 & 73.4 & 78.0 & 73.0 & 78.0 & 68.9 & 79.1 & \revision{\textbf{79.6}} \\
ODIN & 30.2 & 94.7 & 54.0 & 87.2 & 55.1 & 85.5 & 51.7 & 87.9 & \revision{\textbf{79.6}} \\
Energy & 29.8 & 94.7 & 53.2 & 87.3 & 56.4 & 85.6 & 51.4 & 88.0 & \revision{\textbf{79.6}} \\
GradNorm & 81.5 & 72.6 & 82.0 & 72.9 & 80.4 & 73.7 & 79.4 & 70.3 & \revision{\textbf{79.6}} \\
ViM & 32.2 & 93.2 & 54.0 & 87.2 & 60.7 & 83.8 & 53.9 & 87.2 & \revision{\textbf{79.6}} \\
KNN & 29.2 & 94.5 & 35.6 & 92.7 & 39.6 & 91.0 & 64.4 & 85.7 & \revision{\textbf{79.6}} \\
VOS & 31.7 & 94.5 & 43.0 & 91.9 & 41.6 & 90.2 & 56.7 & 86.7 & \revision{\textbf{79.6}} \\
VOS+ & 29.0 & 94.6 & 36.9 & 92.6 & 38.4 & 91.2 & 61.0 & 86.3 & \revision{\textbf{79.6}} \\
NPOS & 16.6 & 96.2 & 43.8 & 90.4 & 45.3 & 89.4 & 46.1 & 88.8 & \revision{79.4} \\
\ours{}-Softmax & 2.6 & 99.2 & 32.9 & 94.2 & 35.9 & 93.8 & 11.2 & 97.9 & \revision{78.9} \\
\ours{}-MaxLogit & 1.8 & 99.4 & 27.5 & 94.9 & \underline{32.5} & 94.5 & 8.2  & 98.3 & \revision{78.9} \\
\ours{}-Energy & \textbf{0.5} & \textbf{99.6} & \textbf{24.5} & \textbf{95.8} & \textbf{30.8} & \textbf{95.4} & \textbf{4.3}  & \textbf{98.8} & \revision{78.9} \\
\bottomrule
\end{tabular}

}
\caption{ \label{tab:ood_imagenet}
OOD detection performance of ViT-B/16 with ImageNet-1K as the in-distribution training data.
}
\end{table}

\begin{table*}[t]
\centering 
\resizebox{\textwidth}{!}{%
\begin{tabular}{ll|ccccc}
    \toprule
    Setting & Method & FPR@95 ($\downarrow$) & FPR@99 ($\downarrow$) & AUROC ($\uparrow$) & AUPR ($\uparrow$) & \revision{ID Acc ($\uparrow$)}\\
     \toprule
     \multirow{7}{*}{\shortstack[l]{ID = CIFAR-10 [0:5] \\ OOD = CIFAR-10 [5:10]}}  
     & MSP & 78.6 (3.6) & 94.5 (2.8) & 75.7 (0.6) & 38.5 (1.7) & \revision{\textbf{94.9 (0.4)}} \\
     & MaxLogit & 79.2 (3.4) & 94.4 (2.0) & 75.5 (0.8) & 40.3 (1.5) & \revision{\textbf{94.9 (0.4)}} \\
     & Binary Classifier & 78.6 (1.5) & 94.0 (0.5) & 71.8 (1.1) & 79.5 (0.7) & - \\
     & ERD & 72.5 (1.7) & 92.1 (0.8) & 79.3 (0.3) & \textbf{47.9 (1.6)} & - \\
     & \ours{}-Softmax (ours) & \textbf{66.0 (2.6)} & \textbf{89.2 (1.0)} & \textbf{81.2 (0.3)} & 45.7 (0.6) & \revision{94.0 (0.5)} \\
     & \ours{}-MaxLogit (ours) & \textbf{67.6 (5.6)} & \textbf{89.2 (2.2)} & \textbf{81.3 (0.6)} & 46.1 (1.4) & \revision{94.0 (0.5)} \\
     & \ours{}-Energy (ours) & \textbf{67.3 (2.7)} & \textbf{89.1 (0.9)} & 
     \textbf{81.4 (0.6)} & \textbf{46.3 (0.7)} & \revision{94.0 (0.5)}\\
     \midrule
     \multirow{7}{*}{\shortstack[l]{ID = CIFAR-100 [0:50] \\ OOD = CIFAR-100 [50:100]}}
     & MSP & 68.8 (1.2) & 90.9 (1.1) & 75.4 (0.8) & \textbf{33.4 (1.4)} & \revision{\textbf{78.3 (0.5)}} \\
     & MaxLogit & 69.8 (2.0) & 91.5 (1.7) & \textbf{76.0 (0.5)} & \textbf{33.9 (1.4)} & \revision{\textbf{78.3 (0.5)}} \\
     & Binary Classifier & 89.0 (3.6) & 91.8 (3.6) & 61.0 (1.8) & 71.7 (1.1) & - \\
     & ERD & 75.4 (0.9) & 88.8 (0.5) & 71.3 (0.3) & 30.2 (0.5) & - \\
     & \ours{}-Softmax (ours) & 67.3 (0.5) & \textbf{86.3 (0.6)} & 74.3 (0.2) & 32.1 (0.8) & \revision{71.8 (0.6)} \\
     & \ours{}-MaxLogit (ours) & \textbf{66.7 (1.5)} & \textbf{87.6 (2.5)} & 74.3 (0.5) & 32.2 (1.7) & \revision{71.8 (0.6)}\\
     & \ours{}-Energy (ours) & \textbf{66.7 (0.5)} & 87.6 (1.1) & 
     73.9 (0.2) & 32.1 (0.6) & \revision{71.8 (0.6)}\\
    \bottomrule
\end{tabular}%
}
\caption{
Near-OOD detection of ResNet-18 models on the CIFAR-10 and CIFAR-100 datasets. 
Numbers in parentheses represent the standard error over 5 seeds. 
}
\label{tab:nearOODCompleteTable}
\end{table*}

\noindent \textbf{Datasets.}
We use CIFAR-10 and CIFAR-100 as our ID datasets and TinyImageNet, LSUN, iSUN and SVHN as our OOD datasets, resulting in a total of $8$ ID-OOD pairs. 
We split the ID data into 40,000 examples for training and 10,000 examples for validation.
Our uncertainty and test sets are disjoint datasets with 5,000 and 1,000 examples, respectively.
On the near-OOD detection tasks, the ID and OOD datasets consist of disjoint classes in the same dataset.
The number of examples per class is the same as in the standard OOD detection setting. 
For comparison on large-scale image datasets, we use ImageNet-1K as ID and iNaturalist, SUN, Textures and Places as OOD datasets. 
Please refer to \cref{app:ood_imagenet} for further experimental details and the full comparison table.

\noindent \textbf{Comparisons.}
In the standard OOD detection setting, we compare \ours{} with $8$ representative OOD detection methods: MSP~\citep{hendrycks2016baseline}, MaxLogit~\citep{maxlogit}, ODIN~\citep{liang2017odin}, Mahalanobis~\citep{lee2018mahalanobis}, Energy Score~\citep{liu2020energy}, Outlier Exposure~\citep{hendrycks2018deep}, Energy Fine-Tuning~\citep{liu2020energy}, and WOODS~\citep{pmlr-v162-katz-samuels22a}.
In the more challenging near-OOD detection setting, standard methods such as Mahalanobis perform poorly~\citep{ren2021nearMahalanobis}, so we compare \ours{} with Binary Classifier and ERD~\citep{tifrea2022semi}, which attain state-of-the-art performance. 
Similar to \ours{}, these methods leverage an unlabeled dataset containing ID and OOD inputs. For experiments on ImageNet-1K, we compare to VOS~\citep{vos} and NPOS~\citep{npos}, prior SOTA methods that use synthetic outliers.

\begin{figure}[t!]
\centering \hfill
\includegraphics[width=0.28\linewidth]{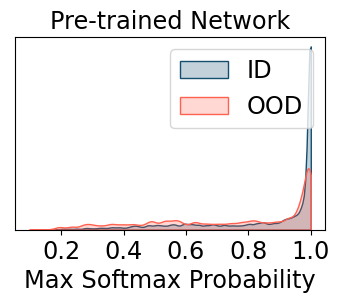} \hfill
\includegraphics[width=0.28\linewidth]{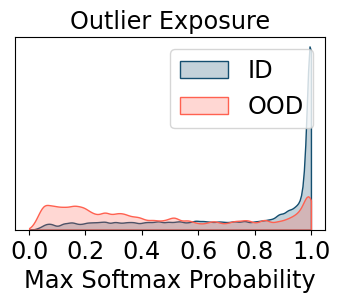} \hfill
\includegraphics[width=0.28\linewidth]{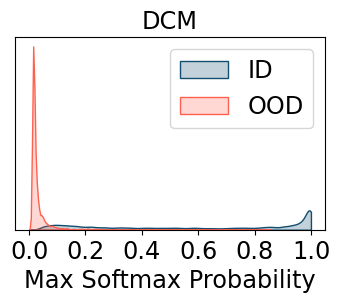} \hfill
\caption{
\label{fig:score_distribution}
Distribution of maximum softmax probability 
(left) for ID pre-training,
(middle) fine-tuning with OE,
(right) fine-tuning with DCM.
ID and OOD datasets are CIFAR-100 and TinyImageNet, respectively.
\ours{} results in (1) better separation of predictive confidence for ID and OOD inputs, and (2) low predictive confidence on OOD inputs, suggesting that it learns a conservative model. 
}
\end{figure}

\begin{figure}[t]
  \centering
  \includegraphics[width=0.32\linewidth]{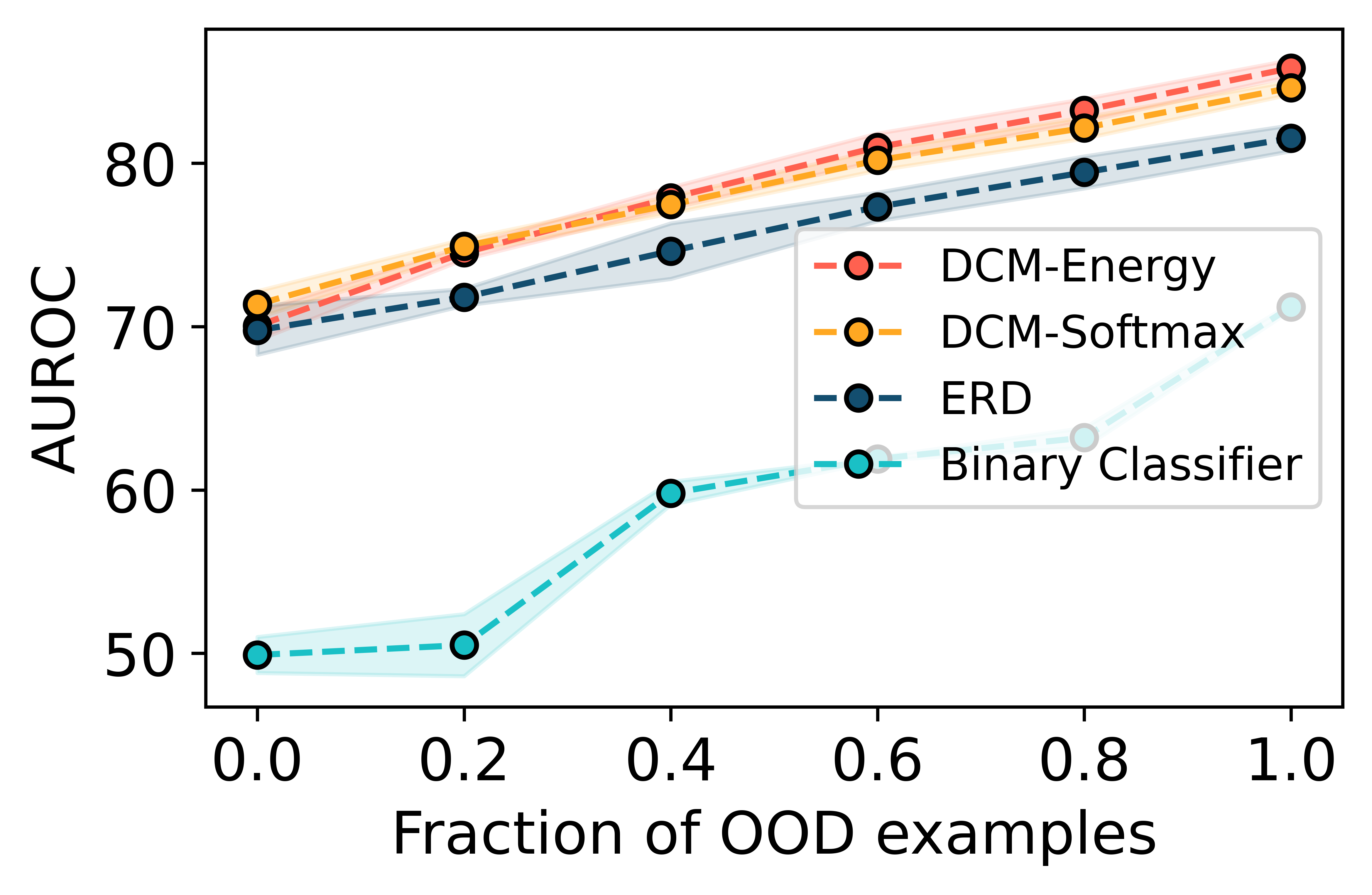}
  \hfill
  \includegraphics[width=0.33\linewidth]{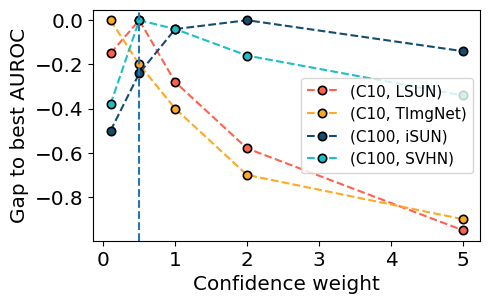}
  \hfill
  \includegraphics[width=0.32\linewidth]{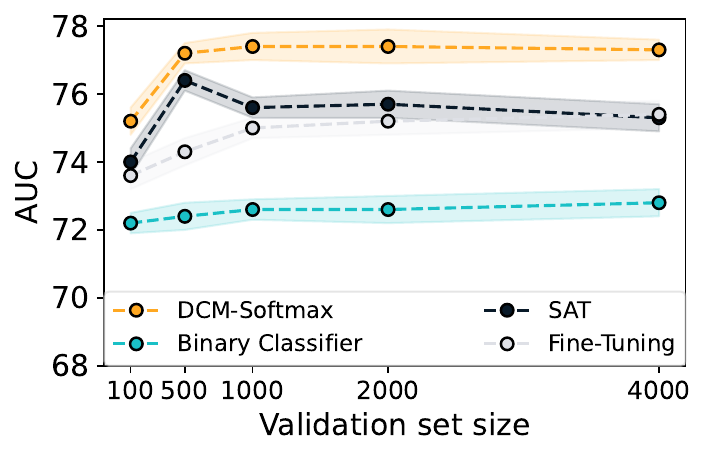}
  \vspace{-0.2cm}
  \caption{
  \textbf{Robustness of \ours{} to hyperparameters.}
  \textit{Left:} Performance of \ours{} on a near-OOD detection task (CIFAR-100 [0:50] vs CIFAR-100 [50:100]) with various OOD proportions in the uncertainty dataset.
  Our methods, DCM-Energy and DCM-Softmax, outperform existing methods across all OOD proportions. \textit{Middle:} Relative AUROC of \ours{} with various confidence weights $\lambda$; note the negligible differences in AUROC. \textit{Right:} Selective classification performance of \ours{} with uncertainty datasets of various sizes on CIFAR-10 $\rightarrow$ CIFAR-10-C. These plots suggest that \ours{} is robust to a range of confidence weights and sizes and compositions in the uncertainty dataset.
  }
  
  \label{fig:ablations}
\vspace{-0.3cm}
\end{figure}

\begin{figure}[t]
  \centering
  \includegraphics[width=0.32\linewidth]{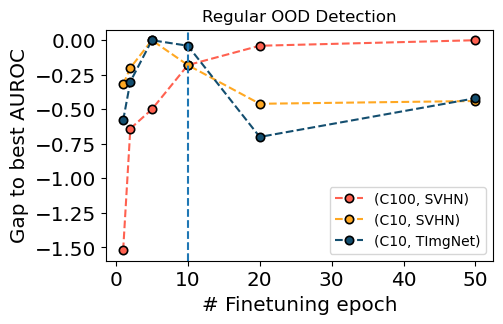}
  \hfill
  \includegraphics[width=0.33\linewidth]{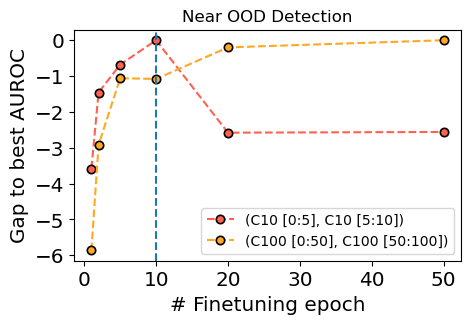}
  \hfill
  \includegraphics[width=0.32\linewidth]{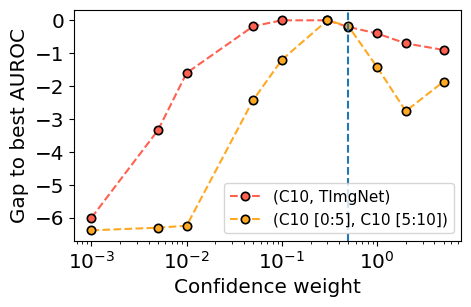}
  \vspace{-0.2cm}
  \caption{
  \textbf{Further ablations on robustness of \ours{} to hyperparameters.}
  \textit{Left:} Relative AUROC of \ours{} on 3 regular OOD detection setting, where we vary the number of epochs in the second fine-tuning stage. Our default choice of 10 does not generally achieve the best performance.
  \textit{Middle:} Similar to the plot on the left, but we experiment in the challenging near-OOD detection setting. \textit{Right:} Relative AUROC of \ours{} where we vary the confidence weight $\lambda$.
  }
  
  \label{fig:further_ablations}
\vspace{-0.3cm}
\end{figure}

\begin{table*}[t]
  \centering
  \resizebox{\textwidth}{!}{
  \begin{tabular}{llcccccccc}
  \toprule
  \multirow{2}{*}{Method} 
  & \multirow{2}{*}{Setting}
  & \multicolumn{2}{c}{SVHN}
  & \multicolumn{2}{c}{TinyImageNet}
  & \multicolumn{2}{c}{LSUN}
  & \multicolumn{2}{c}{iSUN} \\
  \cmidrule(lr){3-4} \cmidrule(lr){5-6} \cmidrule(lr){7-8} \cmidrule(lr){9-10}
  & 
  & AUROC & FPR@95 
  & AUROC & FPR@95
  & AUROC & FPR@95
  & AUROC & FPR@95 \\
  \midrule
  \ours{}-Softmax
  & Regular & 99.5 (0.2) & 1.0 (0.6) & 99.3 (0.1) & 4.1 (1.3) & 99.7 (0.1) & 0.9 (0.3) & 99.5 (0.1) & 1.7 (0.4) \\
  & Transductive & 99.8 (0.1) & 0.4 (0.4) & 98.8 (0.3) & 6.5 (2.0) & 99.2 (0.3) & 4.2 (1.5) & 99.2 (0.1) & 4.9 (1.3) \\
  \midrule
  \ours{}-MaxLogit
  & Regular & 99.5 (0.1) & 0.9 (0.3) & 99.3 (0.1) & 2.7 (0.8) & 99.8 (0.1) & 0.8 (0.4) & 99.4 (0.1) & 1.5 (0.6) \\
  & Transductive & 99.9 (0.1) & 0.3 (0.3) & 98.8 (0.2) & 5.9 (1.9) & 99.3 (0.2) & 3.6 (1.3) & 99.2 (0.1) & 4.6 (1.3) \\
  \midrule
  \ours{}-Energy 
  & Regular & 99.5 (0.2) & 0.6 (0.5) & 99.3 (0.1) & 3.2 (1.0) & 99.8 (0.1) & 0.4 (0.1) & 99.5 (0.1) & 1.2 (0.3) \\
  & Transductive & 99.9 (0.1) & 0.2 (0.2) & 98.9 (0.2) & 5.2 (1.8) & 99.4 (0.2) & 2.5 (0.8) & 99.3 (0.1) & 3.5 (0.5) \\
  \bottomrule
  \end{tabular}
  }
  \caption{Comparison between the regular and transductive setting peformance of our method for ResNet-18 models trained on CIFAR-10.}
  \label{tab:ood_detection_transductive_cifar10}
  \end{table*}
  
  \begin{table*}[t]
  \centering
  \resizebox{\textwidth}{!}{
  \begin{tabular}{llcccccccc}
  \toprule
  \multirow{2}{*}{Method} 
  & \multirow{2}{*}{Setting}
  & \multicolumn{2}{c}{SVHN}
  & \multicolumn{2}{c}{TinyImageNet}
  & \multicolumn{2}{c}{LSUN}
  & \multicolumn{2}{c}{iSUN} \\
  \cmidrule(lr){3-4} \cmidrule(lr){5-6} \cmidrule(lr){7-8} \cmidrule(lr){9-10}
  & 
  & AUROC ($\uparrow$) & FPR@95 ($\downarrow$)
  & AUROC ($\uparrow$) & FPR@95 ($\downarrow$)
  & AUROC ($\uparrow$) & FPR@95 ($\downarrow$)
  & AUROC ($\uparrow$) & FPR@95 ($\downarrow$) \\
  \midrule
  \ours{}-Softmax
  & Regular & 99.3 (0.1) & 2.5 (1.4) & 98.7 (0.3) & 7.8 (2.5) & 99.4 (0.2) & 1.6 (1.5) & 98.8 (0.4) & 6.3 (3.3) \\
  & Transductive & 99.3 (0.4) & 2.8 (2.5) & 97.6 (0.4) & 16.3 (4.6) & 98.3 (0.9) & 9.3 (7.3) & 97.9 (1.2) & 12.0 (8.2) \\
  \midrule
  \ours{}-MaxLogit
  & Regular & 99.2 (0.1) & 3.7 (0.6) & 98.8 (0.2) & 6.4 (1.8) & 99.6 (0.1) & 0.7 (0.3) & 99.2 (0.2) & 3.1 (1.7) \\
  & Transductive & 99.5 (0.4) & 1.6 (1.8) & 97.7 (0.3) & 14.8 (4.4) & 98.5 (0.8) & 7.7 (5.9) & 98.2 (0.9) & 10.2 (6.3) \\
  \midrule
  \ours{}-Energy
  & Regular & 99.5 (0.1) & 1.0 (0.6) & 99.0 (0.3) & 4.9 (2.3) & 99.6 (0.2) & 0.8 (0.7) & 99.2 (0.3) & 2.4 (0.9)\\
  & Transductive & 99.5 (0.3) & 1.4 (1.6) & 97.8 (0.3) & 11.9 (5.1) & 98.7 (0.5) & 5.3 (3.5) & 98.7 (0.4) & 6.1 (3.5) \\
  \bottomrule
  \end{tabular}
  }
  \caption{Comparison between the regular and transductive setting peformance of our method for ResNet-18 models trained on CIFAR-100.}
  \label{tab:ood_detection_transductive_cifar100}
  \end{table*}

\noindent \textbf{\ours{} outperforms prior methods.}
As expected, \ours{} outperforms prior methods due to provable separation of ID and OOD inputs by predictive confidence.
\ours{} outperforms all 8 prior methods across 8 ID-OOD dataset pairs, as shown in \cref{tab:main_ood}. 
For the standard OOD detection setting, we report full results in~\cref{appendix:regularSetting}.
For near-OOD detection, \cref{tab:nearOODCompleteTable} shows that \ours{} outperforms Binary Classifier, and performs similarly to ERD with only $1/3$ of the compute. 
\ours{} scales well to larger tasks: \cref{tab:ood_imagenet} shows that \ours{} outperforms all prior approaches on ImageNet-1K as well. 

\cref{fig:score_distribution} suggests that (1) \ours{} produces a conservative model that is only under-confident on OOD inputs, and (2) \ours{} better distinguishes ID and OOD inputs by predictive confidence than Outlier Exposure. 
Further ablations on the robustness of \ours{} to $\lambda$, the number of finetuning epochs, and the fraction of OOD examples in the uncertainty dataset are in \cref{appendix:more_ood_experiments}.

\textbf{\ours{} is robust to confidence weight.}
We fix $\lambda=0.5$ for all ID-OOD dataset pairs, following prior work~\citep{hendrycks2018deep, devries2018learning, hendrycks2016baseline}. 
While other works \citep{lee2017training, liang2017enhancing} tune hyperparameters for each OOD distribution, we do not in order to test the model's ability to detect OOD inputs from unseen distributions.
We plot OOD detection performance for 4 representative ID-OOD dataset pairs with different $\lambda$ in~\cref{fig:ablations} and \cref{fig:further_ablations}. 
While $\lambda = 0.5$ is not always optimal, we find that differences in performance due to changes in $\lambda$ are negligible.
This suggests that \ours{} can maintain high performance without extensive hyperparameter tuning, making it more practical for real-world applications.

\textbf{Uncertainty dataset composition.}
We expect uncertainty datasets with a larger fraction of OOD examples to result in better separation of ID and OOD inputs.
Intuitively, minimizing confidence only on OOD inputs should result in the most conservative models.
As expected, we observe improved performance with larger proportions of OOD examples in the uncertainty dataset for all methods.
We note that \ours{} achieves the highest performance across all proportions.
This holds for several near OOD detection tasks, as illustrated in \cref{fig:ablations} (left panel) and \cref{fig:c10_ood_fraction_ablation}. 
\ours{} outperforms binary classifier, and performs similarly to ERD while using 1/3 the compute.
This suggests that the benefits of data-driven regularization is robust to the uncertainty dataset composition.

\begin{wrapfigure}{r}{0.41\linewidth}
\newpage
\vspace{-3mm}
    \centering
    \includegraphics[width=0.98\linewidth]{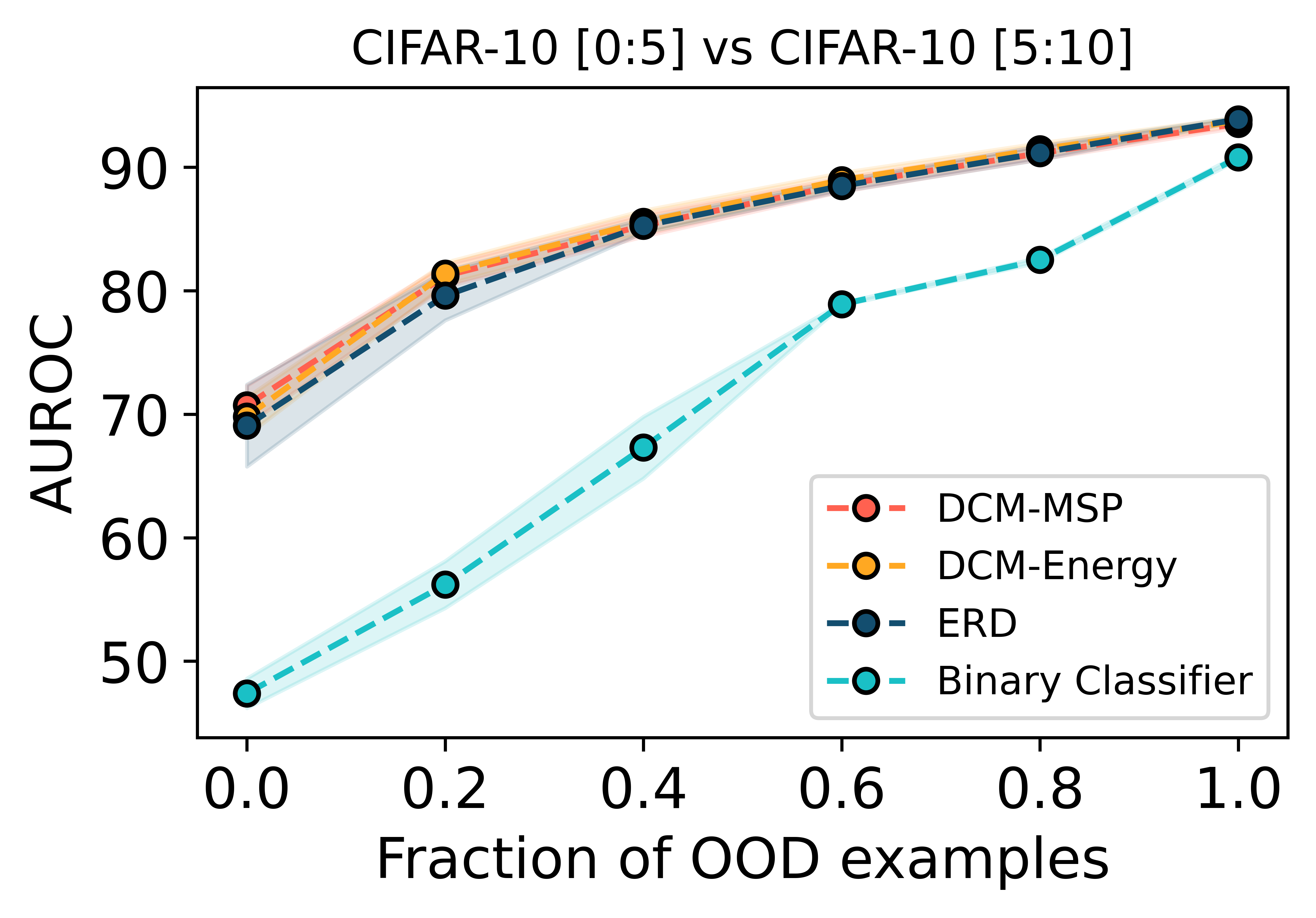}
\vspace{-2mm}
    \caption{
    \label{fig:c10_ood_fraction_ablation}
      Performance of DCM on CIFAR-10 [0:5] vs CIFAR-10 [5:10] near-OOD detection task with various OOD proportions in the uncertainty dataset. The test dataset is fixed with 2500 ID and 500 OOD examples and is disjoint from the uncertainty dataset.}
\vspace{-5mm}
\end{wrapfigure}

\textbf{\ours{} performs competitively in the transductive setting.} 
We evaluate \ours{} with the test set as the uncertainty dataset in \cref{tab:ood_detection_transductive_cifar10} and \cref{tab:ood_detection_transductive_cifar100}.
This transductive variant of \ours{} still performs competitively to prior methods.
However, there is a slight drop in performance compared to standard version of \ours{} due to minimizing confidence on test ID inputs.

\section{Conclusion}
In this work, we propose Data-Driven Confidence Minimization (\ours{}), which trains models to make conservative predictions by minimizing confidence on an uncertainty dataset.

Our empirical results demonstrate that \ours{} can lead to more robust classifiers, particularly in conditions of distribution shift.
In our experiments, \ours{} consistently outperformed state-of-the-art methods for selective classification and OOD detection.
We believe that the theoretical guarantees and strong empirical performance of \ours{} represents a promising step towards building more robust and reliable machine learning systems in safety-critical scenarios.

The requirement of an uncertainty dataset that covers regions that the model may mis-classify can preclude some applications.
Furthermore, the theoretical guarantees for \ours{} only apply to inputs that are represented by the uncertainty dataset and \ours{} requires separate fine-tuning for each different test distribution.
Future work can develop better methods for gathering or constructing uncertainty datasets to make the framework more widely applicable and increase performance.
\revision{It would also be interesting to extend \ours{} to the problem of selective classification with OOD detection \citep{jaeger2022call,xia2022augmenting}.}

\clearpage
\bibliography{main}
\bibliographystyle{tmlr}

\appendix
\appendix \onecolumn
\section{Theoretical Analysis}
\label{app:theory}

In this section we provide a simple theoretical setup for our algorithm. 
First, we show our algorithm can perfectly detect \unknown{} examples when the \known{} examples in the test set also appears in this training set. 
Next, we show that under the assumptions of function smoothness and closeness of \known{} train and test examples in the input space, this also holds for unseen \known{} and \unknown{} examples.

\subsection{Problem Setting}
Let $\mathcal{X}$ be the input space and $\mathcal{Y}$ the label space. 
Let $\Pid$ be a distribution over $\mathcal{X} \times \{1, \ldots, C\} \subseteq \mathcal{X} \times \mathcal{Y}$ i.e., there are $C$ classes, and let $\Dtr$ be a training dataset consisting of $n$ datapoints sampled from $\Pid$.
We train a classifier $f_{\theta}: \mathcal{X} \rightarrow [0, 1]^{C}$ on the training data.
We also consider a different distribution $\Pood$ over $\mathcal{X} \times \mathcal{Y}$ that is different from $\Pid$ (the OOD distribution). 
Let $\Du$ be an unlabeled test set where half the examples are sampled from $\Pid$, the other half are sampled from $\Pood$. 
Our objective is to minimize the following loss function:
\begin{align}
\mathcal{L}(\theta) = \E_{(x, y) \in \Dtr} \left[\mathcal{L}_\textrm{xent}(f_{\theta}(x), y)\right] + \lambda \E_{x' \in \Du} \left[\mathcal{L}_\textrm{con}(f_{\theta}(x'))\right],
\end{align}
where $\lambda > 0$, $\mathcal{L}_\textrm{xent}$ is the standard cross-entropy loss, and $\mathcal{L}_\textrm{con}$ is a confidence loss which is calculated as the cross-entropy with respect to the uniform distribution over the $C$ classes. 
We focus on the maximum softmax probability $\MSP(p) \triangleq \max_i p_i$ as a measure of confidence in a given categorical distribution $p$.

\subsection{Simplified Setting: \known{} Examples Shared Between Train and Unlabeled Sets}

We start with the following lemma which characterizes the interaction of our loss function~\cref{eq:objective} with a single datapoint.

\begin{proposition}[Lower bound on true confidence] \label{prop:minimum}
Let $p$ be the true label distribution of input $x$.
The minimum of the objective function~\cref{eq:objective} is achieved when the predicted distribution is $p_\lambda \triangleq \frac{p + \lambda \frac{\mathbf{1}}{C}}{1 + \lambda}$.
For all $x$ within $D_u$ and $D_{tr}$, the optimal distribution $p_\lambda$ satisfies $\MSP(p_\lambda) \leq \MSP(p)$, with equality iff $\lambda=0$.
\end{proposition}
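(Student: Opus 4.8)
The plan is to separate the objective into a per-datapoint contribution and minimize pointwise, since a sufficiently expressive model can set $f_\theta(x)$ independently for each input. For an input $x$ that appears only in the support of $\Pid$ (so it receives only the cross-entropy term), the optimal predictive distribution is the true label distribution $p$ by the standard Gibbs inequality argument: $\E_{y\sim p}[-\log q_y] = H(p) + \mathrm{KL}(p\,\|\,q)$ is minimized at $q=p$. For an input $x$ that appears only in $D_u$ (so it receives only $\lambda\mathcal{L}_\textrm{con}$), the confidence loss is $-\log q$ averaged against the uniform distribution, i.e. $\mathcal{L}_\textrm{con}(q) = \mathrm{KL}(\mathcal{U}\,\|\,q) + H(\mathcal{U})$ up to constants, which is uniquely minimized at $q = \mathcal{U} = \mathbf{1}/C$. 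For an input $x$ that appears in both (the shared-ID case), the per-datapoint objective is a convex combination $\E_{y\sim p}[-\log q_y] + \lambda\,\E_{y\sim\mathcal{U}}[-\log q_y]$; differentiating with a Lagrange multiplier for the constraint $\sum_i q_i = 1$, or equivalently rewriting the sum as $(1+\lambda)\,\E_{y\sim p_\lambda}[-\log q_y]$ where $p_\lambda \triangleq \frac{p + \lambda\,\mathbf{1}/C}{1+\lambda}$, shows the unique minimizer is $q = p_\lambda$. This establishes the characterization of the minimizer in all three cases (and $p_\lambda$ reduces to $p$ or $\mathcal{U}$ at the extremes, so the single formula covers everything once we note ID-only inputs correspond to the $\lambda\to 0$ effective weight and $D_u$-only inputs to $p$ being immaterial).

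The second half is the inequality $\MSP(p_\lambda) \le \MSP(p)$. Write $i^\star \in \argmax_i p_i$, so $\MSP(p) = p_{i^\star}$. Then
\begin{equation}
\MSP(p_\lambda) = \max_i \frac{p_i + \lambda/C}{1+\lambda} = \frac{p_{i^\star} + \lambda/C}{1+\lambda},
\end{equation}
using that the map $t \mapsto \frac{t + \lambda/C}{1+\lambda}$ is increasing so the max over $i$ is still attained at $i^\star$. It then suffices to show $\frac{p_{i^\star} + \lambda/C}{1+\lambda} \le p_{i^\star}$, i.e. $p_{i^\star} + \lambda/C \le p_{i^\star} + \lambda p_{i^\star}$, i.e. $1/C \le p_{i^\star}$. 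This last inequality holds because $p_{i^\star}$ is the maximum of a probability vector over $C$ classes, hence at least the average $1/C$. Equality $\MSP(p_\lambda) = \MSP(p)$ forces either $\lambda = 0$ or $p_{i^\star} = 1/C$; the latter means $p$ is uniform, in which case $p_\lambda = p$ as well, so the "iff $\lambda = 0$" claim is literally true except in the degenerate uniform case. I would either restrict the equality claim to non-uniform $p$ (the generic case, implicitly assumed since the interesting inputs have a dominant label) or simply note that when $p$ is uniform the statement $\MSP(p_\lambda) \le \MSP(p)$ still holds with equality and the distinction is vacuous.

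The main obstacle — really the only subtle point — is justifying the reduction to pointwise optimization: the argument requires that the infimum of the population objective is achieved by a function that can independently realize the optimal categorical output at each relevant input. This is exactly the "sufficiently expressive model class" hypothesis flagged in the remark after the proposition, so I would state it explicitly as an assumption and observe that it holds for inputs in $D_u \cup \mathrm{supp}(\Pid)$ as long as these inputs are distinct points where $f_\theta$ has enough capacity to interpolate arbitrary outputs; for finite $D_u$ and a universal approximator this is immediate. Everything else is the elementary convex-optimization and Gibbs-inequality bookkeeping sketched above, plus the one-line observation $\max_i p_i \ge 1/C$.
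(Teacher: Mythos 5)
Your proof is correct and follows essentially the same route as the paper: a pointwise reduction, followed by the observation that the optimum is the $\lambda$-weighted mixture of $p$ with the uniform distribution, whose maximum entry cannot exceed $\max_i p_i$ because $\max_i p_i \ge 1/C$; the only difference in execution is that the paper obtains the minimizer by setting the softmax-logit gradient $s - p + \lambda\left(s - \tfrac{\mathbf{1}}{C}\right)$ to zero, whereas you rewrite the per-point loss as $(1+\lambda)$ times the cross-entropy against $p_\lambda$ and invoke Gibbs' inequality, which is equivalent and, if anything, slightly cleaner since it certifies a global minimum rather than a stationary point. Your remark that the stated ``equality iff $\lambda=0$'' fails literally when $p$ is uniform (where $p_\lambda = p$ and $\MSP(p_\lambda)=\MSP(p)$ for every $\lambda$) is a correct refinement that the paper's proof does not address.
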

\begin{proof}
Denote the predicted logits for input $x$ as $z \in \mathbb{R}^C$, and softmax probabilities as $s = e^z / \sum_i e^{z_i} \in [0, 1]^C$.
The derivative of the logits with respect to the two loss terms have the closed-form expressions $\frac{\partial}{\partial z} \mathcal{L}_\textrm{xent} = s - p$, $\frac{\partial}{\partial z} \mathcal{L}_\textrm{con} = s - \frac{1}{C} \mathbf{1}$.
Setting the derivative of the overall objective to zero, we have
\begin{align}
\frac{\partial}{\partial z} \left(\mathcal{L}_\textrm{xent} + \lambda \mathcal{L}_\textrm{con} \right) 
= s - p + \lambda \left(s - \frac{\mathbf{1}}{C} \right) = 0 \implies s = \frac{p + \lambda \frac{\mathbf{1}}{C}}{1 + \lambda} = p_\lambda.
\end{align}
To check the lower bound property, note that $p_\lambda$ is a combination of $p$ and the uniform distribution $U$, where $U$ is the uniform distribution over the $C$ classes and has the lowest possible $\MSP$ among all categorical distributions over $C$ classes.
\end{proof}
The resulting predictive distribution $p_\lambda$ can alternatively be seen as Laplace smoothing with pseudocount $\lambda$ applied to the true label distribution $p$.
This new distribution can be seen as ``conservative'' in that it (1) has lower $\MSP$ than that of $p$, and (2) has an entropy greater than that of $p$.

\begin{lemma}[Pinsker's inequality] \label{lemma:pinsker}
If $P$ and $Q$ are two probability distributions, then
\begin{align}
\delta_\textrm{TV}(P, Q) \leq \sqrt{\frac{1}{2} D_{KL}(P \parallel Q)},
\end{align}
where $\delta_\textrm{TV}(P, Q)$ is the total variation distance between $P$ and $Q$.
\end{lemma}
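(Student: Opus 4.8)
The plan is to prove Pinsker's inequality by the classical two-step reduction: first reduce the general statement to the case of two-point (Bernoulli) distributions via a data-processing argument, and then settle the Bernoulli case by elementary single-variable calculus.

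First I would set $A \triangleq \{x : P(x) \geq Q(x)\}$ and write $p \triangleq P(A)$, $q \triangleq Q(A)$, so that $p \geq q$. Since $A$ is exactly the event maximizing $P(\cdot) - Q(\cdot)$, the total variation distance satisfies $\delta_\textrm{TV}(P,Q) = P(A) - Q(A) = p - q$. At the same time, coarse-graining both $P$ and $Q$ through the partition $\{A, A^c\}$ can only decrease the KL divergence: by the log-sum inequality (equivalently, convexity of $t \mapsto t\log t$ together with Jensen's inequality, applied separately to the masses inside $A$ and inside $A^c$), we get
\[
D_{KL}(P \parallel Q) \;\geq\; p \log \frac{p}{q} + (1-p)\log \frac{1-p}{1-q} \;\triangleq\; d(p \parallel q).
\]
Hence it suffices to prove the Bernoulli bound $d(p \parallel q) \geq 2(p-q)^2$ for all $p,q \in [0,1]$.

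For that, I would fix $p$ and define $g(q) \triangleq d(p \parallel q) - 2(p-q)^2$, aiming to show $g(q) \geq 0$ with equality at $q = p$. Differentiating and combining the two logarithmic terms over the common denominator $q(1-q)$ gives
\[
g'(q) \;=\; \frac{q-p}{q(1-q)} + 4(p-q) \;=\; (q-p)\left( \frac{1}{q(1-q)} - 4 \right).
\]
Because $q(1-q) \leq \tfrac14$ on $[0,1]$, the bracketed factor is nonnegative, so $g'(q)$ has the same sign as $q-p$: $g$ is decreasing on $(0,p)$ and increasing on $(p,1)$, and therefore attains its minimum value $g(p)=0$ at $q=p$. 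Thus $g \geq 0$ everywhere, which is the desired Bernoulli inequality. Chaining the two steps yields $D_{KL}(P \parallel Q) \geq d(p \parallel q) \geq 2(p-q)^2 = 2\,\delta_\textrm{TV}(P,Q)^2$, i.e. $\delta_\textrm{TV}(P,Q) \leq \sqrt{\tfrac12 D_{KL}(P \parallel Q)}$.

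Since this is a textbook result, there is no real obstacle; the only points warranting care are the degenerate cases, which I would dispatch at the outset so that the calculus runs on the open interval: if $q \in \{0,1\}$ with $P$ not dominated accordingly then $D_{KL}(P\parallel Q) = \infty$ and the claim is trivial, and the cases $p \in \{0,1\}$ are checked directly. One could alternatively simply cite a standard reference for this lemma.
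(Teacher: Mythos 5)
Your proposal is correct, but it takes a different route from the paper in a trivial sense: the paper does not prove this lemma at all---its ``proof'' is a one-line deferral to \citep{pinsker1964information,kl_divergence_inequality}, which is exactly the fallback you mention in your last sentence. Your self-contained argument is the canonical textbook proof and checks out in every step: the identity $\delta_\textrm{TV}(P,Q)=P(A)-Q(A)$ for $A=\{x:P(x)\ge Q(x)\}$, the coarse-graining (log-sum/data-processing) reduction $D_{KL}(P\parallel Q)\ge d(p\parallel q)$ for the induced Bernoulli pair $p=P(A)$, $q=Q(A)$, and the single-variable analysis of $g(q)=d(p\parallel q)-2(p-q)^2$, whose derivative $g'(q)=(q-p)\left(\tfrac{1}{q(1-q)}-4\right)$ is indeed sign-matched to $q-p$ because $q(1-q)\le\tfrac14$, giving the minimum $g(p)=0$ and hence $d(p\parallel q)\ge 2(p-q)^2=2\,\delta_\textrm{TV}(P,Q)^2$. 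Your attention to the degenerate cases ($q\in\{0,1\}$ forcing $D_{KL}=\infty$ unless trivial, and $p\in\{0,1\}$ checked directly) closes the only loose ends. What your route buys is a fully self-contained appendix with no external dependence, at the cost of a page of standard calculation; the paper's citation route buys brevity, which is a reasonable choice for a classical result that is used here only as a black box to convert small KL divergence into small total variation.
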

\begin{proof}
Refer to~\citep{pinsker1964information,kl_divergence_inequality} for a detailed proof.
\end{proof}

\begin{lemma}[Low loss implies separation, transductive case] \label{lemma:special_case}
Assume that all \known{} examples in $\Du$ are also in $\Dtr$, and that $\mathcal{D}_{in} \cap \mathcal{D}_{out} = \varnothing$.
Let $D^{test}_{in} = \{x \in D^{test}: x \sim \mathcal{D}_{in}\} (= D^{train})$ and $D^{test}_{out} = \{x \in D^{test}: x \sim \mathcal{D}_{out}\} = D^{test} \SLASH D^{train}$. 
Let $\mathcal{L}_0$ be the lowest achievable loss for the objective~\cref{eq:objective} with $\lambda > 0$.
Then there exists $\epsilon > 0$ such that $\mathcal{L}(\theta) - \mathcal{L}_0 < \epsilon$ implies the following relationship between the max probabilities holds:
\begin{align}
\min_{x \in D^{test}_{in}} \MSP(f_{\theta}(x))
> \max_{x \in D^{test}_{out}} \MSP(f_{\theta}(x))
\end{align}
\end{lemma}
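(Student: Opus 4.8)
The plan is to exploit the exact characterization of the loss minimizer from \cref{prop:minimum}: any parameter setting $\theta$ with loss close to $\mathcal{L}_0$ must produce predictive distributions close (in some suitable sense) to the per-point optimum, which is $p_\lambda = \frac{p + \lambda \mathbf{1}/C}{1+\lambda}$ on ID points of $D^{train}$ and the uniform distribution $\mathcal{U} = \mathbf{1}/C$ on the OOD points of $D^{test}_{out}$. Since $D^{test}_{in} = D^{train}$ by the transductive assumption, both sets of points appear explicitly in the objective, so their contributions to $\mathcal{L}(\theta) - \mathcal{L}_0$ are individually controlled. The key quantitative gap is that $\MSP(p_\lambda) \geq \frac{1/C + \lambda/C}{1+\lambda} $\dots\ more usefully, since each ID input has a single ground-truth label (no label noise), $p$ is a point mass, so $\MSP(p_\lambda) = \frac{1 + \lambda/C}{1+\lambda}$, which is strictly greater than $\MSP(\mathcal{U}) = 1/C$ by a fixed margin $\gamma \triangleq \frac{1+\lambda/C}{1+\lambda} - \frac1C > 0$ depending only on $\lambda$ and $C$.

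First I would make the approximation-in-loss-implies-approximation-in-prediction step precise. For a fixed input $x$ with target distribution $q$ (either $p$ on ID points or $\mathcal{U}$ on OOD points), the map $s \mapsto D_{KL}(q \parallel s)$ plus (on ID points) the extra cross-entropy term is strictly convex with a unique minimizer, so a loss excess of at most $\epsilon$ at that point forces $D_{KL}$ of the optimal target distribution from the predicted distribution to be $O(\epsilon)$; then \cref{lemma:pinsker} converts this into a total-variation bound $\delta_{TV} \leq \sqrt{\epsilon/2}$ (up to constants absorbing $\lambda$ and the finite dataset size, since the empirical expectation puts mass at least $1/|D^{test}|$ on each point). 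A total-variation bound controls $|\MSP(f_\theta(x)) - \MSP(\text{target})|$ since $\MSP$ is $1$-Lipschitz in $\ell_\infty$ and hence in TV. So for every $x \in D^{test}_{in}$ we get $\MSP(f_\theta(x)) \geq \MSP(p_\lambda) - c\sqrt{\epsilon}$ and for every $x \in D^{test}_{out}$ we get $\MSP(f_\theta(x)) \leq \MSP(\mathcal{U}) + c\sqrt{\epsilon} = 1/C + c\sqrt{\epsilon}$.

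Finally I would choose $\epsilon$ small enough that $2c\sqrt{\epsilon} < \gamma$, which yields
\begin{align}
\min_{x \in D^{test}_{in}} \MSP(f_\theta(x)) \;\geq\; \MSP(p_\lambda) - c\sqrt{\epsilon} \;>\; \tfrac1C + c\sqrt{\epsilon} \;\geq\; \max_{x \in D^{test}_{out}} \MSP(f_\theta(x)),
\end{align}
which is exactly the claimed separation. Note that $\delta$ does not appear in this transductive version ($D^\delta$-neighborhoods only become relevant in the non-transductive extension where smoothness is used to transfer the bound to nearby unseen points), so here the statement is really just about the finite sets $D^{test}_{in}, D^{test}_{out}$.

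The main obstacle I anticipate is handling the finiteness and weighting of the empirical objective cleanly: because the loss is an average over $|D^{test}|$ points, a small \emph{total} excess loss only guarantees a small \emph{per-point} excess after multiplying by $|D^{test}|$, and one must be careful that the constants $c, \gamma$ depend only on $\lambda, C$ (and harmlessly on $|D^{test}|$) and not on $\theta$; a secondary subtlety is that the cross-entropy term on ID points is unbounded as $s$ approaches the simplex boundary, so the strict-convexity/quadratic-lower-bound argument must be localized to a neighborhood of the minimizer $p_\lambda$, which sits in the interior of the simplex precisely because $\lambda > 0$ — this is where the hypothesis $\lambda > 0$ is essential.
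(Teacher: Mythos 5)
Your proposal is correct and follows essentially the same route as the paper's proof: decompose the excess loss into per-point KL divergences to the optima $p_\lambda$ (ID) and the uniform distribution (OOD), convert to total variation via Pinsker's inequality, use the $1$-Lipschitzness of $\MSP$, and pick $\epsilon$ below the fixed margin $\MSP(p_\lambda) - 1/C$ determined by $\lambda$ and the number of classes (with the same $N$-factor bookkeeping for the empirical average). The only cosmetic difference is that the paper obtains the per-point KL control directly from the identity $\mathcal{L}(\theta)-\mathcal{L}_0 = \E[D_{KL}(p_\lambda \parallel f_\theta)] + \E[D_{KL}(U \parallel f_\theta)]$ rather than via a strict-convexity argument.
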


\begin{proof}
Since the training set is a subset of the unlabeled set, we can rearrange the objective \cref{eq:objective} as
\begin{align}
    \mathcal{L}(\theta) = \E_{(x, y) \in D^{test}_{in}} \left[\mathcal{L}_\textrm{xent}(f_{\theta}(x), y) + \lambda \mathcal{L}_\textrm{con}(f_{\theta}(x))\right] + \E_{x \in D^{test}_{out}} \left[\lambda \mathcal{L}_\textrm{con}(f_{\theta}(x'))\right].
\end{align}
Note that the first term is the cross-entropy between $f_\theta(x)$ and $p_\lambda \triangleq \frac{p + \lambda \frac{\mathbf{1}}{C}}{1 + \lambda}$, and the second term is the cross-entropy between $f_\theta(x)$ and the uniform distribution $U$.
We now rearrange to see that
\begin{align}
    \mathcal{L}(\theta) - \mathcal{L}_0 
    = \E_{(x, y) \in D^{test}_{in}} \left[ D_{KL}(p_\lambda \parallel f_\theta(x)) \right] 
    + \E_{x \in D^{test}_{out}} \left[ D_{KL}(U \parallel f_\theta(x)) \right],
\end{align}
where the lowest achievable loss $\mathcal{L}_0$ is obtained by setting $f_\theta(x) = p_\lambda$ for \known{} inputs and $f_\theta(x) = U$ for \unknown{} inputs.
Because $\mathcal{L} - \mathcal{L}_0 < \epsilon$, we know that $D_{KL}(p_\lambda \parallel f_\theta(x)) < N\epsilon$ for all \known{} inputs and $D_{KL}(U \parallel f_\theta(x)) < N\epsilon$ for all \unknown{} inputs.

By~\cref{lemma:pinsker}, we have for \known{} input $x$
\begin{align}
\delta_\textrm{TV} (p_\lambda, f_\theta(x)) \leq \sqrt{\frac{1}{2} D_{KL}(p_\lambda \parallel f_\theta(x))} = \sqrt{\frac{N\epsilon}{2}}.
\end{align}
By the triangle inequality and because $\MSP$ is $1$-Lipschitz with respect to output probabilities, we have for all \known{} inputs
\begin{align}
\MSP(f_\theta(x)) \geq \MSP(p_\lambda) - \sqrt{\frac{N\epsilon}{2}} = \frac{1}{1 + \lambda} + \frac{\lambda}{1 + \lambda} \frac{1}{C} - \sqrt{\frac{N\epsilon}{2}}.
\end{align}

Similarly, by~\cref{lemma:pinsker}, we have for \unknown{} input $x$
\begin{align}
\delta_\textrm{TV} (U, f_\theta(x)) \leq \sqrt{\frac{1}{2} D_{KL}(U \parallel f_\theta(x))} = \sqrt{\frac{N\epsilon}{2}}.
\end{align}
By the triangle inequality and because $\MSP$ is $1$-Lipschitz with respect to output probabilities, we have for all \unknown{} inputs
\begin{align}
\MSP(f_\theta(x)) \leq \MSP(U) + \sqrt{\frac{N\epsilon}{2}} = \frac{1}{C} + \sqrt{\frac{N\epsilon}{2}}.
\end{align}

Letting $\epsilon < \frac{1}{2N} \left( \frac{C-1}{(1 + \lambda)C}\right)^2$, we have
\begin{align}
    \min_{x \in D^{test}_{in}} \MSP(f_{\theta}(x))
    \geq \frac{1}{1 + \lambda} + \frac{\lambda}{1 + \lambda} \frac{1}{C} - \sqrt{\frac{N\epsilon}{2}}
    > \frac{1}{C} + \sqrt{\frac{N\epsilon}{2}}
    \geq \max_{x \in D^{test}_{out}} \MSP(f_{\theta}(x)).
\end{align}
\end{proof}

\cref{lemma:special_case} shows that in the transductive setting, minimizing our objective $L(\theta)$ \cref{eq:objective} below some threshold provably leads to a separation between \known{} and \unknown{} examples in terms of the maximum predicted probability for each example.

\subsection{More general setting}

We prove a more general version of the claim in~\cref{lemma:special_case} which applies to datapoints outside of the given dataset $D^{test}$.
Our theorem below depends only on a mild smoothness assumption on the learned function.

\begin{proposition}[Low loss implies separation] \label{prop:main_result}
Assume that all \known{} examples in $\Du$ are also in $\Dtr$, and that $\mathcal{D}_{in} \cap \mathcal{D}_{out} = \varnothing$.
Let $D^{test}_{in} = \{x \in D^{test}: x \sim \mathcal{D}_{in}\} (= D^{train})$ and $D^{test}_{out} = \{x \in D^{test}: x \sim \mathcal{D}_{out}\} = D^{test} \SLASH D^{train}$. 
Assume that the classifier $f_\theta : \mathcal{X} \rightarrow [0, 1]^C$ is $K$-Lipschitz continuous for all $\theta$, i.e., for all $x, x' \in  \mathcal{X}$, $||f_\theta(x) - f_\theta(x')||_\infty \leq Kd(x, x')$ for some constant $K>0$.
Let $\mathcal{L}_0$ be the lowest achievable loss for the objective~\cref{eq:objective} with $\lambda > 0$.
For $\delta>0$, denote the union of $\delta$-balls around the \known{} and \unknown{} datapoints as 
\begin{align}
D^\delta_{in} \triangleq \{x | \exists x' \in D^{test}_{in} \; s.t. \; d(x, x') < \delta\}, \quad 
D^\delta_{out} \triangleq \{x | \exists x' \in D^{test}_{out} \; s.t. \; d(x, x') < \delta\}.
\end{align}
Then there exists $\epsilon, \delta > 0$ such that $\mathcal{L}(\theta) - \mathcal{L}_0 < \epsilon$ implies the following relationship between the max probabilities holds:
\begin{align}
\inf_{x \in D^\delta_{in}} \MSP(f_{\theta}(x))
> \sup_{x \in D^\delta_{out}} \MSP(f_{\theta}(x))
\end{align}
\end{proposition}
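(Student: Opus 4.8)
The plan is to reduce \cref{prop:main_result} to \cref{lemma:special_case} by using the $K$-Lipschitz assumption to transfer the $\MSP$ bounds from the finite set $D^{test}$ to its $\delta$-neighborhood $D^\delta_{in} \cup D^\delta_{out}$. The key observation is that the loss $\mathcal{L}(\theta)$ only constrains $f_\theta$ on the points of $D^{test}$, but Lipschitz continuity forces $f_\theta$ to vary slowly nearby: for any $x \in D^\delta_{in}$ there is an $x' \in D^{test}_{in}$ with $d(x,x') < \delta$, so $\|f_\theta(x) - f_\theta(x')\|_\infty \leq K\delta$, and since $\MSP$ is $1$-Lipschitz with respect to the $\ell_\infty$ norm on probability vectors, $|\MSP(f_\theta(x)) - \MSP(f_\theta(x'))| \leq K\delta$.

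First I would re-run the argument of \cref{lemma:special_case}: from $\mathcal{L}(\theta) - \mathcal{L}_0 < \epsilon$, the decomposition into per-example KL terms plus Pinsker's inequality (\cref{lemma:pinsker}) gives, for every $x' \in D^{test}_{in}$, $\MSP(f_\theta(x')) \geq \MSP(p_\lambda) - \sqrt{N\epsilon/2}$, and for every $x' \in D^{test}_{out}$, $\MSP(f_\theta(x')) \leq \MSP(U) + \sqrt{N\epsilon/2} = \tfrac1C + \sqrt{N\epsilon/2}$. Then I would add the Lipschitz slack: for $x \in D^\delta_{in}$,
\begin{align}
\MSP(f_\theta(x)) \;\geq\; \MSP(p_\lambda) - \sqrt{\tfrac{N\epsilon}{2}} - K\delta,
\end{align}
and for $x \in D^\delta_{out}$,
\begin{align}
\MSP(f_\theta(x)) \;\leq\; \tfrac1C + \sqrt{\tfrac{N\epsilon}{2}} + K\delta.
\end{align}
Taking the infimum on the left and supremum on the right preserves these bounds. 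Since $\MSP(p_\lambda) = \tfrac{1}{1+\lambda} + \tfrac{\lambda}{1+\lambda}\tfrac1C > \tfrac1C$ whenever $\lambda$ is finite, there is a strictly positive gap $g \triangleq \MSP(p_\lambda) - \tfrac1C = \tfrac{C-1}{(1+\lambda)C} > 0$; it then suffices to choose $\epsilon$ and $\delta$ small enough that $2\sqrt{N\epsilon/2} + 2K\delta < g$, e.g. $\delta < \tfrac{g}{4K}$ and $\epsilon < \tfrac{1}{2N}(g/2)^2$, which yields
\begin{align}
\inf_{x \in D^\delta_{in}} \MSP(f_\theta(x)) \;>\; \sup_{x \in D^\delta_{out}} \MSP(f_\theta(x)).
\end{align}

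One technical point I would be careful about is that $D^\delta_{in}$ and $D^\delta_{out}$ must remain disjoint for the conclusion to be meaningful; under the hypothesis $\mathcal{D}_{in} \cap \mathcal{D}_{out} = \varnothing$ the finite sets $D^{test}_{in}, D^{test}_{out}$ have a positive minimum pairwise distance $r > 0$, so I would additionally impose $\delta < r/2$ so that the $\delta$-balls do not overlap. I do not anticipate a serious obstacle here — the proof is essentially \cref{lemma:special_case} with an extra $K\delta$ term absorbed into the same gap — so the main thing to get right is bookkeeping: choosing the constants $\epsilon, \delta$ in the correct order (first fix $g$ and $r$ from the problem data, then pick $\delta$, then pick $\epsilon$) and confirming that the $1$-Lipschitzness of $\MSP$ is stated with respect to the $\ell_\infty$ norm that appears in the Lipschitz hypothesis on $f_\theta$.
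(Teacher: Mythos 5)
Your proof is correct and follows essentially the same route as the paper: invoke the transductive bounds on the test points (via Pinsker's inequality as in \cref{lemma:special_case}), then transfer them to the $\delta$-neighborhoods using the $K$-Lipschitzness of $f_\theta$ together with the $1$-Lipschitzness of $\MSP$, and choose $\delta$ (and $\epsilon$) small against the confidence gap. The only difference is bookkeeping: the paper fixes $\epsilon$ from \cref{lemma:special_case} and sets $\delta < \Delta/(2K)$ for the resulting (in principle $\theta$-dependent) gap $\Delta$, whereas you re-derive explicit uniform bounds $\MSP(p_\lambda) - \sqrt{N\epsilon/2}$ and $\tfrac{1}{C} + \sqrt{N\epsilon/2}$ so that $\epsilon$ and $\delta$ are chosen once from the problem data, which is a slightly cleaner way of organizing the same argument.
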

\begin{proof}
By \cref{lemma:special_case}, we have for some $\epsilon$, $\min_{x \in D^{test}_{in}} \MSP(f_{\theta}(x)) > \max_{x \in D^{test}_{out}} \MSP(f_{\theta}(x))$. 
Fix $\epsilon$ and denote the difference of these two terms as
\begin{align}
\min_{x \in D^{test}_{in}} \MSP(f_{\theta}(x)) - \max_{x \in D^{test}_{out}} \MSP(f_{\theta}(x)) = \Delta.
\end{align}
For any $x^\delta_\textrm{in} \in D^\delta_{in}$ and $x^\delta_\textrm{out} \in D^\delta_{out}$, let $x_\textrm{in} \in D^{test}_{in}, x_\textrm{out} \in D^{test}_{out}$ satisfy $d(x^\delta_\textrm{in}, x_\textrm{in}) < \delta$ and $d(x^\delta_\textrm{out}, x_\textrm{out}) < \delta$.
By the $K$-Lipschitz property, we have
\begin{align}
    \MSP(f_{\theta}(x^\delta_\textrm{in})) \geq \MSP(f_{\theta}(x_\textrm{in})) - K \delta, \quad
    \MSP(f_{\theta}(x^\delta_\textrm{out})) \leq \MSP(f_{\theta}(x_\textrm{out})) + K \delta.
\end{align}
Setting $\delta < \frac{\Delta}{2K}$, we have
\begin{align}
    \MSP(f_{\theta}(x^\delta_\textrm{in})) \geq \MSP(f_{\theta}(x_\textrm{in})) - K \delta > \MSP(f_{\theta}(x_\textrm{out})) + K \delta \geq \MSP(f_{\theta}(x^\delta_\textrm{out})).
\end{align}
Since the choice of $x^\delta_\textrm{in}$ and $x^\delta_\textrm{out}$ was arbitrary, the equation above holds for all datapoints inside each $\delta$-ball.
Therefore, we have
\begin{align}
\inf_{x \in D^\delta_{in}} \MSP(f_{\theta}(x))
> \sup_{x \in D^\delta_{out}} \MSP(f_{\theta}(x)).
\end{align}
\end{proof}

\section{Metrics} \label{appendix:metrics}

\subsection{OOD Detection} \label{appendix:metrics:ood}

We first define precision, recall, true positive rate and false positive rate.
Let TP and FP denote the number of examples correctly and incorrectly classified as positive, respectively.
Similarly, let TN and FN denote the number of examples correctly and incorrectly classified as negative, respectively.

\textit{Precision} is defined as the fraction of correctly classified positive examples, among all examples that are classified as positive.
$$\text{Precision} = \frac{\text{TP}}{\text{TP} + \text{FP}}$$

\textit{Recall} (also referred to as \textit{true positive rate (TPR)}) is defined as the fraction of correctly classified positive examples among all positive examples.
$$\text{Recall} = \frac{\text{TP}}{\text{TP} + \text{FN}}$$

\textit{False positive rate (FPR)} is defined as
$$\text{FPR} = \frac{\text{FP}}{\text{FP} + \text{TN}}.$$

We use the following metrics to evaluate OOD detection performance.
\begin{enumerate}
    \item \textbf{AUROC}: The receiver operating characteristic (ROC) curve is obtained by plotting the true positive rate vs the false positive rate at different thresholds. AUROC is the area under the ROC curve. AUROC is always between 0 and 1; the AUROC of a random and a perfect classifier is 0.5 and 1.0 respectively. The higher the AUROC, the better.

    \item \textbf{AUPR}: The precision-recall (PR) curve is obtained by plotting the precision and recall of a classifier at different threshold settings. AUPR is the area under this PR curve. Similar to AUROC, higher AUPR implies a better classifier. See that AUPR would be different based on whether we label the ID examples as positive or vice-versa. In this context, AUPR-In and AUPR-Out refers to AUPR calculated using the convention of denoting ID and OOD examples as positive respectively. If not mentioned otherwise, AUPR in this paper refers to AUPR-Out.

    \item \textbf{FPR@TPR}: This metric represents the false positive rate of the classifier, when the decision threshold is chosen such that true positive rate is TPR$\%$. 
    Typically, we report FPR@95 in our paper, following prior work such as~\citet{maxlogit}.
\end{enumerate}

\subsection{Selective Classification} \label{appendix:metrics:sc}

\begin{enumerate}
\item \textbf{ECE}: The expected calibration error (ECE) measures the calibration of the classifier. It is calculated as the expected difference between confidence and accuracy, i.e., $\E[|p(\hat{y} = y \mid \hat{p} = p) - p|]$.
\item \textbf{Acc@Cov}: This metric measures the average accuracy of a fixed fraction of most confident datapoints. Specifically, we calculate the average accuracy on the $\text{Cov}\%$ datapoints with highest confidence.
\item \textbf{Cov@Acc}: This metric measures size of the largest subset that achieves a given average accuracy. Specifically, we calculate the largest fraction of data for which selective accuracy is above $\text{Acc}$.
\item \textbf{AUC}: The area under the curve of selective classification accuracy vs coverage. 
\end{enumerate}

\section{Variants of \ours{} for OOD Detection} \label{app:dcm-variants}

For OOD detection, we experiment with three different scoring methods on top of \ours{}. Concretely, we denote the input space as $\mathcal{X}$ and assume that our ID distribution has $C$ classes. Further, let $f: \mathcal{X} \rightarrow \mathbb{R}^C$ represent our model, and $S: \mathcal{X} \rightarrow \mathbb{R}$ represent a score function. Then OOD detection becomes a binary classification problem, where we use the convention that OOD examples are positive and ID examples are negative. During test time, we would choose a threshold $\gamma$ and for $x \in \mathcal{X}$, we say $x$ is OOD if $S(x) \geq \gamma$, and $x$ is classified as ID otherwise. We experiment with three commonly used choices for the score function, $S$.

\begin{enumerate}
    \item \textbf{Maximum softmax score (MSP)~\citep{hendrycks2016baseline,vaze2022openset}}: For class $i \in \{1, \ldots, C\}$, the softmax score, $S_{\text{soft}}^i(x)$ is defined as:
    $$S_{\text{soft}}^i(x) = \frac{\exp{(f^i(x))}}{\sum_{j = 1}^C \exp{(f^j(x))}}$$
    The MSP score is defined as:
    $$S_{\text{MSP}}(x) = - \max_{i \in \{1, \ldots, C\}} S_{\text{soft}}^i(x)$$
    Here the negative signs comes due to our convention of labeling OOD examples as positive.

    \item \textbf{MaxLogit~\citep{maxlogit,vaze2022openset}}: Instead of using the softmax probabilities, we use the maximum of the model's un-normalized outputs (logits) as the score. Formally,
    $$S_{\text{maxlogit}}(x) = - \max_{i \in {1, \ldots, C}} f^i(x)$$

    \item \textbf{Energy~\citep{liu2020energy}}: The energy score is defined as follows:
    $$S_{\text{energy}}(x) = -\log{\left(\sum_{i = 1}^C e^{f^i(x)}\right)}$$
\end{enumerate}

We see in our experiments that all three scores, when combined with \ours{} framework, performs similarly, with Energy score giving slightly better performance.

\section{Detailed OOD Detection Results in the Regular Setting} \label{appendix:regularSetting}

\subsection{Baselines}

We compare \ours{} against several prior OOD detection methods.

\begin{itemize}
    \item \textbf{MSP}~\citep{hendrycks2016baseline,vaze2022openset}: A simple baseline for OOD detection, where we take a network trained on ID samples and threshold on the network's maximum softmax probability prediction on a test example to separate ID and OOD examples.
    \item \textbf{Max Logit}~\citep{maxlogit,vaze2022openset}: Similar to MSP, but instead of using normalized softmax probabilities, this uses the maximum of the output logits to perform OOD detection. 
    \item \textbf{ODIN}~\citep{liang2017odin}: This method uses temperature scaling and adding small noise perturbations to the inputs to increase the separation of softmax probability between ID and OOD examples. 
    \item \textbf{Mahalanobis}~\citep{lee2018mahalanobis}: This method takes a pretrained softmax classifier and uses the mahalanobis distance in the embedding space to separate ID examples from OOD examples.
    \item \textbf{Energy Score}~\citep{liu2020energy}: Instead of the softmax probability, this method uses energy scores to separate ID and OOD examples.
    \item \textbf{Outlier Exposure}~\citep{hendrycks2018deep}: Leverages examples from a pseudo-OOD distribution, i.e., a distribution different from the training distribution but not necessarily the OOD distribution seen at test-time. Fine-tunes a pre-trained network with a combined objective of (1) cross entropy loss on the training examples, and (2) confidence minimization loss on the pseudo-OOD examples.
    \item \textbf{Energy Based Fine-Tuning}~\citep{liu2020energy}: Minimizes the energy-based confidence score on pseudo-OOD examples.
    \item \textbf{WOODS}~\citep{pmlr-v162-katz-samuels22a}: Leverages a ``wild'' dataset -- naturally comprising both in-distribution (ID) and OOD samples. Rather than using confidence minimization, WOODS formulates a constrained optimization problem to maximize the OOD detection rate while constraining classification error for ID data and OOD error rate for ID examples. 
    
    \revision{We note that our experimental setup differs from that of WOODS. We closely follow the setup established by \citep{hendrycks2018deep}. First, unlike DCM, the baseline approach in \citet{pmlr-v162-katz-samuels22a} does not use random augmentations on the unlabeled set to prevent overfitting. Second, at each gradient update, WOODS computes the combined objective using a 10:1 ratio of ID:uncertainty set, whereas we use a 1:1 sampling ratio. Third, WOODS uses a mixed ID-OOD validation set for hyperparameter tuning and early stopping, while DCM and Outlier Exposure \citep{hendrycks2018deep} only use an ID validation set.}
\end{itemize}

\subsection{ID Datasets}

We use the following ID datasets from common benchmarks:
\begin{itemize}
    \item \textbf{CIFAR-10}~\citep{cifar10}: CIFAR-10 contains 50,000 train and 10,000 test images, separated into 10 disjoint classes. The images have 3 channels and are of size 32 x 32. The classes are similar but disjoint from CIFAR-100.

    \item \textbf{CIFAR-100}~\citep{cifar100}: Similar to CIFAR-10 and contains 50,000 train and 10,000 test images. However, the images are now separated into 100 fine-grained and 20 coarse (super) classes. Each super-class contains 5 fine-grained classes.
\end{itemize}

\subsection{OOD Datasets}

In addition to CIFAR-10 and CIFAR-100, we follow prior work~\citep{tajwar2021no, hendrycks2016baseline, liu2020energy} and use the following benchmark OOD detection dataset:

\begin{itemize}
    \item \textbf{SVHN}~\citep{netzer11svhn}: SVHN contains images of the 10 digits in English which represent the 10 classes in the dataset. The dataset contains 73,257 train and 26,032 test images. The original dataset also contains extra training images that we do not use for our experiments. Each image in the dataset has 3 channels and has shape 32 x 32.

    \item \textbf{TinyImageNet (resized)}~\citep{tinyimagenet, deng_09, liang2017odin}: TinyImageNet contains 10,000 test images divided into 200 classes and is a subset of the larger ImageNet~\citep{deng_09} dataset. The original dataset contains images of shape 64 x 64 and~\citet{liang2017odin} further creates a dataset by randomly cropping and resizing the images to shape 32 x 32. We use the resized dataset here for our experiments. 

    \item \textbf{LSUN (resized)}~\citep{yu_15, liang2017odin}: The \textbf{L}arge-scale \textbf{S}cene \textbf{UN}derstanding dataset (LSUN) contains 10,000 test images divided into 10 classes. Similar to the TinyImageNet dataset above,~\citet{liang2017odin} creates a dataset by randomly cropping and resizing the images to shape 32 x 32. We use the resized dataset here for our experiments. 
    
    \item \textbf{iSUN}~\citep{xu2015iSUN, liang2017odin}: iSUN contains 6,000 training, 926 validation and 2,000 test images. We use the same dataset used by~\citet{liang2017odin}. 
\end{itemize}

Instructions on how to download the TinyImageNet, LSUN and iSUN datasets can be found here: \url{https://github.com/ShiyuLiang/odin-pytorch}

\subsection{Training Details} \label{app:training_details_ood_detection}

\begin{itemize}
    \item \textbf{Architecture}: For all experiments in this section, we use a WideResNet-40-2~\citep{wideresnet} network. 
    
    \item \textbf{Hyper-parameters}: Outlier exposure and energy based fine-tuning uses 80 Million Tiny Images~\citep{80millionTinyImages} as the pseudo-OOD dataset. This dataset has been withdrawn because it contains derogatory terms as categories. Thus, for fair comparison, we use the pre-trained weights provided by these papers' authors for our experiments. For MSP, ODIN, Mahalanobis and energy score, we train our networks for 110 epochs with an initial learning rate of 0.1, weight decay of $5 \times 10^{-4}$, dropout 0.3 and batch size 128. ODIN and Mahalanobis require a small OOD validation set to tune hyper-parameters. Instead, we tune the hyper-parameters over the entire test set and report the best numbers, since we only want an upper bound on the performance of these methods. For ODIN, we try $T \in \{1, 10, 100, 1000\}$ and $\epsilon \in \{0.0, 0.0005, 0.001, 0.0014, 0.002, 0.0024, 0.005, 0.01, 0.05, 0.1, 0.2\}$ as our hyper-parameter search grid, and for Mahalanobis, we use the same hyper-parameter grid for $\epsilon$. 
    For the WOODS baseline, we use all default hyperparameters, except that our setting uses an unlabeled auxiliary set with OOD proportion $\pi = 0.2$. 
    We train with a learning rate of 0.001 and batch size of 128 for 100 epochs. 
    We use an in-distribution penalty of 1.0, out-of-distribution penalty of 1.0, classification penalty of 1.0, false alarm cutoff of 0.05, learning rate for updating lambda of 1.0, tolerance for the loss constraint of 2.0, multiplicative factor of 1.5 for the penalty method, and constraint tolerance of 0.0.
    For our method, we pre-train our network for 100 epochs with the same setup, and fine-tune the network with our modified loss objective for 10 epochs using the same setting, except we use a initial learning rate of 0.001, batch size 32 for ID train set and 64 for the uncertainty dataset. During fine-tuning, we use 27,000 images per epoch, 9,000 of which are labeled ID train examples and the rest are from the uncertainty dataset. Finally, we use $\lambda = 0.5$ for all experiments, as in ~\cite{hendrycks2018deep}, without any additional hyper-parameter tuning.

    \item \textbf{Dataset train/val split}: For all methods except outlier exposure and energy based fine-tuning, we use 40,000 out of the 50,000 train examples for training and 10,000 train examples for validation. Note that outlier exposure and energy based fine-tuning uses weights pre-trained with all 50,000 training examples, which puts our method in disadvantage.

    \item \textbf{Uncertainty and test dataset construction}: For our method, we use two disjoint sets of 6,000 images as the uncertainty dataset and test set. Each set contains 5,000 ID examples and 1,000 OOD examples.

    \item \textbf{Augmentations}: For all methods, we use the same standard random flip and random crop augmentations during training/fine-tuning.
\end{itemize}

\begin{table*}[t]
\centering
\resizebox{\textwidth}{!}{
\begin{tabular}{llcccccccc}
\toprule
\multirow{2}{*}{\shortstack[l]{ID Dataset / \\ Network}} 
& Method
& \multicolumn{2}{c}{SVHN}
& \multicolumn{2}{c}{TinyImageNet}
& \multicolumn{2}{c}{LSUN}
& \multicolumn{2}{c}{iSUN} \\
\cmidrule(lr){3-4} \cmidrule(lr){5-6} \cmidrule(lr){7-8} \cmidrule(lr){9-10}
& 
& AUROC ($\uparrow$) & FPR@95 ($\downarrow$)
& AUROC ($\uparrow$) & FPR@95 ($\downarrow$)
& AUROC ($\uparrow$) & FPR@95 ($\downarrow$)
& AUROC ($\uparrow$) & FPR@95 ($\downarrow$) \\
\midrule
\multirow{10}{*}{\shortstack[l]{CIFAR-10 \\ WRN-40-2}}
& MSP & 87.2 (5.6) & 43.4 (23.3) & 90.3 (1.4) & 32.8 (6.0) & 93.3 (0.9) & 21.3 (2.6) & 92.0 (1.3) & 25.9 (4.1) \\
& MaxLogit & 88.5 (2.4) & 42.1 (9.4) & 93.2 (1.2) & 27.3 (4.9) & 96.6 (0.8) & 14.1 (2.7) & 95.5 (0.8) & 18.4 (2.7) \\
& ODIN & 90.3 (2.1) & 41.1 (8.1) & 93.8 (0.7) & 27.6 (6.5) & 97.5 (0.9) & 10.9 (3.6) & 96.6 (0.7) & 16.8 (2.7) \\
& Mahalanobis & 97.3 (0.7) & 14.7 (4.4) & 91.2 (1.3) & 38.9 (4.6) & 92.1 (0.6) & 28.6 (0.8) & 93.7 (1.4) & 26.8 (5.4) \\
& Energy Score & 82.8 (10.5) & 59.7 (22.7) & 92.0 (2.8) & 34.0 (10.9) & 96.2 (1.2) & 16.0 (5.0) & 94.9 (1.9) & 23.2 (8.6) \\
& VOS & 90.8 (1.4) & 28.4 (7.3) & 93.4 (0.7) & 27.3 (2.7) & 97.0 (0.3) & 12.8 (1.6) & 96.0 (0.6) & 16.4 (2.3) \\
& WOODS & 99.5 (0.0) & 3.3 (0.3) & 99.2 (0.1) & 5.3 (0.9) & 99.3 (0.1) & 5.0 (0.5) & 99.5 (0.1) & 2.9 (0.4) \\
& Outlier Exposure & 98.5 & 4.8 & 97.4 & 13.0 & 99.1 & 3.7 & 99.1 & 5.0 \\
& Energy Fine-Tuning & 99.3 & 2.1 & 98.2 & 7.0 & 99.3 & 1.9 & 99.4 & 2.6 \\
& \ours{}-Softmax (ours) & \textbf{99.7 (0.1)} & 0.4 (0.3) & \textbf{99.3 (0.3)} & 2.6 (1.6) & \textbf{99.8 (0.1)} & 0.5 (0.4) & \textbf{99.7 (0.1)} & 0.6 (0.2) \\
& \ours{}-MaxLogit (ours) & \textbf{99.8 (0.1)} & 0.3 (0.1) & \textbf{99.5 (0.2)} & 1.9 (0.6) & \textbf{99.9 (0.1)} & \textbf{0.2 (0.1)} & \textbf{99.8 (0.1)} & 0.5 (0.2)\\
& \ours{}-Energy (ours) & \textbf{99.8 (0.1)} & \textbf{0.1 (0.1)} & \textbf{99.4 (0.2)} & \textbf{1.0 (0.8)} & \textbf{99.9 (0.1)} & \textbf{0.1 (0.1)} & \textbf{99.8 (0.1)} & \textbf{0.1 (0.1)} \\
\midrule
\multirow{5}{*}{\shortstack[l]{CIFAR-10 \\ ResNet-18}}
& Binary Classifier & 98.9 (0.2) & 1.3 (1.0) & 98.7 (0.6) & \textbf{1.8 (3.8)} & 99.0 (0.3) & \textbf{0.3 (0.6)} & 98.8 (0.8) & 1.6 (2.5) \\
& ERD & 99.3 (0.2) & 1.7 (1.2) & \textbf{99.3 (0.1)} & \textbf{1.7 (0.6)} & \textbf{99.7 (0.1)} & \textbf{0.2 (0.2)} & \textbf{99.7 (0.2)} & \textbf{0.5 (0.4)} \\
& \ours{}-Softmax (ours) & \textbf{99.5 (0.2)} & \textbf{1.0 (0.6)} & \textbf{99.3 (0.1)} & 4.1 (1.3) & \textbf{99.7 (0.1)} & 0.9 (0.3) & \textbf{99.5 (0.1)} & 1.7 (0.4) \\
& \ours{}-MaxLogit (ours) & \textbf{99.5 (0.1)} & \textbf{0.9 (0.3)} & \textbf{99.3 (0.1)} & 2.7 (0.8) & \textbf{99.8 (0.1)} & 0.8 (0.4) & 99.4 (0.1) & 1.5 (0.6) \\
& \ours{}-Energy (ours) & \textbf{99.5 (0.2)} & \textbf{0.6 (0.5)} & \textbf{99.3 (0.1)} & 3.2 (1.0) & \textbf{99.8 (0.1)} & \textbf{0.4 (0.1)} & \textbf{99.5 (0.1)} & 1.2 (0.3) \\
\midrule
\multirow{10}{*}{\shortstack[l]{CIFAR-100 \\ WRN-40-2}}
& MSP & 77.7 (1.4) & 58.0 (4.9) & 68.0 (3.2) & 77.0 (5.7) & 68.5 (1.5) & 75.6 (3.7) & 67.1 (2.4) & 77.6 (3.7) \\
& MaxLogit & 84.3 (2.8) & 43.2 (7.4) & 74.7 (5.5) & 72.7 (11.3) & 75.9 (4.8) & 67.3 (10.4) & 75.4 (4.5) & 70.9 (10.0) \\
& ODIN & 90.9 (2.0) & 30.7 (3.7) & 82.2 (4.9) & 62.3 (11.7) & 82.9 (5.0) & 57.1 (11.3) & 82.0 (2.6) & 61.7 (8.1) \\
& Mahalanobis & 92.7 (1.2) & 32.3 (6.0) & 91.8 (2.0) & 39.0 (7.2) & 92.2 (2.3) & 34.3 (10.0) & 89.6 (3.8) & 43.5 (10.2) \\
& Energy Score & 81.7 (2.4) & 51.3 (5.8) & 73.1 (4.3) & 73.5 (10.5) & 75.2 (4.9) & 70.0 (8.7) & 73.8 (3.8) & 72.0 (8.2) \\
& VOS & 85.1 (1.3) & 41.0 (1.9) & 78.2 (2.7) & 63.3 (4.9) & 80.1 (2.2) & 56.7 (7.0) & 79.1 (2.9) & 59.2 (7.3) \\
& WOODS & 98.6 (0.0) & 8.6 (0.3) & 97.5 (0.0) & 18.3 (0.3) & 98.4 (0.2) & 8.7 (1.3) & 97.5 (0.3) & 16.0 (2.1) \\
& Outlier Exposure & 88.2 & 40.4 & 75.7 & 71.6 & 81.4 & 59.1 & 79.2 & 66.4 \\
& Energy Fine-Tuning & 96.8 & 12.6 & 70.9 & 85.2 & 80.9 & 65.6 & 77.4 & 75.1 \\
& \ours{}-Softmax (ours) & \textbf{99.6 (0.1)} & \textbf{0.6 (0.7)} & 98.7 (0.3) & \textbf{5.9 (2.9)} & 99.5 (0.2) & 1.1 (1.0) & 99.1 (0.2) & 2.7 (1.9) \\
& \ours{}-MaxLogit (ours) & \textbf{99.6 (0.2)} & 0.8 (1.0) & \textbf{99.0 (0.2)} & \textbf{3.6 (2.9)} & \textbf{99.8 (0.1)} & \textbf{0.1 (0.1)} & \textbf{99.2 (0.3)} & 2.2 (2.6)  \\
& \ours{}-Energy (ours) & \textbf{99.7 (0.1)} & \textbf{0.3 (0.3)} & \textbf{99.0 (0.3)} & \textbf{3.5 (2.5)} & \textbf{99.7 (0.1)} & 0.5 (0.6) & \textbf{99.4 (0.2)} & \textbf{0.9 (0.6)} \\
\midrule
\multirow{5}{*}{\shortstack[l]{CIFAR-100 \\ ResNet-18}}
& Binary Classifier & 95.1 (6.8) & 25.8 (40.8) & \textbf{99.0 (0.7)} & \textbf{0.7 (0.6)} & 99.2 (0.4) & \textbf{0.0 (0.1)} & 98.3 (0.5) & 4.0 (5.3) \\
& ERD & 99.0 (0.1) & 2.3 (0.9) & \textbf{98.8 (0.3)} & 5.4 (1.9) & \textbf{99.5 (0.1)} & 0.8 (0.4) & \textbf{99.2 (0.1)} & \textbf{1.7 (0.9)} \\
& \ours{}-Softmax (ours) & 99.3 (0.1) & 2.5 (1.4) & \textbf{98.7 (0.3)} & 7.8 (2.5) & 99.4 (0.2) & 1.6 (1.5) & 98.8 (0.4) & 6.3 (3.3) \\
& \ours{}-MaxLogit (ours) & 99.2 (0.1) & 3.7 (0.6) & \textbf{98.8 (0.2)} & 6.4 (1.8) & \textbf{99.6 (0.1)} & 0.7 (0.3) & \textbf{99.2 (0.2)} & 3.1 (1.7) \\
& \ours{}-Energy (ours) & \textbf{99.5 (0.1)} & \textbf{1.0 (0.6)} & \textbf{99.0 (0.3)} & 4.9 (2.3) & \textbf{99.6 (0.2)} & 0.8 (0.7) & \textbf{99.2 (0.3)} & \textbf{2.4 (0.9)}\\
\bottomrule
\end{tabular}
}
\caption{
OOD detection performance of models trained on CIFAR-10 or CIFAR-100 and evaluated on four different OOD datasets.
We average metrics across 5 random seeds and show standard error in parentheses.
Pre-trained weights provided by the respective authors are used to reproduce outlier exposure and energy fine-tuning results, and hence those results do not have associated standard errors. This is done due to these methods using 80-million tiny images as their auxiliary dataset, which has since been withdrawn and hence these methods' performance cannot be reproduced for other random seeds.
}
\label{tab:ood_detailed}
\end{table*}

\begin{table}[]
\centering 
\resizebox{\textwidth}{!}{
\begin{tabular}{ll|ccccc}
    \toprule
    Setting & Method & FPR95 & FPR99 & AUROC & AUPR-In & AUPR-Out \\
     & & $\downarrow$ & $\downarrow$ & $\uparrow$ & $\uparrow$ & $\uparrow$ \\
    \midrule
    \multirow{8}{*}{\shortstack[l]{ID = CIFAR-100 \\ OOD = CIFAR-10}}  
    & MSP & 64.4 (1.4) & 80.5 (0.7) & 74.6 (0.7) & 93.9 (0.2) & 32.8 (1.7) \\
    & ODIN & 69.2 (2.4) & 86.6 (4.0) & 75.5 (0.9) & 93.7 (0.4) & 34.6 (0.9) \\
    & Mahalanobis & 87.7 (4.2) & 96.7 (0.6) & 59.1 (6.1) & 87.8 (2.4) & 20.4 (2.6) \\
    & Energy Score & 67.2 (3.2) & 86.6 (1.6) & 75.7 (0.9) & 93.8 (0.3) & 34.4 (1.0) \\
    & Outlier Exposure & 63.5 & 77.9 & 75.2 & 94.0 & 32.7\\
    & Energy Fine-Tuning & \textbf{57.8} & \textbf{74.6} & 77.3 & 94.7 & 34.3 \\
    & \ours{}-Softmax (ours) & \textbf{58.0 (1.7)} & 79.3 (2.4) & \textbf{80.8 (1.2)} & \textbf{95.3 (0.3)} & 44.3 (2.1) \\
    & \ours{}-Energy (ours) & \textbf{60.3 (2.8)} & 80.4 (1.6) & \textbf{81.0 (1.5)} & \textbf{95.3 (0.4)} & \textbf{47.6 (2.5)} \\
    \midrule
    \multirow{8}{*}{\shortstack[l]{ID = CIFAR-10 \\ OOD = CIFAR-100}}  
    & MSP & 45.7 (2.5) & 81.0 (5.6) & 86.8 (0.3) & 96.8 (0.1) & 53.4 (1.0) \\
    & ODIN & 54.8 (4.6) & 85.1 (3.8) & 87.0 (0.4) & 96.6 (0.2) & 59.9 (0.9) \\
    & Mahalanobis & 65.4 (2.2) & 85.0 (1.1) & 79.4 (1.0) & 94.7 (0.3) & 44.2 (2.1)\\
    & Energy Score & 59.6 (2.6) & 89.0 (1.4) & 86.2 (0.4) & 96.2 (0.2) & 59.2 (0.5) \\
    & Outlier Exposure & \textbf{28.3} & \textbf{57.9} & 93.1 & 98.5  & 76.5\\
    & Energy Fine-Tuning & 29.0 & 63.4 & \textbf{94.0} & \textbf{98.6} &  \textbf{81.6} \\
    & \ours{}-Softmax (ours) & 57.5 (6.1) & 90.0 (2.8) & 87.6 (0.7) & 96.5 (0.4) & 63.1 (0.7) \\
    & \ours{}-Energy (ours) & 60.4 (5.3) & 90.5 (2.6) & 87.0 (0.9) & 96.3 (0.4) & 64.3 (1.2) \\
    \bottomrule
\end{tabular}
}
\vspace{0.2cm}
\caption{OOD detection performance with a WideResNet-40-2 model on CIFAR-10 to CIFAR-100 and CIFAR-100 to CIFAR-10. Bold numbers represent superior results. Numbers in parenthesis represent the standard deviation over 5 seeds. $\downarrow$: lower is better, $\uparrow$: higher is better.}
\end{table}

\section{Semi-supervised novelty detection setting} \label{appendix:ssndSetting}

For the sake of fair comparison, we also compare our algorithm's performance to binary classifier and ERD~\citep{tifrea2022semi}. These methods leverage an uncertainty dataset that contains both ID and OOD examples drawn from the distribution that we will see during test-time. 

\begin{itemize}
    \item \textbf{ERD}: Generates an ensemble by fine-tuning an ID pre-trained network on a combined ID + uncertainty dataset (which is a mixture of ID and OOD examples and given one label for all examples). Uses an ID validation set to early stop, and then uses the disagreement score between the networks on the ensemble to separate ID and OOD examples.

    \item \textbf{Binary Classifier}: The approach learns to discriminate between labeled ID set and uncertainty ID-OOD mixture set, with regularizations to prevent the entire uncertainty dataset to be classified as OOD.
\end{itemize}

We use the same datasets as \cref{appendix:regularSetting}. 

\subsection{Architecture and training details}

\begin{itemize}
    \item \textbf{Architecture}: For all experiments in this section, we use a ResNet-18~\citep{resnet} network, same as~\citet{tifrea2022semi}.
    
    \item \textbf{Hyper-parameters}: For ERD and binary classifier, we use the hyper-parameters and learning rate schedule used by ~\citet{tifrea2022semi}. For ERD, we standardize the experiments by using ensemble size $= 3$ for all experiments. The ensemble models are initialized with weights pre-trained solely on the ID training set for 100 epochs, and then each is further trained for 10 epochs. For binary classifier, we train all the networks from scratch for 100 epochs with a learning rate schedule described by ~\citet{tifrea2022semi}. For our method, we use the same hyper-parameters as \cref{appendix:regularSetting}. 
\end{itemize}

We use the same dataset splits, augmentations, uncertainty and test datasets as \cref{appendix:regularSetting}.

\section{Near-OOD Detection Setting}\label{appendix:nearOOD}

\subsection{Architecture and training details}

\begin{itemize}
    \item \textbf{Datasets}: Similar to~\citet{tifrea2022semi}, we try two settings: (1) ID = first 5 classes of CIFAR-10, OOD = last 5 classes of CIFAR-100, (2) ID = first 50 classes of CIFAR-100, OOD = last 50 classes of CIFAR-100.
    
    \item \textbf{Dataset splits}: We use 20,000 train and 5,000 validation label-balanced images during training.

    \item \textbf{Uncertainty and test split construction}: We use two disjoint datasets of size 3,000 as uncertainty and test datasets. Each dataset contains 2,500 ID and 500 OOD examples.
\end{itemize}

We use the same architecture, hyper-parameters and augmentations, as \cref{appendix:ssndSetting}.

\section{Transductive OOD Detection Setting}\label{appendix:transductive}

In scenarios where examples similar to those encountered at test time are not available, we can use a modified version of \ours{} in which we use an uncertainty dataset consisting of the test set itself.
We expect this transductive variant to perform slightly worse since we end up directly minimizing confidence on ID test examples, in addition to the general absence of information from additional unlabeled data.
We assess the performance of \ours{} in this transductive setting.
In \cref{tab:ood_detection_transductive_cifar10} and \cref{tab:ood_detection_transductive_cifar100}, we compare the performance of \ours{} in this transductive setting to the regular setting.
While we observe a slight drop compared to the default~\ours{}, we still show competitive performance in the transductive setting compared to prior approaches, as shown in \cref{tab:ood_detailed}.

\begin{table*}[t]
\centering
\resizebox{\textwidth}{!}{
\begin{tabular}{llcccccccccc}
\toprule
\multirow{2}{*}{\shortstack[l]{ID Dataset / \\ Network}} 
& Method
& \multicolumn{2}{c}{SVHN}
& \multicolumn{2}{c}{LSUN (Crop)}
& \multicolumn{2}{c}{iSUN} 
& \multicolumn{2}{c}{Texture}
& \multicolumn{2}{c}{Places365}
\\
\cmidrule(lr){3-4} \cmidrule(lr){5-6} \cmidrule(lr){7-8} \cmidrule(lr){9-10} \cmidrule(lr){11-12}
& 
& AUROC ($\uparrow$) & FPR@95 ($\downarrow$) 
& AUROC ($\uparrow$) & FPR@95 ($\downarrow$) 
& AUROC ($\uparrow$) & FPR@95 ($\downarrow$) 
& AUROC ($\uparrow$) & FPR@95 ($\downarrow$) 
& AUROC ($\uparrow$) & FPR@95 ($\downarrow$)
\\
\midrule
\multirow{5}{*}{\shortstack[l]{CIFAR-10 \\ ResNet-18}}
& VOS & 96.37 & 15.69 & 93.82 & 27.64 & 94.87 & 30.42 & 93.68 & 32.68 & 91.78 & 37.95 \\ 
& NPOS & 97.64 & 5.61 & 97.52 & 4.08 & 94.92 & 14.13 & 94.67 & 8.39 & 91.35 & 18.57 \\ 
& DCM-Softmax & 99.7 (0.1) & 0.4 (0.3) & 98.6 (0.8) & 6.6 (3.0) & 99.7 (0.1) & 0.6 (0.2) & 97.1 (0.2) & 14.8 (0.3) & 92.4 (0.3) & 32.6 (2.1) \\
& DCM-MaxLogit & 99.8 (0.1) & 0.3 (0.1) & 98.7 (0.7) & 6.0 (3.0) & 99.8 (0.1) & 0.5 (0.2) & 97.1 (0.2) & 14.9 (0.4) & 92.5 (0.3) & 34.4 (2.1) \\
& DCM-Energy & 99.8 (0.1) & 0.1 (0.1) & 98.8 (0.7) &  5.3 (3.6) & 99.8 (0.1) & 0.1 (0.1) & 97.1 (0.2) & 16.1 (1.1) & 92.5 (0.3) & 35.6 (2.2) \\
\midrule
\multirow{6}{*}{\shortstack[l]{CIFAR-100 \\ ResNet-34}}
& VOS & 73.11 & 78.50 & 85.72 & 59.05 & 82.66 & 72.45 & 80.08 & 75.35 & 75.85 & 84.55 \\
& NPOS & 97.84 & 11.14 & 82.43 & 56.27 & 85.48 & 51.72 & 92.44 & 35.20 & 71.30 & 79.08 \\
& Dream-OOD & 87.01 & 58.75 & 95.23 & 24.25 & 99.73 & 1.10 & 88.82 & 46.60 & 79.94 & 70.85 \\
& DCM-Softmax & 99.3 (0.2) & 1.8 (0.9) & 98.6 (0.3) & 8.7 (1.9) & 99.3 (0.2) & 2.3 (1.4) & 88.5 (0.5) & 46.6 (2.7) & 78.6 (0.4) & 67.7 (2.3) \\
& DCM-MaxLogit & 99.3 (0.2) & 2.0 (1.1) & 98.7 (0.2) & 7.3 (1.9) & 99.3 (0.2) & 2.3 (1.4) & 88.8 (0.5) & 46.9 (3.0) & 78.6 (0.4) & 68.8 (1.8) \\
& DCM-Energy & 99.2 (0.2) & 2.2 (1.2) & 98.9 (0.2) & 5.2 (2.0) & 99.4 (0.2) & 1.9 (0.8) & 89.3 (0.6) & 48.6 (3.3) & 78.3 (0.6) & 68.9 (1.9) \\
\bottomrule
\end{tabular}
}
\caption{ \label{tab:ood_synthetic}
    Comparison to methods that use synthetic outliers to make the model better at OOD detection. The table reports results for CIFAR-10 and CIFAR-100 as ID datasets. The results for CIFAR-10 are copied directly from~\citet{npos}, and those for CIFAR-100 are copied directly from~\citet{du2023dream}.}
\end{table*}

\begin{table*}[ht]
\centering
\resizebox{\textwidth}{!}{
\begin{tabular}{lccccccccc}
\toprule
Methods & \multicolumn{2}{c}{iNaturalist} & \multicolumn{2}{c}{SUN} & \multicolumn{2}{c}{Places} & \multicolumn{2}{c}{Texture} \\
\cmidrule(lr){2-3} \cmidrule(lr){4-5} \cmidrule(lr){6-7} \cmidrule(lr){8-9} 
& FPR@95 ($\downarrow$) & AUROC ($\uparrow$) 
& FPR@95 ($\downarrow$) & AUROC ($\uparrow$) 
& FPR@95 ($\downarrow$) & AUROC ($\uparrow$)  
& FPR@95 ($\downarrow$) & AUROC ($\uparrow$)  
\\
\midrule
MCM (zero-shot) & 32.08 & 94.41 & 39.21 & 92.28 & 44.88 & 89.83 & 58.05 & 85.96 \\
\midrule
\multicolumn{9}{l}{(Fine-tuned)} \\
Fort et al./MSP & 54.05 & 87.43 & 73.37 & 78.03 & 72.98 & 78.03 & 68.85 & 79.06 \\
ODIN & 30.22 & 94.65 & 54.04 & 87.17 & 55.06 & 85.54 & 51.67 & 87.85 \\
Energy & 29.75 & 94.68 & 53.18 & 87.33 & 56.40 & 85.60 & 51.35 & 88.00 \\
GradNorm & 81.50 & 72.56 & 82.00 & 72.86 & 80.41 & 73.70 & 79.36 & 70.26 \\
ViM & 32.19 & 93.16 & 54.01 & 87.19 & 60.67 & 83.75 & 53.94& 87.18 \\
KNN & 29.17 & 94.52 & 35.62 & 92.67 & 39.61 & 91.02 & 64.35 & 85.67 \\
VOS & 31.65 & 94.53 & 43.03 & 91.92 & 41.62 & 90.23 & 56.67 & 86.74 \\
VOS+ & 28.99 & 94.62 & 36.88 & 92.57 & 38.39 & 91.23 & 61.02 & 86.33 \\
NPOS & 16.58 & 96.19 & 43.77 & 90.44 & 45.27 & 89.44 & 46.12 & 88.80 \\
\ours{}-Softmax & 2.6 (0.5) & 99.2 (0.1) & 32.9 (1.5) & 94.2 (0.2) & 35.9 (1.8) & 93.8 (0.3) & 11.2 (1.0) & 97.9 (0.1) \\
\ours{}-MaxLogit & 1.8 (0.4) & 99.4 (0.1) & 27.5 (1.4) & 94.9 (0.2) & \textbf{32.5 (2.8)} & 94.5 (0.3) & 8.2 (0.8)  & 98.3 (0.1) \\
\ours{}-Energy & \textbf{0.5 (0.2)} & \textbf{99.6 (0.1)} & \textbf{24.5 (1.7)} & \textbf{95.8 (0.2)} & \textbf{30.8 (3.0)} & \textbf{95.4 (0.3)} & \textbf{4.3 (0.6)} & \textbf{98.8 (0.1)} \\
\bottomrule
\end{tabular}
}
\caption{
\label{tab:ood_imagenet_full}
    OOD detection performance for ImageNet-1K as ID dataset, using a ViT-B/16 model. The performance metrics for baselines are copied directly from~\citet{npos}. We report the mean and standard error over 3 seeds for \ours{}.}
\end{table*}

\section{Comparison to Methods that Use Synthetic Outliers}

Since \ours{} uses a mixture of known and unknown samples to finetune the model to be more conservative during test-time, it is reasonable to compare \ours{} with OOD detection methods that generate synthetic outliers and use them to make the model conservative. 
In this section, we compare against SOTA methods of this type: VOS~\citep{vos}, NPOS~\citep{npos} and Dream-OOD~\citep{du2023dream}. 
We report results for CIFAR-10 and CIFAR-100 as ID datasets. 
We use SVHN~\citep{netzer11svhn}, LSUN~\citep{yu_15, liang2017odin}, iSUN~\citep{xu2015iSUN, liang2017odin}, Texture~\citep{texture} and Places365~\citep{zhou2017places} as OOD datasets, following earlier work.
Results in~\cref{tab:ood_synthetic} show that DCM outperforms these points of comparison in most cases.
This suggests that directly using outlier inputs similar to those seen during test time, instead of synthesizing such outlier examples, is an effective approach.

\section{OOD Detection Evaluation on ImageNet-1K~\citep{deng_09}} 
\label{app:ood_imagenet}

In order to show that \ours{} scales to large scale datasets beyond CIFAR-10 and CIFAR-100, we also report results on ImageNet-1K. 

\subsection{OOD datasets}
We use iNaturalist~\citep{vanhorn2018inaturalist}, SUN~\citep{SUN}, Texture~\citep{texture} and Places~\citep{zhou2017places} as OOD datasets following earlier work such as~\citep{npos}. Particularly, we use the subsets of these datasets chosen by~\citep{npos} for fair comparison.

\subsection{Baselines}
We compare \ours{} with the following baselines:
\begin{itemize}
    \item Maximum concept matching (MCM)~\citep{ming2022delving}
    \item Maximum softmax probability~\citep{hendrycks2016baseline,fort2021exploring}
    \item ODIN score~\citep{liang2017odin}
    \item Energy score~\citep{liu2020energy}
    \item Grad norm~\citep{huang2021importance}
    \item ViM~\citep{haoqi2022vim}
    \item KNN distance~\citep{sun2022out}
    \item Virtual Outlier Synthesis (VOS) ~\citep{vos}
    \item Non-parametric Outlier Synthesis (NPOS) ~\citep{npos}
\end{itemize}

\subsection{Training details}
For these experiments, we use a ViT-B/16 model~\citep{dosovitskiy2021image} pretrained on ImageNet-1K. For each OOD dataset, we run \ours{} using an uncertainty dataset of size 25000, with 20000 class balanced examples from ImageNet validation set (ID) and 5000 examples from the OOD dataset. We test on a dataset of size 25000, with the same composition as the uncertainty dataset but containing different images than that of the uncertainty dataset. Additionally, we use the remaining 10000 class balanced images from the validation set, not used in the uncertainty dataset or the test dataset, as our validation set. For \ours{}, we fine-tune the pre-trained model for an additional 5 epochs, with AdamW optimizer, learning rate $3 \times 10^{-5}$, weight decay 0.01 and use cosine annealing learning rate decay. Similar to our experiments on CIFAR-10 and CIFAR-100, we use confidence weight $\lambda = 0.5$. In each batch, the model sees 32 ID training image and 64 uncertainty dataset image, and the epoch ends when we exhaust the entire uncertainty dataset.

\subsection{Results}
\cref{tab:ood_imagenet_full} shows the performance of \ours{} compared to the baselines. We see that \ours{} outperform all other methods, and achieving state-of-the-art results on all 4 OOD datasets, showing that \ours{} also scale to large-scale datasets.

\section{DCM for OOD Detection Preserves ID Performance}

DCM does slightly degrade ID performance, as seen in \cref{tab:main_ood,tab:nearOODCompleteTable,tab:ood_imagenet}, but this degradation is small compared to the gains achieved in OOD detection. 
One can also use DCM to separate OOD examples, and then use the pre-trained ERM weights for the ID classification task to get the best of both worlds.
\ours{} preserves ID performance on ImageNet-1K as well, showing that this property scales to much larger datasets.

\section{Exploring \ours{} in the OOD Detection Setting}\label{appendix:more_ood_experiments}

\textbf{Effect of number of epochs in the second fine-tuning stage}
Since the second fine-tuning stage is the crucial step for our algorithm, we try different number of epochs for this stage and see its effect. \cref{fig:further_ablations} shows the results. We see that the performance variation  due to varying the number of epochs is negligible, implying \ours{} is robust to the choice of this hyper-parameters. We also see that in the 4/5 cases we have tried out, our default choice of 10 for the number of fine-tuning epochs do not achieve the best performance, justifying our experiment design.

\begin{table*}[t]
\centering
\begin{tabular}{cccccccc}
\toprule
 & \multicolumn{6}{c}{MSP (Weight Decay)} & \ours{}-Softmax \\
\cmidrule(lr){2-7}
OOD Dataset & 0.0005 & 0.0006 & 0.0008 & 0.001 & 0.005 & 0.01 & \\
\midrule
SVHN & 77.7 & \underline{81.9} & 78.3 & 76.5 & 60.2 & 53.4 & 99.7 \\
LSUN & 68.5 & 69.5 & 71.0 & \underline{73.5} & 66.4 & 68.2 & 99.5 \\
TinyImageNet & 68.0 & 66.5 & 69.8 & \underline{72.4} & 61.2 & 61.4 & 98.7 \\
iSUN & 67.1 & 68.5 & 68.4 & \underline{71.5} & 61.5 & 63.5 & 99.1 \\
\bottomrule
\end{tabular}
\caption{
    OOD detection AUROC of MSP (ID: CIFAR-100) when the classifier (WideResNet-40-2) has been pre-trained with different weight decays. We see that controlling for weight decay can improve OOD detection performance slightly. Best performance of MSP for each OOD data is marked with an underline.}
\label{tab:ood_detection_weight_decay}
\end{table*}

\begin{table*}[t]
\centering
\begin{tabular}{ccccccc}
\toprule
 & \multicolumn{5}{c}{MSP (Label smoothing)} & \ours{}-Softmax \\
\cmidrule(lr){2-6} 
OOD Dataset & 0 & 0.25 & 0.5 & 0.75 & 1 \\
\midrule
SVHN & \underline{77.7} & 74.9 & 67.2 & 76.4 & 50.0 & 99.7  \\
LSUN & \underline{68.5} & 56.8 & 61.7 & 63.3 & 50.0 & 99.5  \\
TinyImageNet & \underline{68.0} & 59.4 & 62.4 & 58.5 & 50.0 & 98.7 \\
iSUN & \underline{67.1} & 55.1 & 61.0 & 59.2 & 50.0 & 99.1 \\
\bottomrule
\end{tabular}
\caption{ \label{tab:ood_detection_label_smoothing}
    OOD detection AUROC of MSP (ID: CIFAR-100) when the classifier (WideResNet-40-2) has been pre-trained with label smoothing. Increasing label smoothing hurts OOD detection performance, as the model trained with label smoothing 0 does the best on all 4 OOD datasets. Best performance of MSP for each OOD data is marked with an underline.}
\end{table*}

\begin{table*}[htb!]
\centering
\resizebox{\textwidth}{!}{
\begin{tabular}{cccccccc}
\toprule
ID Dataset & OOD Dataset & \multicolumn{2}{c}{Classification Accuracy} & \multicolumn{2}{c}{OOD Detection AUROC} & \multicolumn{2}{c}{OOD Detection FPR@95} \\
\cmidrule(lr){3-4} \cmidrule(lr){5-6} \cmidrule(lr){7-8}
& & Fine-tune & Pre-train & Fine-tune & Pre-train & Fine-tune & Pre-train \\
\midrule
CIFAR-10 & SVHN & 90.6 & 93.7 & 99.7 & 99.7 & 0.8 & 0.4 \\
& TinyImageNet & 90.4 & 93.5 & 99.3 & 99.3 & 2.2 & 2.6 \\
& LSUN & 90.6 & 93.7 & 99.2 & 99.8 & 2.5 & 0.5  \\
& iSUN & 90.2 & 93.5 & 99.5 & 99.7 & 1.4 & 0.6 \\
\midrule
CIFAR-100 & SVHN & 69.3 & 71.4 & 99.6 & 99.6 & 0.4 & 0.6 \\
& TinyImageNet & 68.1 & 71.1 & 99.0 & 98.7 & 3.2 & 5.9  \\
& LSUN & 69.0 & 71.0 & 99.7 & 99.5 & 0.7 & 1.1 \\
& iSUN & 68.1 & 71.2 & 99.4 & 99.1 & 2.0 & 2.7 \\
\bottomrule
\end{tabular}
}
\caption{Comparison between \ours{} used during pre-training a model from scratch and fine-tuning a pre-trained model. In both cases, we use the maximum softmax probability for OOD detection, and use a WideResNet-40 model.}
\label{tab:pretraining_with_dcm}
\end{table*}

\textbf{\ours{} compared to other forms of regularization.} For the sake of completeness, we compare \ours{} with weight decay and label smoothing~\citep{label_smoothing}, two popular regularization method. In these experiments, we train a WideResNet-40-2 model on CIFAR-100 by varying each regularization factor during training the model from scratch while keeping every other hyper-parameter and training details fixed at those described in \cref{app:training_details_ood_detection}. \cref{tab:ood_detection_weight_decay,tab:ood_detection_label_smoothing} show the results of these experiments.

We observe that controlling weight decay can result in a better OOD detection performance compared to the default weight decay of $0.0005$ we used in this paper. However, this is not true for label smoothing, since using label smoothing $= 0.0$ achieves the best OOD detection performance in all experiments. Overall, \ours{} performs much better, showing it is an effective form of regularization against out-of-distribution samples.

\textbf{\ours{} used during pre-training.} We have so far used \ours{} as a fine-tuning mechanism on top of a pre-trained model. Here we explore \ours{} as pre-training algorithm. Specifically, we skip pre-training on the ID dataset from \cref{alg:ood_detection}, and directly train a model from scratch using objective \cref{eq:objective}, i.e., the weighted sum of cross-entropy loss on the ID training set and the confidence loss on the unlabeled set. We use WideResNet-40-2 models and train for 100 epochs. Since the uncertainty dataset is much smaller than the ID training set, we repeat the images from the uncertainty dataset so that the model sees the entire training set during each epoch. The dataset composition and other hyper-parameters are same as \cref{appendix:regularSetting}.

\cref{tab:pretraining_with_dcm} shows the results of these experiments. Training directly from scratch using \ours{} leads to slightly lower ID accuracy but comparable performance on OOD detection tasks. The loss in ID performance is due to the optimization task of \ours{} being harder: it has two components, namely the cross-entropy loss on the ID training set and confidence minimization loss on ID examples of the uncertainty dataset, that work in opposite directions, making learning the ID classification task harder. Whereas if we fine-tune an ID pre-trained model, the model already has learned the ID task and the fine-tuning step only further modifies the decision boundary, maintaining higher ID performance.

We use \ours{} to fine-tune a pretrained model due to computational reasons: the fine-tuning step is much shorter than the pre-training step, and the ratio of compute required becomes smaller when we use larger and larger datasets. When we pre-train one epoch using \ours{}, the model essentially sees 2x the number of images compared to regular pre-training and hence takes 2x compute, since it sees equal number of images from the training and auxiliary datasets. Also this lets us take one ID pre-trained model and adapt it relatively quickly to different test distributions, avoiding the costly pre-training process for each different test distribution.

\section{Selective Classification Experiment Details}
\label{app:selective-classification}

\subsection{Baselines} \label{app:sc-baselines}
\begin{itemize}

    \item \textbf{MSP}~\citep{hendrycks2016baseline}: Also referred to as Softmax Response (SR), MSP directly uses the maximum softmax probability assigned by the model as an estimate of confidence. MSP has been shown to distinguish in-distribution test examples that the model gets correct from the ones that it gets incorrect.
    
    \item \textbf{MaxLogit}~\citep{maxlogit}: Directly uses the maximum logit outputted by the model as an estimate of confidence. 
    
    \item \textbf{Ensemble}~\citep{lakshminarayanan2017simple}: Uses an ensemble of 5 models, each trained with ERM on the ID train distribution with different random seeds. Following ~\citet{lakshminarayanan2017simple}, we use the entropy of the average softmax predictions of the models in the ensemble as the disagreement metric.
    
    \item \textbf{Binary Classifier}~\citep{kamath2020selective}: Trains a classifier on the labeled training and validation sets to predict inputs for which the base model will be correct versus incorrect. The classifier takes as input the softmax probabilities outputted by the base model. For the Binary Classifier, we found the MLP with softmax probabilities to work best compared to a random forest classifier and MLP with last-layer features.
    
    \item \textbf{Fine-tuning}: First trains a model on the training set, then fine-tunes the model on the validation set. 

    \item \textbf{Deep Gamblers}~\citep{liu2019deep}: Trains a classifier using a loss function derived from the doubling rate in a hypothetical horse race. Deep Gamblers introduces an extra $(c+1)$-th class that represents abstention. Minimizing this loss corresponds to maximizing the return, where the model makes a bet or prediction when confident, and abstains when uncertain.

    \item \textbf{Self-Adaptive Training}~\citep{huang2020self}: Trains a classifier using model predictions to dynamically calibrate the training process. SAT introduces an extra $(c+1)$-th class that represents abstention and uses training targets that are exponential moving averages of model predictions and ground-truth targets.
\end{itemize}

\subsection{Datasets} \label{app:sc-datasets}
\begin{itemize}
    \item \textbf{CIFAR-10 \citep{cifar10} $\rightarrow$ CIFAR-10-C} \citep{hendrycks2019benchmarking}: The task is to classify images into 10 classes, where the target distribution contains severely corrupted images. We run experiments over 15 corruptions (brightness, contrast, defocus blur, elastic transform, fog, frost, gaussian noise, glass blur, impulse noise, jpeg compression, motion blur, pixelate, shot noise, snow, zoom blur) and use the data loading code from \citet{croce2020robustbench}.

    \item \textbf{Waterbirds} \citep{wah2011caltech, sagawa2019distributionally}: The Waterbirds dataset consists of images of landbirds and waterbirds on land or water backgrounds from the Places dataset \citep{zhou2017places}. The train set consists of 4,795 images, of which 3,498 are of waterbirds on water backgrounds, and 1,057 are of landbirds on land backgrounds. There are 184 images of waterbirds on land and 56 images of landbirds on water, which are the minority groups.

    \item \textbf{Camelyon17} \citep{koh2021wilds, bandi2018detection}: The Camelyon17 dataset is a medical image classification task from the WILDS benchmark \citep{koh2021wilds}. The dataset consists of $450,000$ whole-slide images of breast cancer metastases in lymph node from $5$ hospitals. The input is a $96 \times 96$ image, and the label $y$ indicates whether there is a tumor in the image. The train set consists of lymph-node scans from 3 of the 5 hospitals, while the OOD validation set and OOD test datasets consists of lymph-node scans from the fourth and fifth hospitals, respectively.

    \item \textbf{FMoW} \citep{koh2021wilds}: The FMoW dataset is a satellite image classification task from the WILDS benchmark \citep{koh2021wilds}. The dataset consists of satellite images in various geographic locations from $2002 - 2018$. The input is a $224 \times 224$ RGB satellite image, and the label $y$ is one of 62 building or land use categories. The train, validation, and test splits are based on the year that the images were taken: the train, ID validation, and ID test sets consist of images from $2002 - 2013$, the OOD validation set consists of images from $2013-2016$, and the OOD test set consists of images from $2016-2018$.
\end{itemize}

\subsection{CIFAR-10 $\rightarrow$ CIFAR-10-C training details} \label{app:sc-training-details-cifar10}
\begin{itemize}
    \item \textbf{Architecture:} We use a WideResNet-28-10 trained on the source CIFAR-10 distribution and attains 94.78\% clean accuracy \citep{croce2020robustbench}. The base models for Deep Gamblers and Self-Adaptive Training use the same architecture with an additional 11th class in the final linear layer.
    
    \item \textbf{Hyper-parameters}: \ours{}, Fine-tuning, Binary Classifier, Deep Gamblers, and Self-Adaptive Training fine-tune the base models for 10 epochs. We perform hyperparameter tuning on a separate, held-out ID validation set. We tune all baselines over the learning rates $\{$1e-3, 1e-4, 1e-5$\}$, and use an initial learning rate of 1e-3 for all baselines except \ours{}, which uses an initial learning rate of 1e-4. For \ours{}, we use a confidence weight of $\lambda = 0.5$ for all corruptions, as in ~\citet{hendrycks2018deep}. For Deep Gamblers, we use a reward of $3.2$ and tune over rewards in the range $[2.0, 4.2]$ with a step size of $0.2$, as in ~\citet{liu2019deep}. We use SGD with a cosine learning rate schedule, weight decay of $5 \times 10^{-4}$, and batch sizes of 128 and 256 for the fine-tuning set and uncertainty datasets, respectively.

    \item \textbf{Validation and test set construction:} We use the CIFAR-10 test set, and split it into a validation set of 5000 images, a test set of 4000 images, and set aside 1000 images for hyperparameter tuning. Similarly, for CIFAR-10-C, we use a validation set of 5000 images, and a test set of 4000 images.

    Each of our settings merges the train/val/test splits from the corresponding datasets. For example, Val = CIFAR-10, Test = CIFAR-10 + CIFAR-10-C uses a validation set of 5000 CIFAR-10 images for fine-tuning and a test set of 4000 CIFAR-10 and 4000 CIFAR-10-C images. Note that our combined CIFAR-10 + CIFAR-10-C test sets have a 1:1 clean-to-corrupted ratio.

    \item \textbf{Augmentations:} For \ours{} and fine-tuning, we use the same standard random horizontal flip and random crop ($32 \times 32$).
\end{itemize}

\subsection{Waterbirds training details} \label{app:sc-training-details-waterbirds}
\begin{itemize}
    \item \textbf{Architecture:} For our base model, we train a pretrained ResNet50 from \texttt{torchvision} on a subset of the Waterbirds train set (details of the split are described below). We follow the training details for the ERM baseline used by \citet{sagawa2019distributionally}, and use SGD with a momentum term of 0.9, batch normalization, and no dropout. We use a fixed learning rate of 0.001, a $\ell_2$ penalty of $\lambda = 0.0001$ and train for 300 epochs.
    
    \item \textbf{Hyper-parameters}: 
    \ours{}, Fine-tuning, Binary Classifier, Deep Gamblers, and Self-Adaptive Training fine-tune the base models for 20 epochs. We perform hyperparameter tuning on a separate, held-out ID validation set. We tune all baselines over the learning rates $\{$1e-3, 1e-4, 1e-5$\}$, and use an initial learning rate of 1e-3 for all baselines except \ours{}, which uses an initial learning rate of 1e-4. For \ours{}, we use a confidence weight of $\lambda = 0.5$. For Deep Gamblers, we use a reward of $1.4$ and tune over rewards in the range $[1.0, 2.0]$ with a step size of $0.2$. We use SGD with a cosine learning rate schedule, weight decay of $5 \times 10^{-4}$, and batch sizes of 64 and 128 for the fine-tuning set and uncertainty datasets, respectively.

    \item \textbf{Validation and test set construction:} We split the Waterbirds train set from \citet{sagawa2019distributionally} into two sets, one which we use to pretrain a base ERM model, and the other which we use as our ID validation set. We maintain group ratios, and the ID train and validation sets each contain 2,397 images, consisting of 1749 waterbirds on water, 528 landbirds on land, 92 waterbirds on land, and 28 landbirds on water. Our test set is the same test set from \citet{sagawa2019distributionally}.

    \item \textbf{Augmentations:} \ours{} and Fine-Tuning use the train augmentations as in \citet{sagawa2019distributionally}: a random resized crop ($224 \times 224$) and random horizontal flip.
\end{itemize}

\subsection{Camelyon17 training details} \label{app:sc-training-details-camelyon17}
\begin{itemize}
    \item \textbf{Architecture:} We use a DenseNet121 pre-trained on the Camelyon17 train set from \cite{koh2021wilds} as our base model. These models are trained for 5 epochs with a learning rate of $0.001$, $\ell_2$ regularization strength of $0.01$, batch size of 32, and SGD with momentum set to 0.9.
    
    \item \textbf{Hyper-parameters}: 
    \ours{}, Fine-tuning, Binary Classifier, Deep Gamblers, and Self-Adaptive Training fine-tune the base models for 1 epoch. We perform hyperparameter tuning on a separate, held-out ID validation set. We tune all baselines over the learning rates $\{$1e-3, 1e-4, 1e-5$\}$, and use an initial learning rate of 1e-4 for all baselines. For \ours{}, we use a confidence weight of $\lambda = 0.5$. For Deep Gamblers, we use a reward of $1.4$ and tune over rewards in the range $[1.0, 2.0]$ with a step size of $0.2$. We use an AdamW optimizer with a cosine learning rate schedule, weight decay of $5 \times 10^{-4}$, and batch size of 32 for the fine-tuning set and uncertainty datasets.

    \item \textbf{Validation and test set construction:} We use the train / ID val / OOD val / OOD test splits from the WILDS benchmark to construct our validation and test sets. For our ID validation set and ID test set, we split the Camelyon17 ID validation set into two equally-sized subsets and maintain group ratios. The Camelyon17 ID validation consists of samples from the same 3 hospitals as the train set. We use the OOD test set as our target distribution, which contains samples from the 5th hospital.

    \item \textbf{Augmentations:} \ours{} and Fine-tuning use random horizontal flips.
\end{itemize}

\subsection{FMoW training details} \label{app:sc-training-details-fmow}
\begin{itemize}
    \item \textbf{Architecture:} We use FMoW ERM models from the WILDS benchmark \citep{koh2021wilds} as our base model. These models use DenseNet121 pretrained on ImageNet with no $\ell_2$ regularization, Adam optimizer with an initial learning rate of 1e-4 that decays by $0.96$ per epoch, and train for 50 epochs with early stopping and batch size of 64.
    
    \item \textbf{Hyper-parameters}: 
    \ours{}, Fine-tuning, Binary Classifier, Deep Gamblers, and Self-Adaptive Training fine-tune the base models for 5 epochs. We perform hyperparameter tuning on a separate, held-out ID validation set. We tune all baselines over the learning rates $\{$1e-3, 1e-4, 1e-5, 1e-6$\}$, and use an initial learning rate of 1e-3 for SAT, 1e-4 for Fine-tuning, Deep Gamblers, and Binary Classifier, and 1e-5 for \ours. For \ours{}, we use a confidence weight of $\lambda = 0.1$ and tune over the weights $\{0.01, 0.05, 0.1, 0.5, 1.0, 1.5\}$ on a held-out ID validation set. For Deep Gamblers, we use a reward of $35$ and tune over rewards in the range $[5.0, 65.0]$ with a step size of $5.0$. We use an AdamW optimizer with a cosine learning rate schedule, weight decay of $5 \times 10^{-4}$, and batch sizes of 16 and 32 for the uncertainty dataset and fine-tuning sets, respectively.

    \item \textbf{Validation and test set construction:} We use the train, ID validation, OOD validation, and OOD test splits from the WILDS benchmark as our validation and test sets. Specifically, we use the ID validation set, ID test set, and OOD test sets. For example, the task Val = FMoW ID, Test = FMoW ID + FMoW OOD uses the WILDS ID val set for validation, and the WILDS ID and OOD test sets for testing.

    \item \textbf{Augmentations:}  \ours{} and Fine-tuning use random horizontal flips.
\end{itemize}

\section{Selective Classification Ablations}

\subsection{Effect of Validation Set Size}

\cref{fig:val-set-size} shows the performance of \ours{} on selective classification when we vary the size of the validation set. \ours{} consistently outperforms baselines in distribution-shift settings with different validation set sizes.

\subsection{\ours{} compared to other regularization methods}

\cref{tab:selective_classification_label_smoothing} shows the comparison between pre-training with label smoothing~\citep{label_smoothing} as a form of regularization and \ours{}. We see that higher weights of label smoothing degrades performance, and therefore is not an alternative to \ours{}.

\begin{figure}[!htb]
\begin{center}
\minipage{0.32\textwidth}
  \includegraphics[width=\linewidth]{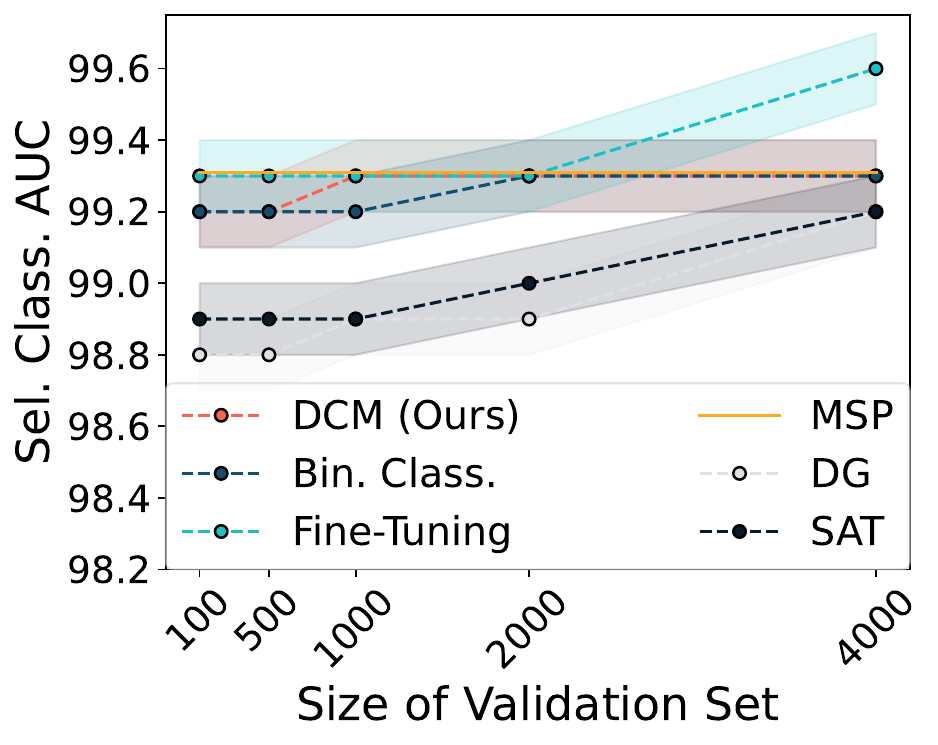}
\endminipage
\minipage{0.32\textwidth}
  \includegraphics[width=\linewidth]{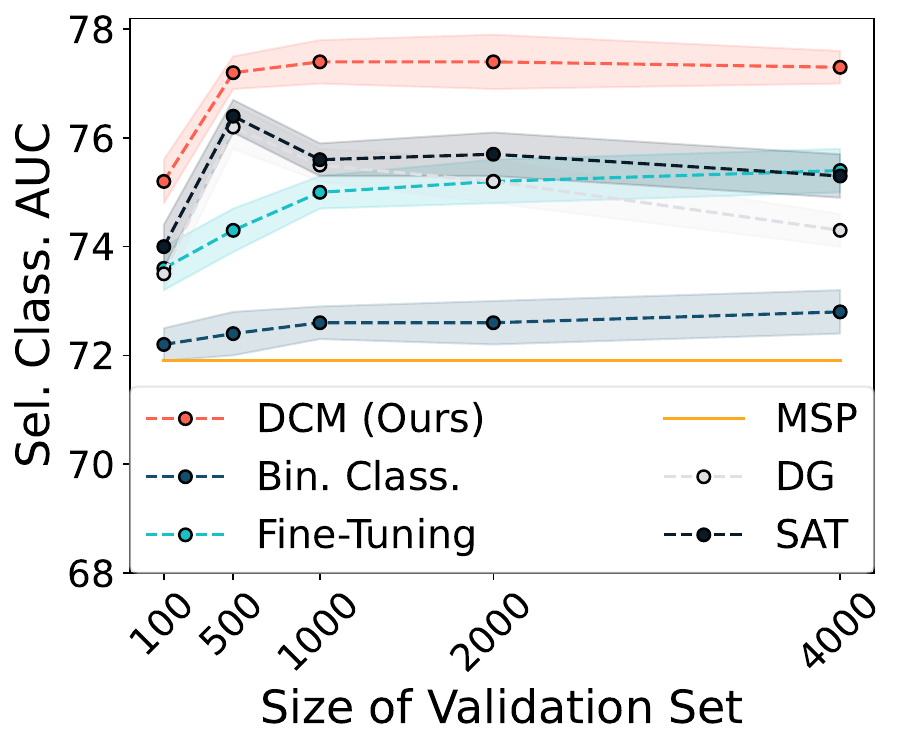}
\endminipage\hfill
\minipage{0.32\textwidth}
  \includegraphics[width=\linewidth]{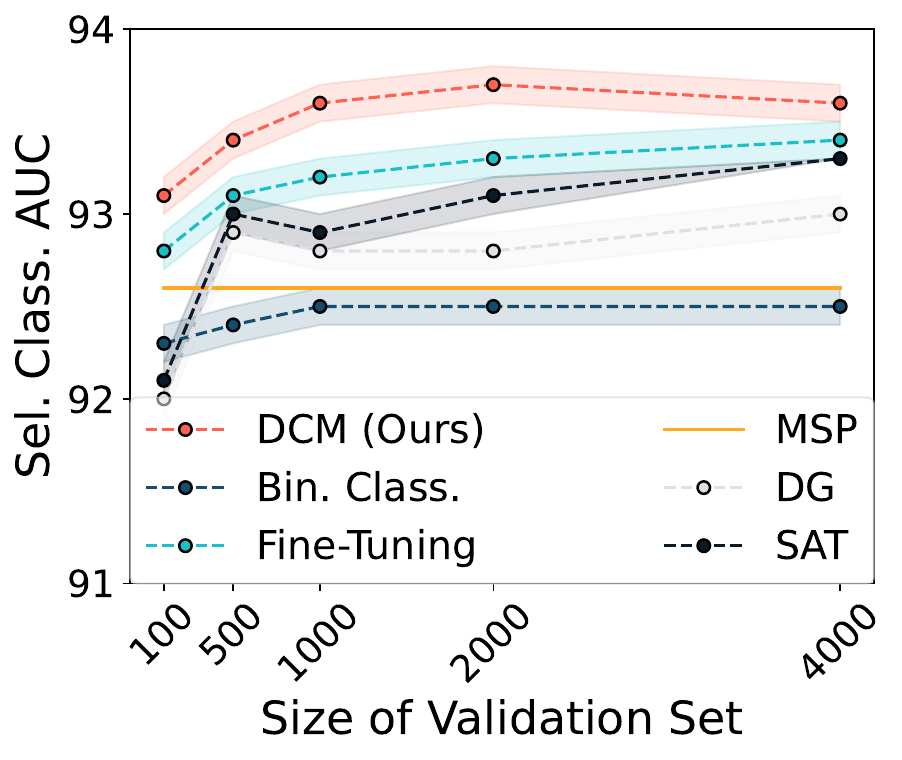}
\endminipage
\end{center}
\caption{Selective classification performance of \ours{} when we vary the size of the validation set. Left: CIFAR-10 $\rightarrow$ CIFAR-10, Middle: CIFAR-10 $\rightarrow$ CIFAR-10-C, Right: CIFAR-10 $\rightarrow$ CIFAR-10 + CIFAR-10-C. \ours{} consistently outperforms baselines in distribution-shift settings with different validation set sizes.}\label{fig:val-set-size}
\end{figure}

\begin{table*}[!htb]
\centering
\begin{tabular}{ccccccc}
\toprule
 & \multicolumn{5}{c}{MSP with Label Smoothing} & \ours{}-Softmax \\
    \cmidrule(lr){2-6} 
Eval Data & 0 & 0.25 & 0.5 & 0.75 & 1 \\
\midrule
ID & \underline{81.3} & 81.0 & 80.7 & 71.0 & 59.6 & \textbf{82.9}  \\
ID+OOD & \underline{77.1} & 76.9 & 76.4 & 67.8 & 52.5 & \textbf{78.9}  \\
OOD & \underline{74.5} & 74.3 & 74.2 & 64.0 & 55.1 & \textbf{76.4}  \\
\bottomrule
\end{tabular}
\caption{ \label{tab:selective_classification_label_smoothing}
    Pre-training with label smoothing does not improve selective classification. Here, we compare an MSP classifier pre-trained with varying degrees of label smoothing with \ours{} on the FMoW dataset. Higher weights of label smoothing degrade performance. We underline the best performance of MSP with label smoothing.}
\end{table*}

\section{Compute}
All model training and experiments were conducted on a single NVIDIA RTX Titan or A40 GPU.

\begin{table*}[t]
\centering
\resizebox{\textwidth}{!}{ 
\begin{tabular}{lccccccccc}
\toprule
Method & ECE ($\downarrow$) & Acc ($\uparrow$) & AUC ($\uparrow$) & Acc@90 ($\uparrow$) & Acc@95 ($\uparrow$) & Acc@99 ($\uparrow$) & Cov@90 ($\uparrow$) & Cov@95 ($\uparrow$) & Cov@99 ($\uparrow$) \\ 

\midrule
\multicolumn{9}{l}{Val = CIFAR-10, Test = CIFAR-10}\\
MSP & 0.5 (0.1) & 95.2 (0.1) & 99.3 (0.1) & 98.4 (0.1) & 97.2 (0.1) & 95.7 (0.1) & \textbf{100 (0.0)} & \textbf{100 (0.0)} & 87.0 (0.1) \\
MaxLogit & 0.6 (0.1) & 95.1 (0.1) & 98.9 (0.1) & 97.9 (0.1) & 96.8 (0.1) & 95.6 (0.1) & \textbf{100 (0.0)} & \textbf{100 (0.0)} & 80.4 (0.1) \\
Ensemble & \revision{0.6 (0.1)} & \revision{96.1 (0.1)} & \revision{99.5 (0.1)} & \revision{98.9 (0.1)} & \revision{97.6 (0.1)} & \revision{96.1 (0.1)} & \textbf{100 (0.0)} & \textbf{100 (0.0)} & \revision{86.5 (0.1)} \\
Binary Classifier & 1.4 (0.1) & 95.2 (0.1) & 99.3 (0.1) & 98.4 (0.1) & 97.2 (0.2) & 95.7 (0.2) & \textbf{100 (0.0)} & \textbf{100 (0.0)} & 87.0 (2.3) \\
Fine-Tuning & \textbf{0.3 (0.2)} & \textbf{96.2 (0.1)} & \textbf{99.6 (0.1)} & \textbf{99.1 (0.2)} & \textbf{98.7 (0.1)} & \textbf{97.5 (0.2)} & \textbf{100 (0.0)} & \textbf{100 (0.0)} & \textbf{91.6 (0.9)} \\
DG & 0.8 (0.1) & 94.5 (0.0) & 99.0 (0.0) & 97.4 (0.1) & 96.4 (0.0) & 94.9 (0.1) & \textbf{100 (0.0)} & 98.7 (0.1) & 76.2 (1.7) \\
SAT & 0.7 (0.1) & 94.7 (0.0) & 99.2 (0.0) & 97.6 (0.1) & 96.3 (0.0) & 94.9 (0.1) & \textbf{100 (0.0)} & 98.8 (0.1) & 81.6 (1.1) \\
DCM (ours) & 1.0 (0.2) & 94.7 (0.2) & 99.2 (0.2) & 98.0 (0.2) & 96.5 (0.2) & 94.8 (0.2) & \textbf{100 (0.0)} & 98.6 (0.4) & 83.9 (1.0) \\
\bottomrule

\midrule
\multicolumn{9}{l}{Val = CIFAR-10, Test = CIFAR-10 + CIFAR-10-C}\\
MSP & 9.3 (0.1) & 75.8 (0.1) & 92.6 (0.1) & 80.4 (0.1) & 78.3 (0.1) & 76.4 (0.1) & 72.4 (0.2) & 60.6 (0.2) & 27.4 (0.7) \\
MaxLogit & 9.4 (0.0) & 75.7 (0.1) & 91.7 (0.0) & 80.4 (0.0) & 78.2 (0.0) & 76.3 (0.0) & 70.5 (0.1) & 54.0 (0.3) & 10.1 (0.3) \\
Ensemble & \revision{8.4 (0.1)} & \revision{76.4 (0.1)} & \revision{93.4 (0.1)} & \revision{81.2 (0.1)} & \revision{78.8 (0.1)} & \revision{76.8 (0.1)} & \revision{72.9 (0.2)} & \revision{60.5 (0.3)} & \revision{35.0 (0.6)} \\
Binary Classifier & \textbf{7.9 (0.1)} & 75.4 (0.1) & 92.5 (0.1) & 80.3 (0.1) & 78.1 (0.1) & 76.2 (0.1) & 72.0 (0.2) & 59.9 (0.3) & 30.5 (1.7) \\
Fine-Tuning & 8.2 (0.1) & 75.2 (0.1) & 93.4 (0.1) & 81.3 (0.1) & 78.9 (0.1) & 77.0 (0.1) & 74.2 (0.2) & 63.4 (0.3) & \textbf{42.4 (0.9)} \\
DG & 8.2 (0.0) & 76.0 (0.1) & 93.0 (0.1) & 81.0 (0.0) & 78.8 (0.0) & 76.9 (0.0) & 73.2 (0.2) & 60.7 (0.5) & 33.4 (1.2) \\
SAT & 7.8 (0.0) & 76.2 (0.1) & 93.3 (0.0) & 81.1 (0.0) & 78.7 (0.1) & 76.8 (0.0) & 74.1 (0.0) & 62.7 (0.1) & 42.7 (0.3) \\
DCM (ours) & \textbf{8.0 (0.1)} & \textbf{77.0 (0.1)} & \textbf{93.6 (0.1)} & \textbf{82.0 (0.1)} & \textbf{79.7 (0.1)} & \textbf{77.7 (0.1)} & \textbf{75.2 (0.1)} & \textbf{63.8 (0.3)} & \textbf{43.6 (0.9)} \\
\midrule
\multicolumn{9}{l}{Val = CIFAR-10, Test = CIFAR-10-C}\\
MSP & 13.8 (0.1) & 56.4 (0.1) & 70.1 (0.1) & 57.4 (0.2) & 56.0 (0.2) & 54.8 (0.1) & 30.2 (0.3) & 20.9 (0.5) & 7.6 (0.3) \\
MaxLogit & 14.6 (0.0) & 56.4 (0.1) & 71.7 (0.1) & 59.4 (0.1) & 58.0 (0.0) & 56.8 (0.0) & 23.2 (0.7) & 11.3 (0.3) & 3.5 (0.5) \\
Ensemble & \revision{13.1 (0.1)} & \revision{57.2 (0.1)} & \revision{75.3 (0.1)} & \revision{61.8 (0.1)} & \revision{58.6 (0.1)} & \revision{57.9 (0.1)} & \revision{28.0 (0.2)} & \revision{17.6 (0.5)} & \revision{6.8 (0.2)} \\
Binary Classifier & 13.6 (0.1) & 56.2 (0.2) & 72.8 (0.2) & 59.5 (0.2) & 58.0 (0.2) & 56.7 (0.6) & 28.5 (0.6) & 16.4 (0.8) & 8.2 (0.8) \\
Fine-Tuning & 12.7 (0.1) & 57.6 (0.1) & 75.4 (0.1) & 61.7 (0.2) & 60.2 (0.3) & 58.8 (0.3) & 33.6 (0.8) & 22.5 (0.8) & 8.6 (0.5) \\
DG & 12.7 (0.0) & 56.8 (0.1) & 74.3 (0.2) & 61.4 (0.1) & 59.9 (0.0) & 58.7 (0.0) & 28.4 (0.5) & 17.2 (0.2) & 7.2 (0.3) \\
SAT & 12.5 (0.0) & 57.4 (0.1) & 75.3 (0.1) & 61.4 (0.1) & 59.8 (0.1) & 58.4 (0.1) & 32.5 (0.5) & 22.0 (0.8) & 8.6 (0.2) \\
DCM (ours) & \textbf{12.3 (0.1)} & \textbf{59.4 (0.1)} & \textbf{77.5 (0.2)} & \textbf{64.1 (0.2)} & \textbf{62.4 (0.2)} & \textbf{61.0 (0.2)} & \textbf{37.6 (0.6)} & \textbf{25.2 (1.0)} & \textbf{8.9 (0.2)} \\
\bottomrule
\end{tabular}
}
\caption{
\label{tab:main_sc}
Selective classification performance on various distribution shift tasks constructed from the CIFAR-10 and CIFAR-10-C datasets. Bold numbers represent superior results, and parentheses show the standard error over 3 random seeds. \ours{} consistently outperforms MSP, MaxLogit, Deep Gamblers (DG), and Self-Adaptive Training (SAT), and outperforms all 7 prior methods when the validation and test sets are from different distributions.
}
\end{table*}

\begin{table*}[t]
\centering
\resizebox{\textwidth}{!}{ 
\begin{tabular}{lccccccccc}
\toprule
Method & ECE ($\downarrow$) & Acc ($\uparrow$) & AUC ($\uparrow$) & Acc@90 ($\uparrow$) & Acc@95 ($\uparrow$) & Acc@99 ($\uparrow$) & Cov@90 ($\uparrow$) & Cov@95 ($\uparrow$) & Cov@99 ($\uparrow$) \\ 
\midrule
\multicolumn{9}{l}{Val = Waterbirds-Train, Test = Waterbirds-Train}\\
MSP & 3.4 (0.0) & 96.8 (0.0) & \textbf{98.7 (0.0)} & 99.1 (0.0) & 98.2 (0.0) & 97.1 (0.0) & \textbf{100 (0.0)} & \textbf{100 (0.0)} & 90.1 (0.0) \\
MaxLogit & 3.2 (0.0) & 96.8 (0.0) & 98.6 (0.0) & 99.3 (0.0) & 98.2 (0.0) & 97.2 (0.0) & \textbf{100 (0.0)} & \textbf{100 (0.0)} & 91.0 (0.0) \\
Ensemble & 3.1 (0.0) & \textbf{97.0 (0.0)} & \textbf{98.7 (0.0)} & 98.9 (0.0) & 98.2 (0.0) & 97.2 (0.0) & \textbf{100 (0.0)} & \textbf{100 (0.0)} & 88.8 (0.0) \\
Binary Classifier & 3.4 (0.0) & 96.0 (0.0) & \textbf{98.7 (0.0)} & 99.1 (0.0) & 98.2 (0.0) & 97.1 (0.0) & \textbf{100 (0.0)} & \textbf{100 (0.0)} & 90.1 (0.0) \\
Fine-Tuning & 1.1 (0.0) & 96.9 (0.0) & \textbf{98.7 (0.0)} & \textbf{99.4 (0.0)} & \textbf{98.6 (0.0)} & \textbf{97.3 (0.0)} & \textbf{100 (0.0)} & \textbf{100 (0.0)} & \textbf{91.8 (0.0)} \\
DG & 1.3 (0.3) & \textbf{97.0 (0.0)} & 98.5 (0.0) & 98.8 (0.1) & 98.0 (0.1) & 97.3 (0.0) & \textbf{100 (0.0)} & \textbf{100 (0.0)} & 86.9 (0.5) \\
SAT & \textbf{0.7 (0.4)} & 96.8 (0.0) & 98.6 (0.0) & 99.1 (0.1) & 98.3 (0.1) & \textbf{97.4 (0.1)} & \textbf{100 (0.0)} & \textbf{100 (0.0)} & 91.3 (0.9) \\
DCM (ours) & 1.8 (0.6) & 96.8 (0.0) & \textbf{98.7 (0.0)} & 99.2 (0.0) & 98.3 (0.1) & 97.2 (0.0) & \textbf{100 (0.0)} & \textbf{100 (0.0)} & \textbf{91.9 (0.1)} \\
\midrule
\multicolumn{9}{l}{Val = Waterbirds-Train, Test = Waterbirds-Test}\\
MSP & 15.1 (0.0) & 84.3 (0.0) & 94.4 (0.0) & 88.2 (0.0) & 86.8 (0.0) & 85.3 (0.0) & 83.9 (0.0) & 60.9 (0.0) & 27.5 (0.0) \\
MaxLogit & 18.1 (0.0) & 84.3 (0.0) & 94.2 (0.0) & 87.9 (0.0) & 86.3 (0.0) & 84.7 (0.0) & 82.2 (0.0) & 60.9 (0.0) & 23.8 (0.0) \\
Ensemble & 14.9 (0.0) & 85.0 (0.0) & 94.4 (0.0) & 88.4 (0.0) & 87.0 (0.0) & 85.4 (0.0) & 85.0 (0.0) & 62.0 (0.0) & 25.6 (0.0) \\
Binary Classifier & 16.4 (0.3) & 84.9 (0.2) & 94.0 (0.2) & 87.5 (0.3) & 86.1 (0.2) & 84.8 (0.2) & 81.2 (1.8) & 59.8 (2.0) & 24.2 (1.1) \\
Fine-Tuning & 15.3 (0.4) & 85.9 (0.5) & 94.7 (0.2) & 89.0 (0.5) & 87.2 (0.5) & \textbf{86.2 (0.5)} & 86.8 (1.4) & 64.0 (2.7) & 27.9 (2.7) \\
DG & 17.3 (0.4) & 85.1 (0.1) & 94.8 (0.1) & 88.6 (0.2) & 87.0 (0.2) & 85.8 (0.2) & 85.4 (0.6) & 67.3 (1.1) & 29.4 (0.4) \\
SAT & 17.5 (0.2) & 86.0 (0.0) & \textbf{95.1 (0.0)} & 88.9 (0.1) & 87.0 (0.1) & 85.6 (0.1) & 87.2 (0.4) & \textbf{70.0 (0.3)} & \textbf{34.4 (0.4)} \\
DCM (ours) & \textbf{13.9 (0.7)} & \textbf{86.5 (0.2)} & \textbf{95.0 (0.1)} & \textbf{89.5 (0.3)} & \textbf{88.0 (0.4)} & \textbf{86.6 (0.4)} & \textbf{88.2 (0.8)} & 66.5 (0.3) & 29.8 (1.1) \\
\bottomrule
\end{tabular}
}
\caption{
\label{tab:main_sc_2}
Selective classification on the Waterbirds spurious correlation dataset. Bold numbers represent superior results, and parentheses show the standard error over 3 random seeds. \ours{} consistently outperforms all 7 prior methods when the validation and test sets are from different distributions, suggesting that \ours{} is effective in spurious correlation settings.}
\end{table*}

\begin{table*}[t]
\centering
\resizebox{\textwidth}{!}{ 
\begin{tabular}{lccccccccc}
\toprule
Method & ECE ($\downarrow$) & Acc ($\uparrow$) & AUC ($\uparrow$) & Acc@90 ($\uparrow$) & Acc@95 ($\uparrow$) & Acc@99 ($\uparrow$) & Cov@90 ($\uparrow$) & Cov@95 ($\uparrow$) & Cov@99 ($\uparrow$) \\ 
\midrule
\multicolumn{9}{l}{Val = Camelyon17 ID Val-1, Test = Camelyon17 ID Val-2}\\
MSP & 16.3 (10.4) & 81.5 (7.8) & 96.9 (2.2) & 92.0 (5.9) & 90.8 (6.4) & 89.5 (6.6) & 87.2 (10.5) & 78.6 (17.5) & 60.5 (25.5) \\
MaxLogit & 16.4 (10.2) & 81.5 (7.8) & 97.0 (2.2) & 92.2 (5.8) & 91.0 (6.4) & 89.8 (6.5) & 87.8 (10.0) & 79.1 (17.1) & 60.1 (27.0) \\
Ensemble & 15.7 (11.2) & 94.8 (6.4) & 99.1 (2.7) & 96.8 (5.9) & 95.9 (6.7) & 95.1 (6.8) & \textbf{100.0 (10.5)} & 99.4 (18.1) & 73.2 (27.7) \\
Binary Classifier & 16.3 (9.2) & 89.4 (6.5) & 97.0 (4.5) & 92.3 (5.9) & 91.4 (6.0) & 90.3 (6.7) & 88.1 (10.2) & 79.3 (16.8) & 61.0 (24.2) \\
Fine-Tuning & 2.8 (0.2) & 98.3 (0.2) & \textbf{99.8 (0.0)} & \textbf{99.7 (0.0)} & \textbf{99.4 (0.1)} & \textbf{98.6 (0.2)} & \textbf{100.0 (0.0)} & \textbf{100.0 (0.0)} & \textbf{97.3 (0.7)}\\
DG & 3.4 (2.1) & 97.5 (0.4) & \textbf{99.8 (0.0)} & 99.6 (0.1) & 99.2 (0.3) & 98.2 (0.4) & \textbf{100.0 (0.0)} & \textbf{100.0 (0.0)} & 96.1 (1.7) \\
SAT & \textbf{1.6 (0.0)} & \textbf{97.7 (0.0)} & \textbf{99.8 (0.0)} & \textbf{99.7 (0.0)} & \textbf{99.4 (0.0)} & \textbf{98.5 (0.0)} & \textbf{100.0 (0.0)} & \textbf{100.0 (0.0)} & \textbf{96.7 (0.3)} \\
DCM (ours) & 9.9 (0.3) & 95.3 (0.2) & 99.5 (0.1) & 98.6 (0.2) & 98.0 (0.0) & 96.6 (0.2) & \textbf{100.0 (0.0)} & \textbf{100.0 (0.0)} & 82.4 (5.3) \\
\midrule
\multicolumn{9}{l}{Val = Camelyon17 ID Val-1, Test = Camelyon17 ID Val-2 + Camelyon17 OOD Test}\\
MSP & 25.9 (6.4) & 66.2 (5.1) & 85.8 (3.7) & 74.1 (5.1) & 73.1 (4.9) & 72.2 (4.8) & 40.6 (7.8) & 29.4 (8.1) & 7.4 (3.3) \\
MaxLogit & 25.9 (6.4) & 66.2 (5.1) & 85.8 (3.7) & 74.2 (5.1) & 73.1 (5.0) & 72.2 (4.8) & 40.7 (7.9) & 29.4 (8.2) & 7.7 (3.6) \\
Ensemble & 19.6 (6.7) & 75.6 (4.6) & 86.5 (4.1) & 78.1 (4.8) & 76.8 (5.2) & 75.8 (4.2) & 25.8 (8.1) & 18.7 (8.4) & 11.5 (3.5) \\
Binary Classifier & 26.3 (6.0) & 72.0 (4.7) & 86.2 (3.3) & 74.4 (5.0) & 73.4 (4.9) & 72.7 (4.4) & 41.0 (8.1) & 29.8 (8.2) & 7.5 (3.5) \\
Fine-Tuning & 20.5 (1.9) & 76.7 (3.4) & 88.9 (2.2) & 79.8 (3.5) & 78.6 (3.4) & 77.6 (3.3) & 44.2 (5.8) & 33.1 (2.8) & 9.7 (6.3) \\
DG & 27.5 (7.3) & 74.0 (5.8) & 88.1 (4.1) & 77.2 (6.5) & 75.8 (6.3) & 74.8 (6.0) & 51.3 (14.3) & 36.4 (9.3) & 6.2 (3.8) \\
SAT & 24.3 (2.1) & 72.1 (1.1) & 86.3 (0.4) & 74.8 (1.1) & 73.8 (1.1) & 73.0 (1.2) & 35.2 (1.3) & 28.3 (1.6) & 15.2 (1.4) \\
DCM (ours) & \textbf{8.9 (1.5)} & \textbf{80.6 (1.0)} & \textbf{93.5 (0.6)} & \textbf{85.5 (1.0)} & \textbf{83.8 (1.0)} & \textbf{82.5 (0.9)} & \textbf{74.1 (4.3)} & \textbf{50.3 (6.5)} & \textbf{16.4 (0.6)} \\
\midrule
\multicolumn{9}{l}{Val = Camelyon17 ID Val-1, Test = Camelyon17 OOD Test}\\
MSP & 28.2 (5.4) & 63.1 (4.8) & 82.2 (3.9) & 70.4 (4.8) & 69.5 (4.6) & 68.8 (4.4) & 31.5 (6.8) & 21.8 (7.8) & 2.4 (0.4) \\
MaxLogit & 28.2 (5.4) & 63.1 (4.8) & 82.1 (3.9) & 70.4 (4.8) & 69.5 (4.6) & 68.8 (4.4) & 31.4 (6.9) & 21.9 (7.8) & 2.4 (0.4) \\
Ensemble & 21.1 (5.6) & 71.8 (4.8) & 81.4 (4.4) & 74.0 (5.2) & 72.8 (4.3) & 72.0 (4.7) & 13.4 (7.3) & 9.8 (8.1) & 4.6 (0.3) \\
Binary Classifier & 28.0 (5.3) & 69.0 (5.2) & 82.4 (3.9) & 70.5 (4.4) & 70.1 (4.2) & 69.5 (6.5) & 32.1 (5.6) & 22.1 (7.3) & 2.5 (0.4)\\
Fine-Tuning & 23.6 (2.0) & 72.8 (4.2) & 84.2 (3.8) & 75.4 (4.2) & 74.3 (4.1) & 73.5 (4.0) & 31.4 (6.0) & 21.6 (6.5) & 3.8 (3.1) \\
DG & 31.6 (8.5) & 69.4 (7.5) & 84.8 (5.2) & 72.1 (7.9) & 70.9 (7.7) & 70.1 (7.3) & 43.3 (11.2) & 31.9 (8.0) & 5.8 (4.0) \\
SAT & 21.7 (2.3) & 70.2 (0.7) & 80.3 (0.6) & 71.9 (0.8) & 71.3 (1.0) & 70.8 (1.0) & 20.3 (3.6) & 5.9 (4.0 ) & 0.0 (0.0) \\
DCM (ours) & \textbf{11.5 (2.1)} & \textbf{78.7 (1.2)} &  \textbf{91.6 (1.1)} & \textbf{82.5 (1.2)} & \textbf{80.9 (1.1)} & \textbf{79.7 (1.1)} & \textbf{62.5 (8.2)} & \textbf{40.3 (7.5)} & \textbf{9.7 (2.9)} \\
\bottomrule
\toprule
\multicolumn{9}{l}{Val = FMoW ID Val, Test = FMoW ID Test}\\
MSP & 1.8 (0.4) & 58.4 (1.5) & 81.3 (0.4) & 62.6 (0.1) & 60.9 (0.1) & 58.7 (0.1) & 36.7 (1.3) & 18.4 (5.8) & 3.7 (1.8) \\
MaxLogit & 1.8 (0.4) & 58.4 (0.1) & 80.1 (0.2) & 62.7 (0.2) & 60.6 (0.1) & 58.8 (0.1) & 29.7 (0.8) & 10.7 (2.2) & 0.8 (0.4) \\
Ensemble & \textbf{0.8 (0.0)} & \textbf{62.5 (0.1)} & \textbf{85.5 (0.0)} & \textbf{68.4 (0.1)} & \textbf{66.1 (0.1)} & \textbf{64.2 (0.1)} & \textbf{44.8 (0.3)} & \textbf{31.5 (0.1)} & \textbf{10.6 (0.8)} \\
Binary Classifier & 1.9 (0.4) & 58.4 (0.2) & 82.3 (0.3) & 64.3 (0.1) & 62.0 (0.2) & 60.2 (0.1) & 37.6 (0.5) & 20.9 (4.3) & 3.7 (2.7) \\
Fine-Tuning & 1.2 (0.5) & 59.3 (2.7) & 82.8 (0.9) & 64.0 (1.2) & 61.7 (1.3) & 59.9 (1.2) & 39.5 (2.4) & 27.0 (2.8) & 5.9 (1.2) \\
DG & 1.8 (0.1) & 58.5 (0.4) & 75.8 (0.2) & 62.4 (0.9) & 60.9 (0.5) & 59.9 (0.4) & 12.9 (0.6) & 2.6 (1.8) & 0.1 (0.0) \\
SAT & 1.1 (0.0) & 58.3 (0.5) & 81.1 (0.3) & 63.0 (0.5) & 60.8 (0.5) & 59.1 (0.5) & 33.5 (0.5) & 18.8 (1.0) & 4.3 (0.9) \\
DCM (ours) & 1.1 (0.5) & 59.3 (1.2) & 82.9 (1.1) & 64.2 (1.2) & 61.7 (1.3) & 59.9 (1.1) & 39.4 (2.5) & 26.7 (3.5) & 6.3 (2.0) \\
\midrule
\multicolumn{9}{l}{Val = FMoW ID Val, Test = FMoW ID Test + FMoW OOD Test}\\
MSP & 2.3 (0.4) & 51.5 (0.1) & 77.1 (0.5) & 57.9 (0.1) & 55.9 (0.2) & 54.2 (0.0) & 25.4 (2.3) & 11.0 (4.6) & 1.2 (0.6) \\
MaxLogit & 2.3 (0.5) & 51.5 (0.1) & 75.8 (0.1) & 57.8 (0.1) & 55.7 (0.1) & 54.2 (0.0) & 19.4 (0.3) & 4.3 (0.9) & 0.3 (0.2) \\
Ensemble & \textbf{1.3 (0.0)} & \textbf{56.5 (0.0)} & \textbf{81.7 (0.0)} & \textbf{63.2 (0.0)} & \textbf{61.2 (0.0)} & \textbf{59.4 (0.0)} & \textbf{35.6 (0.2)} & \textbf{24.3 (0.1)} & \textbf{5.6 (0.2)} \\
Binary Classifier & 2.5 (0.4) & 53.8 (0.1) & 78.0 (0.4) & 59.3 (0.0) & 57.3 (0.0) & 55.6 (0.0) & 27.5 (1.5) & 9.3 (6.2) & 1.2 (0.9) \\
Fine-Tuning & 1.7 (0.3) & 54.2 (2.3) & 78.6 (0.8) & 58.6 (1.2) & 56.5 (1.1) & 54.8 (1.1) & 30.8 (2.1) & 19.1 (1.7) & 3.0 (0.3)\\
DG & 2.2 (0.0) & 54.0 (0.3) & 71.6 (0.2) & 57.5 (0.3) & 56.1 (0.2) & 55.1 (0.2) & 5.0 (0.2) & 0.2 (0.1) & 0.0 (0.0) \\
SAT & 1.4 (0.0) & 53.7 (0.4) & 76.7 (0.2) & 57.8 (0.4) & 55.8 (0.4) & 54.3 (0.4) & 24.8 (0.3) & 11.2 (0.8) & 0.5 (0.2) \\
DCM (ours) & 1.5 (0.3) & 54.6 (1.7) & 78.9 (1.1) & 58.8 (1.3) & 56.7 (1.3) & 55.0 (1.3) & 30.7 (2.0) & 20.1 (2.2) & 3.8 (1.1) \\
\midrule
\multicolumn{9}{l}{Val = FMoW ID Val, Test = FMoW OOD Test}\\
MSP & 2.6 (0.5) & 50.9 (2.7) & 74.5 (0.6) & 55.2 (0.2) & 53.4 (0.3) & 52.0 (0.1) & 20.6 (3.2) & 8.4 (3.5) & 1.3 (0.4) \\
MaxLogit & 2.6 (0.5) & 50.7 (0.1) & 73.3 (0.2) & 55.2 (0.0) & 53.3 (0.1) & 51.9 (0.1) & 13.7 (0.4) & 3.2 (0.8) & 0.3 (0.1) \\
Ensemble & \textbf{1.6 (0.0)} & \textbf{55.0 (0.1)} & \textbf{79.5 (0.0)} & \textbf{60.7 (0.1)} & \textbf{58.6 (0.1)} & \textbf{57.0 (0.1)} & \textbf{31.1 (0.1)} & \textbf{20.8 (0.4)} & \textbf{3.3 (0.6)} \\
Binary Classifier & 2.8 (0.5) & 51.7 (0.0) & 75.6 (0.5) & 56.8 (0.1) & 54.9 (0.0) & 53.3 (0.0) & 21.1 (3.8) & 7.6 (4.8) & 1.0 (0.4) \\
Fine-Tuning & 2.0 (0.3) & 51.8 (1.1) & 76.2 (0.8) & 56.0 (0.9) & 53.9 (0.9) & 52.4 (1.0) & 26.8 (1.4) & 14.8 (1.3) & 1.7 (0.1)\\
DG & 2.5 (0.0) & 51.9 (0.1) & 69.2 (0.3) & 54.9 (0.2) & 53.7 (0.1) & 52.6 (0.0) & 2.7 (0.8) & 0.1 (0.1) & 0.0 (0.0) \\
SAT & \textbf{1.6 (0.0)} & 51.0 (0.4) & 74.1 (0.2) & 55.1 (0.4) & 53.2 (0.3) & 51.8 (0.3) & 20.1 (0.4) & 7.1 (1.1) & 0.3 (0.1) \\
DCM (ours) & 1.8 (0.3) & 51.9 (1.7) & 76.4 (1.1) & 56.2 (1.4) & 54.1 (1.3) & 52.4 (1.2) & 26.8 (1.6) & 16.3 (1.7) & 2.4 (0.5) \\
\bottomrule
\end{tabular}
}
\caption{
\label{tab:main_sc_3}
Selective classification on the Camelyon17 and FMoW domain shift datasets. Bold numbers represent best performance, and parentheses show the standard error over 3 random seeds. On Camelyon17, \ours{} consistently outperforms all 7 prior methods when the validation and test sets are from different distributions. On FMoW, \ours{} has the second-highest AUC after Ensemble, while using only $1/5$ of the compute.}
\end{table*}

\end{document}